\def\eqref#1{equation~\ref{#1}}
\def\1{\bm{1}}
\def\rd{{\textnormal{d}}}
\DeclareMathAlphabet{\mathsfit}{\encodingdefault}{\sfdefault}{m}{sl}
\SetMathAlphabet{\mathsfit}{bold}{\encodingdefault}{\sfdefault}{bx}{n}
\newcommand{\E}{\mathbb{E}}
\newcommand{\W}{\mathbb{W}}
\newcommand{\R}{\mathbb{R}}
\DeclareMathOperator*{\argmax}{arg\,max}
\newcommand{\param}{\theta}
\newcommand{\paramOpt}{{\theta^*}}
\newcommand{\paramAlt}{\phi}
\newcommand{\paramAltOpt}{{\phi^*}}
\newcommand{\Data}{X}
\newcommand{\dataspace}{\mathcal \Data}
\newcommand{\aDim}{D}               
\newcommand{\mDim}{d^\ast}               
\newcommand{\measureLetter}{P}      
\newcommand{\meas}{\mathbb{\measureLetter}}
\newcommand{\trueMeas}{\meas^X_*}
\newcommand{\modelMeas}{\meas^\Data_\param}
\newcommand{\ambientMeas}{\meas^\Data}
\newcommand{\latentMeas}{\meas^\Latent}
\newcommand{\Mid}{\, \Vert \,}
\newcommand{\lebesgue}{\lambda}
\newcommand{\dens}{\MakeLowercase{\measureLetter}}
\newcommand{\trueDens}{\dens^X_{\ast}}
\newcommand{\modelDens}{\dens^\Data_\param}
\newcommand{\ambientDens}{\dens^\Data}
\newcommand{\latentDens}{\dens^\Latent}
\DeclareMathOperator{\Exp}{\mathbb E}
\DeclareMathOperator{\cl}{cl}
\DeclareMathOperator{\interior}{int}
\DeclareMathOperator{\supp}{supp}
\DeclareMathOperator{\tr}{tr}
\newcommand{\enc}{f}        
\newcommand{\dec}{g}        
\newcommand{\encoder}{\enc_\paramAlt}
\newcommand{\decoder}{\dec_\param}
\newcommand{\encoderOpt}{\enc_\paramAltOpt}
\newcommand{\decoderOpt}{\dec_\paramOpt}
\newcommand{\mdf}{F}
\newcommand{\neuralMdf}{\mdf_{\param_1}}
\newcommand{\neuralMdfOpt}{\mdf_{\param_1^*}}
\newcommand{\Latent}{Z}
\newcommand{\latentspace}{\mathcal \Latent}
\newcommand{\lDim}{d}       
\newcommand{\KL}{\mathbb{KL}}
\newcommand{\MMD}{\mathbb{MMD}}
\newcommand{\calI}{\mathcal{I}}
\declaretheorem[name=Definition]{definition}
\declaretheorem[name=Lemma]{lemma}
\title{Deep Generative Models through the Lens of the Manifold Hypothesis: A Survey and New Connections}
\author{\name Gabriel Loaiza-Ganem \email gabriel@layer6.ai \\
      \addr Layer 6 AI
      \AND
      \name Brendan Leigh Ross \email brendan@layer6.ai \\
      \addr Layer 6 AI
      \AND
      \name Rasa Hosseinzadeh \email rasa@layer6.ai \\
      \addr Layer 6 AI
      \AND
      \name Anthony L. Caterini \email anthony@layer6.ai \\
      \addr Layer 6 AI
      \AND
      \name Jesse C. Cresswell \email jesse@layer6.ai \\
      \addr Layer 6 AI}
\newcommand{\M}{\mathcal M}
\newcommand{\N}{\mathcal N}
\renewcommand{\P}{\mathbb P}
\newcommand{\Q}{\mathbb Q}
\begin{document}

\maketitle

\begin{abstract}
In recent years there has been increased interest in understanding the interplay between deep generative models (DGMs) and the manifold hypothesis. Research in this area focuses on understanding the reasons why commonly-used DGMs succeed or fail at learning distributions supported on unknown low-dimensional manifolds, as well as developing new models explicitly designed to account for manifold-supported data. This manifold lens provides both clarity as to why some DGMs (e.g.\ diffusion models and some generative adversarial networks) empirically surpass others (e.g.\ likelihood-based models such as variational autoencoders, normalizing flows, or energy-based models) at sample generation, and guidance for devising more performant DGMs. We carry out the first survey of DGMs viewed through this lens, making two novel contributions along the way. First, we formally establish that numerical instability of likelihoods in high ambient dimensions is unavoidable when modelling data with low intrinsic dimension. We then show that DGMs on learned representations of autoencoders can be interpreted as approximately minimizing Wasserstein distance: this result, which applies to latent diffusion models, helps justify their outstanding empirical results. The manifold lens provides a rich perspective from which to understand DGMs, and we aim to make this perspective more accessible and widespread.

\end{abstract}

\newpage
\tableofcontents
\newpage

\section{Introduction}\label{sec:intro}
Learning the distribution that gave rise to observed data in $\R^\aDim$ has long been a central problem in statistics and machine learning \citep{lehmann2006theory, murphy2012machine}. In the deep learning era, there has been a tremendous amount of research aimed at leveraging neural networks to solve this task, bringing forth \textit{deep generative models} (DGMs).
This effort has paid off, with state-of-the-art approaches such as diffusion models \citep{sohl2015deep, ho2020denoising, song2021score} and their latent variants \citep{rombach2022high} achieving remarkable empirical success \citep{ramesh2022hierarchical, nichol2022glide, saharia2022photorealistic}. 
A natural question is why these latest models outperform previous DGMs, or more generally:
\begin{center}
    \textit{What makes a deep generative model good?}
\end{center}
To answer this fundamental question, a line of research has emerged with the goal of understanding DGMs through their relationship with the \textit{manifold hypothesis}, and using the obtained insights to drive empirical improvements. In its simplest form, this crucial hypothesis states that high-dimensional data of interest often lies on an unknown $\mDim$-dimensional submanifold $\M$ of $\R^\aDim$, with $\mDim < \aDim$. Studying the behaviour of DGMs when the underlying data-generating distribution is supported on an unknown low-dimensional submanifold of $\R^\aDim$ has already proven fruitful: for example, it is precisely those models with the capacity to learn low-dimensional manifolds (such as diffusion models, latent or otherwise) that tend to work better in practice. Similarly, various pathologies of DGMs -- such as mode collapse in variational autoencoders \citep{kingma2014auto, rezende2014stochastic, dai2019diagnosing}, and numerical instabilities in the score function of diffusion models \citep{pidstrigach2022score, lu2023mathematical} or in normalizing flows \citep{dinh2015nice, dinh2017density, cornish2020relaxing, behrmann2021understanding}, among others -- can be explained as a failure to properly account for manifold structure within data.
Despite the existence of various DGM review papers \citep{kobyzev2020normalizing, papamakarios2021normalizing, bond2021deep, yang2022diffusion}, to the best of our knowledge none takes this ``manifold lens''. Here, we present the first survey of DGMs from this viewpoint.

The relevance and usefulness of studying DGMs through this lens hinges on a critical assumption: namely, that the manifold hypothesis actually holds. It is thus relevant to justify this hypothesis. There is a plethora of arguments supporting the existence of low-dimensional structure in most high-dimensional data of interest, which we summarize below:

\begin{itemize}

\item \textbf{Intuition}\quad Informally, saying that a subset $\M$ of $\R^\aDim$ is a low-dimensional submanifold is a mathematical way of capturing two important properties: a sense of sparsity in ambient space ($\M$ has volume, or Lebesgue measure, $0$ in $\R^\aDim$), and a notion of smoothness. These are properties that one can commonly expect of high-dimensional data of interest. For example, consider natural images in $[0, 1]^D$ (assume they have been scaled to this range), where $\aDim$ corresponds to the number of pixels. Imagine sampling uniformly from $[0, 1]^\aDim$ until a human deems the resulting sample to be a natural image. For all intents and purposes, such an image would never be sampled since natural images are ``sparse'' in their ambient space. Images are also ``smooth'' in that, given a natural image, one can always conceive of slightly deforming it in such a way that the result remains a natural image. One can also intuitively understand a $\mDim$-dimensional submanifold $\M$ of $\R^\aDim$ as a (smooth in some way) subset of \textit{intrinsic dimension} $\mDim$, which can be thought of as the number of factors of variation needed to characterize a point in $\M$. 
Again, most machine learning researchers or practitioners who have worked with natural images will have the tacit understanding that far fewer dimensions than the \textit{ambient dimension}, $\aDim$, are needed to describe an image: for example, images can be successfully synthesized from low-dimensional latent variables \citep{rombach2022high, sauer2023stylegan}, and they can be effectively compressed without affecting how humans perceive them \citep{wallace1992jpeg, townsend2019practical, ruan2021improving}. 
Through this line of thinking, the manifold hypothesis has been a motivating concept in the field of machine learning from its infancy. Some of the first autoencoders \citep{kramer1991nonlinear} aimed to account for data with low intrinsic dimension. Early unsupervised algorithms such as Boltzmann machines \citep{ackley1985learning} were conceived as ways of learning the constraints that govern complex data distributions. Indeed, the manifold hypothesis is one of the core intuitions behind why neural networks are so successful at learning low-dimensional representations in the first place \citep{bengio2013representation}.

\item \textbf{Theory}\quad The manifold hypothesis helps explain the success of deep learning through more than just intuition. For example, under standard assumptions, the sample complexity of kernel density estimation in $\R^\aDim$ is exponential in $\aDim$ \citep{wand1994kernel}. This well-known result is a manifestation of the curse of dimensionality, and highlights the fundamental hardness of learning arbitrary high-dimensional distributions. Yet DGMs succeed at this task, suggesting that these standard assumptions are too loose and do not take into account relevant structure present in data of interest. Making the assumption that the data lies on a submanifold of $\R^\aDim$ is a sensible way of incorporating more structure, and is consistent with theory: the sample complexity of kernel density estimation actually scales exponentially with \emph{intrinsic} dimensionality \citep{ozakin2009submanifold, 10.1214/21-EJS1826} -- even if the data is concentrated around a manifold rather than exactly on one \citep{divol2022measure}. These results suggest that the task of learning distributions when $\mDim$ is much smaller than $\aDim$ is fundamentally more tractable than when there is no low-dimensional submanifold (i.e.\ $\mDim=\aDim$). This observation is not unique to density estimation: the difficulty of classification and manifold learning are also known to scale with intrinsic -- rather than ambient -- dimension \citep{narayanan2009sample, narayanan2010sample}, making them effectively impossible for high-dimensional data without additional structure. 
These results are a sign that manifold structure in data is the reason why deep learning manages to avoid the curse of dimensionality. The triumph of modern algorithms on these tasks thus provides strong implicit justification for the manifold hypothesis.

\item \textbf{Empiricism}\quad There are various works which use existing intrinsic dimension estimators \citep{levina2004maximum, mackay2005comments, johnsson2014low, facco2017estimating, bac2021}, or develop their own, and apply them on commonly-used image datasets \citep{pope2021intrinsic, tempczyk2022lidl, zheng2022learning, brown2022union}. These works unanimously estimate the intrinsic dimension of images to be orders of magnitude smaller than their ambient dimension, and similar studies have been carried out on physics datasets with analogous findings \citep{cresswell2022caloman}. All these works provide explicit evidence supporting the manifold hypothesis.
\end{itemize}

Our goal in this survey is to present an accessible, yet mathematically precise, view of DGMs from the perspective of the manifold hypothesis. First, \autoref{sec:notation} characterizes the setup we will consider throughout, and provides a consistent set of notation and terminology with which to describe DGMs. \autoref{sec:background} lays out relevant background, covering manifold learning and divergences between probability distributions -- with a special focus on which ones provide an adequate minimization objective when manifolds are involved. \autoref{sec:common_dgms} describes popular \textit{manifold-unaware} DGMs -- i.e.\ models which do not account for the manifold hypothesis -- and in \autoref{sec:pathology_ours} we provide our first novel result, showing that high-dimensional likelihood-based models are unavoidably bound to suffer numerical instability when the manifold hypothesis holds. \autoref{sec:manifold-gen} covers \textit{manifold-aware} DGMs -- i.e.\ models which do account for the manifold hypothesis. We include both popular DGMs which happen to be manifold-aware and models which were explicitly designed to account for manifold structure. In \autoref{sec:two-step-wasserstein} we provide a new perspective on \textit{two-step models}, one of the predominant paradigms of manifold-aware DGMs: we show that, in addition to their common interpretation as jointly learning a manifold and a distribution, they minimize a (potentially regularized) upper bound -- which can become tight at optimality -- of the Wasserstein distance against the true data-generating distribution. 
In \autoref{sec:other_dgms} we cover discrete DGMs, before concluding and discussing directions for future research in \autoref{sec:future}.

\section{Notation and Setup}\label{sec:notation}
In this section we present the notation and setup that we will use throughout our work. We try to deviate as little as possible from standard notation, but we nonetheless prioritize precision and consistency across models. 
While knowledge of measure theory and differential geometry is needed to understand some technical details of how DGMs relate to manifolds, the core ideas, methods, and intuitions of this area do not require mathematics beyond what is typically known by machine learning researchers. Our intention in this survey is thus to remain accessible to readers with no background in these topics. 

\begin{tcolorbox}[breakable, enhanced jigsaw,width=\textwidth]
\textbf{Advanced topics}\quad In the interest of mathematically-inclined readers, we use grey boxes like this one throughout the manuscript to present content which does require familiarity with topics such as measure theory \citep{billingsley2012probability}, topology \citep{munkres2014topology}, or differential geometry \citep{lee2012smooth, lee2018introduction}. Providing a primer covering all the relevant material from these topics would be prohibitively lengthy and thus falls outside the scope of our work. Nonetheless, we do include a short summary of weak convergence of probability measures in \autoref{app:primer}, as this is a particularly important tool from measure theory allowing us to formalize the intuition that a model learns its target distribution throughout training -- even when this target is supported on a low-dimensional manifold. The material within these grey boxes is self-contained and is not necessary to understand the rest of our survey.
\end{tcolorbox}

\subsection{Notation}\label{sec:notation_sub}

\paragraph{Ambient and latent spaces} We denote the $\aDim$-dimensional ambient space as $\dataspace$, where depending on the particular model being discussed, $\dataspace$ could be $\R^\aDim$ or $[0, 1]^\aDim$. Many models use a latent space, which we denote as $\latentspace$, where $\latentspace = \R^\lDim$. In most cases the latent space is low-dimensional, i.e.\ $\lDim<\aDim$, but in some instances this need not hold. We use lower-case letters $x\in \dataspace$ and $z \in \latentspace$ to denote points, and upper-case letters $X \in \dataspace$ and $Z \in \latentspace$ for random variables, on these respective spaces. 

\paragraph{Encoders and decoders} It will often be the case that we consider an encoder and a decoder between the aforementioned spaces, which we denote as $\enc: \dataspace \rightarrow \latentspace$ and $\dec: \latentspace \rightarrow \dataspace$, respectively. 

\paragraph{Probability} We use the letters $p$ and $q$ to denote densities; an exception being the Gaussian density, which we denote as $\mathcal{N}(x; \mu, \Sigma)$ when evaluated at $x$, where $\mu$ and $\Sigma$ correspond to its mean and covariance matrix, respectively. We write $X \sim p$ to indicate that $X$ is distributed according to $p$. Since we will often need various densities, we use superindices to identify them,\footnote{That is, we do not use the common overloading of notation where $p(x)$ and $p(z)$ refer to different densities: in our notation these would correspond to the same density evaluated at two different points.} e.g.\ $\ambientDens$ and $\latentDens$ will denote densities on $\dataspace$ and $\latentspace$, respectively. In particular, we write the true data-generating density on $\dataspace$ as $\trueDens$. 
For a space $\mathcal{S}$ (e.g.\ $\latentspace$ or $\dataspace$), we denote the set of all probability distributions on $\mathcal{S}$ as $\Delta(\mathcal{S})$. We use $p \circledast q$ to denote the convolution between the densities $p$ and $q$, i.e.\ if $X_1 \sim p$ and $X_2 \sim q$ are independent, $p \circledast q$ is the density of $X_1 + X_2$. We denote expectations with $\E$, and use a subindex to specify what density the expectation is taken with respect to, e.g.\ $\E_{X \sim \trueDens}[\cdot]$. 

\paragraph{Network parameters} All DGMs leverage neural networks, in one way or another, to define a density $\modelDens$ parameterized by the learnable parameters $\param$ of the neural network(s). We will often abuse notation and use $\param$ to parameterize all the generative components of the DGM; e.g.\ some models involve a decoder $\decoder$ and a prior $\latentDens_\param$ on $\latentspace$, and we frequently subindex both components with $\param$ even if they do not share parameters. In some instances it will be necessary to distinguish between generative parameters, in which case we will explicitly write $\param=(\param_1, \param_2)$, using e.g.\ $\dec_{\param_1}$ and $\latentDens_{\param_2}$ instead of $\decoder$ and $\latentDens_\param$, respectively. We will sometimes overload notation by using subindices to denote a sequence of model parameters $(\param_t)_{t=1}^\infty$; the meaning of parameter subindices will always be clear from context. We use $\paramAlt$ for all auxiliary parameters; i.e.\ those that are not needed for generation. For example, an encoder $\encoder$ could have been learned alongside $\latentDens_\param$ and $\decoder$, but it might not be needed to sample from the model. We use $\paramOpt$ and $\paramAltOpt$ to denote optimal values of these parameters (with respect to the loss being optimized for the particular model being discussed). 

\paragraph{Calculus} We denote derivatives (gradients/Jacobians) with $\nabla$, and use a subindex to indicate which variable the differentiation is with respect to. For example, $\nabla_x \encoder(x)$ and $\nabla_z \decoder(z)$ respectively denote the Jacobians of the encoder and decoder with respect to their inputs (not their parameters), evaluated at $x$ and $z$; and $\nabla_\param \log \modelDens(x)$ denotes, for a given $x$, the gradient of the log density of the model with respect to its parameters, evaluated at $\param$. 

\paragraph{Linear algebra} We denote the identity matrix as $I$, with a corresponding subindex to indicate dimension. For example, $I_\aDim$ corresponds to the $\aDim \times \aDim$ identity matrix. For a matrix $J$, we use $\det J$, $\tr J$, and $J^\top$ to denote its determinant, trace, and transpose, respectively. We denote the $\ell_1$ and $\ell_2$ norms in Euclidean space as $\Vert \cdot \Vert_1$ and $\Vert \cdot \Vert_2$, respectively.

\begin{tcolorbox}[breakable, enhanced jigsaw,width=\textwidth]
\textbf{Formal notation}\quad In order to formally discuss probability distributions on manifolds, we need the language of measure theory. All the measures we will consider are defined over $\dataspace$ or $\latentspace$, with their respective Borel $\sigma$-algebras. We remind the reader that, since they are Euclidean, $\dataspace$ and $\latentspace$ are not arbitrary probability spaces.  For a subset $A$ of $\dataspace$, we denote its closure in $\dataspace$ as $\cl_\dataspace(A)$.
We denote measures with the same letters as densities, but with uppercase blackboard style, i.e.\ $\P$ and $\Q$ -- two exceptions being the Lebesgue and Gaussian measures, which we denote as $\lebesgue$ and $\N(\mu, \Sigma)$, respectively. We use a subindex on $\lebesgue$ to indicate its ambient dimension; e.g.\ $\lebesgue_\aDim$ denotes the $D$-dimensional Lebesgue measure. We write $\P \ll \Q$ to indicate that $\P$ is absolutely continuous with respect to $\Q$. We use $\#$ as a subscript to denote pushforward measures; e.g.\ for a measurable function $\dec: \latentspace \rightarrow \dataspace$ and a probability measure $\P^Z$ on $\latentspace$, $g_{\#}\P^Z$ is the pushforward probability measure (on $\dataspace$) of $\P^Z$ through $\dec$. We will write the true data-generating distribution corresponding to $\trueDens$ as $\trueMeas$, and the model distribution corresponding to $\modelDens$ as $\modelMeas$. Finally, we use $\xrightarrow{\omega}$ to indicate weak convergence of probability measures.
\end{tcolorbox}

\begin{figure}[t]
    \centering
    \subfigure[]{
    \centering
    \includegraphics[width=0.31\columnwidth, trim=20 100 70 130, clip]{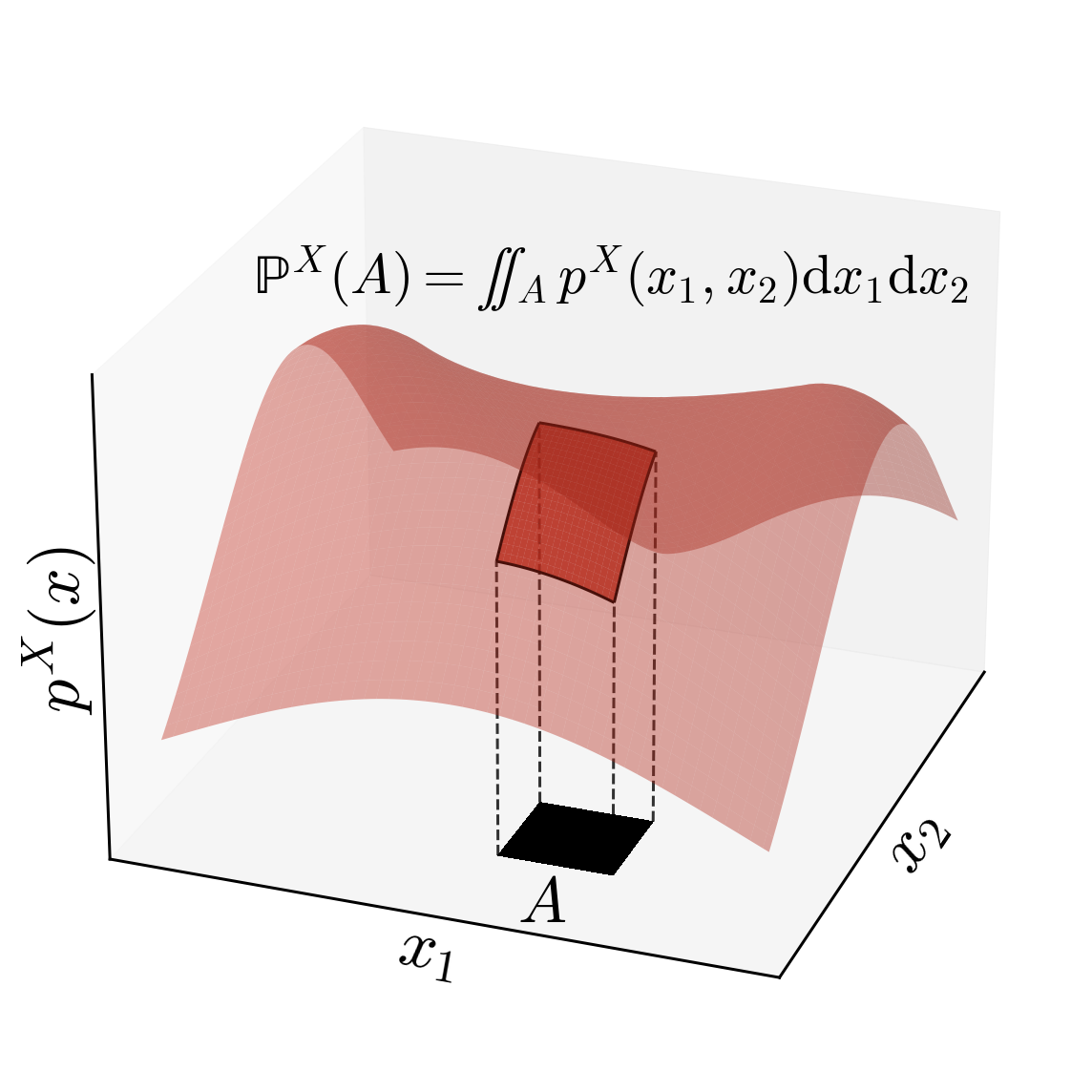}
    }
    \subfigure[]{
    \centering
    \includegraphics[width=0.31\columnwidth, trim=20 100 70 130, clip] {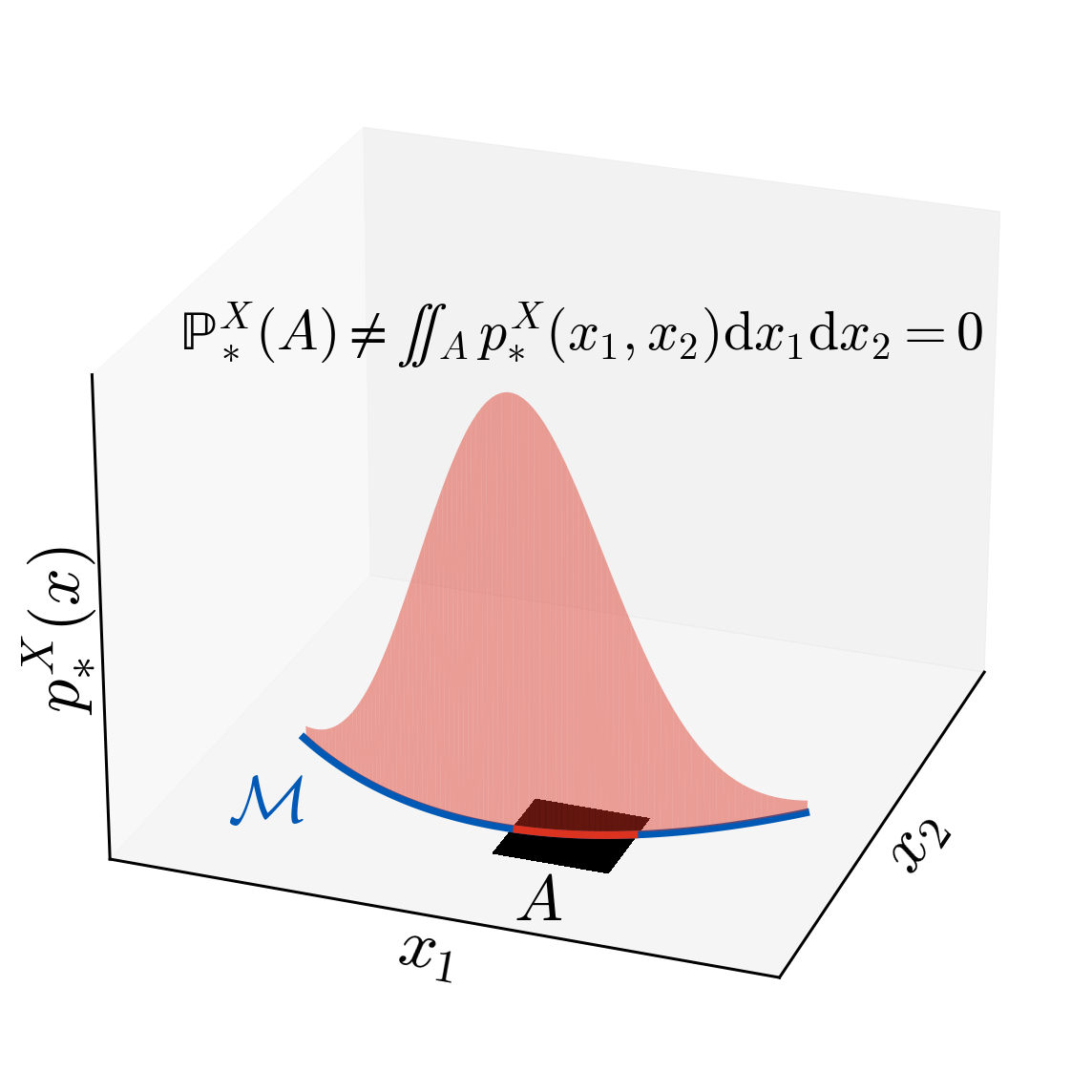}
    }
    \subfigure[]{
    \centering
    \includegraphics[width=0.31\columnwidth, trim=20 100 70 130, clip]{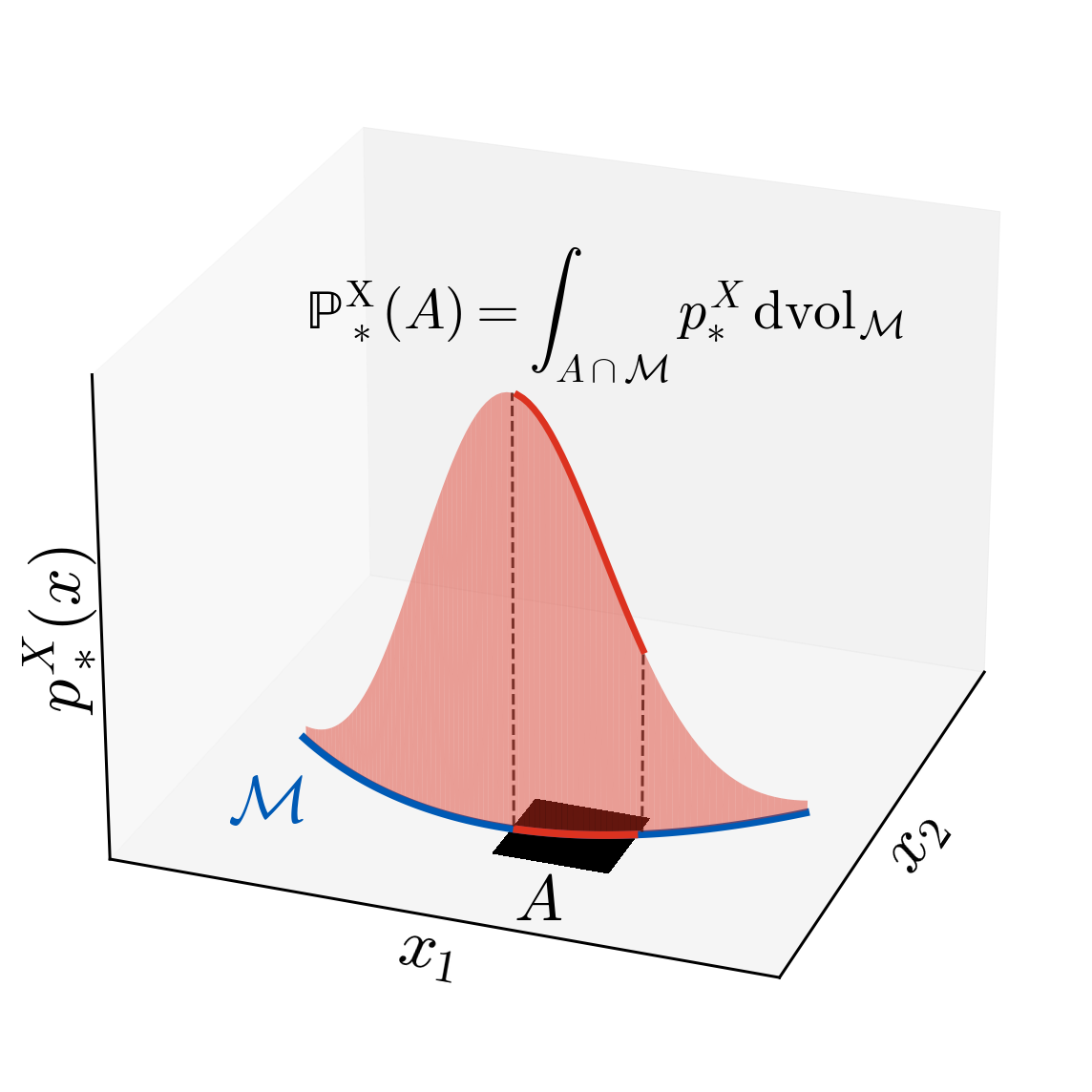}
    }
    \caption{\textbf{(a)} Depiction of a full-dimensional density (i.e.\ a density in the ``usual sense'') $\ambientDens$ on $\R^\aDim$, with $\aDim=2$. The probability assigned by $\ambientDens$ to a region $A$ of $\R^\aDim$ is its integral over the region, i.e.\ $\iint_A \ambientDens(x_1, x_2) \rd x_1 \rd x_2$. \textbf{(b)} When the density $\trueDens$ is instead supported on a $\mDim$-dimensional manifold $\M$ embedded in $\R^\aDim$ (here, $\mDim=1$ and $\M$ is a curve), the integral evaluates to zero, and thus $\trueDens$ is not a density in the ``usual sense''. \textbf{(c)} Formally, in order to recover the probability assigned to $A$ by the manifold-supported density $\trueDens$, the density must be integrated only over $\M$ using a volume form ($\rd \textrm{vol}_\M$) on the manifold, $\int_{A \cap \M} \trueDens \rd \textrm{vol}_\M$ -- which in this case simply corresponds to a line integral.}
    \label{fig:p_star_example}
\end{figure}

\subsection{Setup}\label{sec:setup}

The main goal of all the models $\modelDens$ considered in this paper is to learn $\trueDens$. Two main assumptions, which most of the works presented throughout our survey follow, are commonly made when attempting to understand DGMs through the manifold lens:

\begin{itemize}

\item \textbf{Manifold support}\quad As mentioned in the introduction, there is a substantial body of work supporting the existence of low-dimensional structure in high-dimensional data of interest such as images. One way of mathematically expressing this structure is by taking a literal interpretation of the manifold hypothesis; that is, assuming that $\trueDens$ is supported on a $\mDim$-dimensional submanifold $\M$ of $\dataspace$, where $0<\mDim < \aDim$ and both $\M$ and $\mDim$ are unknown. Several aspects of this assumption warrant additional discussion. $(i)$ $\trueDens$ being manifold-supported implies that it is not a full-dimensional density (i.e.\ with respect to the $D$-dimensional Lebesgue measure) and it is thus not a density in the ``usual sense''. See \autoref{fig:p_star_example} for an explanation. $(ii)$ This assumption is a choice about how to represent low-dimensional structure, and can be relaxed in various ways. For example, one could instead assume that the support of $\trueDens$ has varying intrinsic dimension \citep{brown2022union}, that it has singularities \citep{von2023topological, wang2024cw}, or that $\trueDens$ simply concentrates most of its mass around a manifold -- rather than all of it \citep{divol2022measure, berenfeld2022estimating}. $(iii)$ The assumption that $\trueDens$ is supported exactly on a manifold remains nonetheless very useful, even if we believe it to be ``slightly off''; it serves as a first step towards understanding the interplay between DGMs and the low-dimensional structure of the data they are trained on, and as we will see, insights arising from this assumption explain various empirical observations. $(iv)$ The requirement that $\mDim > 0$ simply rules out $\trueDens$ being a probability mass function, which are better modelled with discrete DGMs. Our main focus is thus on continuous models, although we discuss their discrete counterparts in \autoref{sec:other_dgms}.

\item \textbf{Nonparametric regime}\quad We use the term \emph{nonparametric regime} to refer to the setting where we assume access to an infinite amount of data, arbitrarily flexible models, and exact optimization.
More specifically, training any of the DGMs that we will consider requires minimizing a loss that depends on model parameters and which involves an expectation with respect to $\trueDens$; doing so is challenging for various reasons. $(i)$ In practice, expectations with respect to $\trueDens$ cannot be computed, and are thus approximated through empirical averages over the dataset at hand. Writing the losses using expectations with respect to $\trueDens$ is thus assuming access to infinite data. $(ii)$ Attempting to reason about optimal parameter values for any given neural network architecture quickly becomes essentially impossible in all but trivial cases. One way to circumvent this issue is to assume that all the neural networks involved, or any density model $\modelDens$ used, can represent any continuous function, or continuous density, respectively. This assumption of arbitrary flexibility is of course motivated by universal approximation properties of neural networks \citep{hornik1991approximation, koehler2021representational, puthawala2022universal}. $(iii)$ Stochastic gradient-based optimization \citep{robbins1951stochastic} over mini-batches of the non-convex loss is used in practice \citep{DBLP:journals/corr/KingmaB14}, which is not guaranteed to recover a global optimum. Once again to facilitate analysis, it is convenient to assume that all the optimization problems can be solved exactly. 

\end{itemize}

Overall, even though these assumptions are optimistic, they remain popular as they allow for mathematical analysis of DGMs. The nonparametric regime also provides a necessary condition for DGMs to learn $\trueDens$ in practice; if a model fails to capture $\trueDens$ even in this idealistic regime -- which as we will see happens surprisingly often for commonly-used DGMs -- then there is no hope that the model can empirically recover $\trueDens$ in a realistic setting.

\begin{tcolorbox}[breakable, enhanced jigsaw,width=\textwidth]
\textbf{Formal setup}\quad Throughout our work, $\M$ will be a $\mDim$-dimensional embedded submanifold of $\R^\aDim$. We previously mentioned that we will assume $\trueMeas$ is supported on $\M$, and that it admits a density $\trueDens$; in this grey box we formalize the meaning of these statements, the latter of which can be formalized either through measure theory or differential geometry (although we will not require this formal understanding of $\trueDens$, we nonetheless include it for completeness):

\begin{itemize}
\item \textbf{Formalizing manifold support}\quad Intuitively, the support of a distribution is the ``smallest set of probability $1$''. To formally capture this intuition, the support $\supp(\P)$ of a distribution $\P$ on $\dataspace$ is defined as \citep[Section 2 of Chapter 7, page 77]{bogachev2007measure}
\begin{equation}\label{eq:def_support}
    \supp(\P) \coloneqq \bigcap_{C \in \mathcal{C}(\P)} C,
\end{equation}
where $\mathcal{C}(\P)$ is the collection of closed (in $\dataspace$) sets $C$ such that $\P(C)=1$. It immediately follows that the support of a distribution is always a closed set in its ambient space. In general, $\M$ need not be closed in $\dataspace$, in which case it would be impossible for $\trueMeas$ to be supported on $\M$. We nonetheless abuse language throughout our survey (both in the main text and in these formal boxes) and say $\trueMeas$ is supported on $\M$ when we actually mean that $\trueMeas(\M)=1$ and that $\supp(\trueMeas) = \cl_{\dataspace}(\M)$.\\

\item \textbf{Formalizing manifold-supported densities through measure theory}\quad The manifold $\M$ can always be equipped with a Riemannian metric which it inherits from $\dataspace$, making $\M$ into a Riemannian manifold. Riemannian manifolds admit a unique measure (defined over their Borel sets), $\lebesgue_{\M}$, which plays an analogous role to that of $\lebesgue_\aDim$ on $\R^\aDim$. The measure $\lebesgue_{\M}$ is called the Riemannian measure (or sometimes the Lebesgue measure) of $\M$. We refer readers interested in Riemannian measures to the treatments by \citet[Section 22 of Chapter 16]{dieudonne1963treatise} and \citet{pennec2006intrinsic}. Then $\trueMeas$ can be restricted to $\M$, resulting in the probability measure $\trueMeas\arrowvert_\M$ defined over the Borel sets of $\M$ given by
\begin{equation}
    \trueMeas\arrowvert_\M(B) \coloneqq \trueMeas(B)
\end{equation}
for any Borel set $B$ of $\M$, and $\trueDens$ can then be defined as the density (i.e.\ Radon-Nikodym derivative) -- assuming it exists -- of this measure with respect to the corresponding Riemannian measure, i.e.\
\begin{equation}
    \trueDens \coloneqq \dfrac{\rd \trueMeas\arrowvert_\M}{\rd \lebesgue_{\M}}.
\end{equation}
In other words $\trueDens$ is such that, for any Borel set $A$ of $\dataspace$,
\begin{equation}\label{eq:true_dens_def1}
    \trueMeas(A) = \int_{A \cap \M} \trueDens(x) \rd \lebesgue_\M(x).
\end{equation}
\item \textbf{Formalizing manifold-supported densities through differential geometry}\quad When the Riemannian manifold $\M$ is orientable, the manifold admits a Riemannian volume form $\rd \textrm{vol}_\M$, and $\trueDens$ is defined as the function -- assuming it exists -- having the property that
\begin{equation}\label{eq:true_dens_def2}
    \trueMeas(U) = \int_{U \cap \M} \trueDens \rd \textrm{vol}_\M
\end{equation}
for every open set $U$ of $\dataspace$, i.e.\ $\rd \textrm{vol}_\M$ ``plays the role'' of the Riemannian measure $\lebesgue_\M$ in \autoref{eq:true_dens_def1}. The technical tool allowing us to establish a correspondence between these two views of $\trueDens$ is known as the Riesz-Markov-Kakutani theorem \citep[Theorem 2.14]{rudin1987real} -- which is sometimes also referred to as the Riesz representation theorem and should not be confused with a different theorem about Hilbert spaces bearing the same name. The Riesz-Markov-Kakutani theorem allows us to assign a unique measure to $\rd \textrm{vol}_\M$, namely $\lebesgue_\M$, such that integrating continuous compactly-supported functions against them is equivalent. In this sense, $\lebesgue_\M$ is the natural measure to ``extend'' $\rd \textrm{vol}_\M$. We point out that when $\M$ is non-orientable, even though $\rd \textrm{vol}_M$ is not defined, integration on manifolds can still be carried out through the above measure-theoretic formulation.
\end{itemize}
\end{tcolorbox}

\section{Background}\label{sec:background}
In this section we cover background topics and standard tools which we make use of throughout the survey. Expert readers may wish to skip to \autoref{sec:common_dgms}.

\subsection{Deep Generative Models on Known Manifolds}

This work focuses on data governed by the manifold hypothesis, wherein the dataset of interest is constrained to an \emph{unknown} $\mDim$-dimensional submanifold of $\dataspace$. However, for some datasets, the manifold is known \emph{a priori}, and the challenge lies in designing a generative model which can learn densities within the manifold. 
Generative modelling on known manifolds is a distinct task from our focus in this survey, but it is closely related, and we thus briefly summarize work on this topic below.

\citet{gemici2016normalizing} first identified deep generative modelling on known manifolds as a problem of interest and showed that the change-of-variables formula (\autoref{sec:change_of_variable}) used to train normalizing flows (\autoref{sec:nfs}) can be generalized to account for manifold structure. However, na\"ively applying this idea can be numerically unstable, so past work has specifically focused on developing this approach further for manifolds such as tori, spheres, and hyperbolic spaces \citep{rezende2020normalizing, bose2020latent,sorrenson2023learning}. Other works have designed generative models which preserve the symmetries of data on certain manifolds (Lie groups) of interest in the natural sciences \citep{kanwar2020equivariant, boyda2021sampling, katsman2021equivariant}. 
Using ordinary differential equations or stochastic differential equations to model data on known manifolds has also been proven to be effective \citep{mathieu2020riemannian,rozen2021moser,de2022riemannian,ben2022matching,lou2023scaling, chen2024riemannian}, 
with the advantage that these approaches can be defined independently of any parameterization of the manifold. \citet{bonet2024sliced} recently proposed an approach based on optimal transport (\autoref{sec:wasserstein}) for generative modelling on known manifolds.

\subsection{Manifold Learning}\label{sec:manifold_learning}

Learning distributions whose support is an unknown manifold $\M$ implies learning $\M$ as well, at least implicitly. The field of manifold learning is thus closely related to the main topic of our work. The term ``manifold learning'' is often treated synonymously with dimensionality reduction, and refers to methods whose goal is to provide a useful representation of high-dimensional data by transforming it into a lower-dimensional space. That representation may provide information about the data such as its intrinsic dimension, yield a useful visualization in two or three dimensions, or serve as a simplified starting point for downstream supervised learning tasks. Generally, manifold learning methods fall into three categories: spectral methods \citep{pearson1901, kruskal1964, beals1968, scholkopf1998, roweis2000, tenenbaum2000} which rely on the eigenvectors and eigenvalues of some matrix related to the data; probabilistic methods \citep{tipping1999probabilistic, van2008, mcinnes2018}, which treat datapoints as high-dimensional random vectors whose relevant information is contained in some low-dimensional latent variables; and bottleneck methods \citep{rumelhart1985learning, kramer1991nonlinear, tishby2000information, kingma2014auto, alemi2017deep}, which rely on passing information through a low-dimensional bottleneck representation, often using neural networks. We refer the reader to the work of \citet{ghojogh2023} for a comprehensive review of manifold learning using this tripartite classification.

The focus on manifold learning in this work is mostly on bottleneck methods such as autoencoders \citep{rumelhart1985learning} -- and their many variants -- whose core idea is to train an encoder-decoder pair of neural networks to ensure reconstruction, for example through a squared $\ell_2$ loss:\footnote{Note that autoencoders are not by themselves generative models, but we nonetheless parameterize $\dec$ with $\param$ (which we use for generative parameters) for consistency with other decoders in the rest of the paper.}
\begin{equation}\label{eq:ae_l2}
    \min_{\param, \paramAlt}\E_{X \sim \trueDens}\left[\Vert X - \decoder\left(\encoder(X)\right) \Vert_2^2\right].
\end{equation}
Here, the ``bottleneck'' refers to the $\lDim$-dimensional encoder output $
\encoder(X)$, which the decoder $\decoder$ uses to reconstruct $X$. Since $\trueDens$ is supported on $\M$, the objective in \autoref{eq:ae_l2} aims to learn the manifold in the sense that it encourages perfect reconstructions on it, or more formally, if $x \in \M$, then $x = \decoderOpt(\encoderOpt(x))$. We highlight that perfect reconstructions need not always be achievable -- even under the nonparametric regime (\autoref{sec:setup}) -- due to topological constraints. We discuss this point, which we will revisit in \autoref{sec:topology}, in the next grey box. 

We finish this section with the observation that, despite what the term ``manifold learning'' might suggest, autoencoders do not by themselves formally learn $\M$ -- even when perfect reconstructions are achieved. One might expect a perfectly trained autoencoder to characterize $\M$ as the set of possible decoder outputs. However, the encoder may not make use of the entire latent space, leaving some points that would not be seen by the decoder during training. As a result, points $z \in \latentspace \setminus \enc_{\paramAlt^\ast}(\M)$ might be decoded outside of $\M$, as illustrated in \autoref{fig:manifold_learning}. Alternatively, one might expect an autoencoder to characterize $\M$ as the set of points which are perfectly reconstructed. However, some points outside the manifold can in principle still be perfectly reconstructed, as also illustrated in \autoref{fig:manifold_learning}. Despite these observations, assuming perfect reconstructions, the set $\enc_{\paramAlt^\ast}(\M)$ together with the decoder $\dec_{\param^\ast}$ jointly characterize $\M$, since $\dec_{\param^\ast}(\enc_{\paramAlt^\ast}(\M)) = \M$; we will revisit this point in \autoref{sec:two_step} to show that training a DGM on data encoded by $\encoderOpt$ can characterize $\M$.

\begin{tcolorbox}[breakable, enhanced jigsaw,width=\textwidth]
    \paragraph{When are perfect reconstructions achievable?} Ideally, the loss in \autoref{eq:ae_l2} achieves a value of $0$ at optimality, which would directly imply that $x = \decoderOpt(\encoderOpt(x)), \trueMeas\text{-almost-surely}$. Under mild regularity conditions \citep{loaiza2022diagnosing}, this in turn implies that $x = \decoderOpt(\encoderOpt(x))$ for all $x \in \M$, in which case we say that the encoder-decoder pair ($\encoderOpt, \decoderOpt)$ reconstructs $\M$ perfectly. When this condition is satisfied, the restriction $\encoderOpt|_\M$ is a (topological) embedding  of $\M$ into $\latentspace$, as is evidenced by the existence of its continuous left-inverse, $\decoderOpt$. In other words, the existence of some continuous function $\enc$ that embeds $\M$ into $\latentspace$ is a necessary condition to achieve perfect reconstructions and thus to learn $\M$. \\
    
    In general, however, such an $\enc$ may not exist. For example, as we will see in \autoref{sec:two_step}, it is sometimes desirable to set $\lDim$, the dimensionality of $\latentspace$, to be equal to $\mDim$, the dimensionality of $\M$. This precludes the existence of $\enc$ for many manifolds $\M$, such as if $\M$ is a $\mDim$-dimensional sphere, for which no embedding $\enc: \M \to \R^{\lDim}$ is possible when $\lDim=\mDim$. In cases where no plausible embedding exists, even networks $(\encoder, \decoder)$ which come close to perfectly reconstructing $\M$ will incur numerical instability \citep{cornish2020relaxing}. In some other cases, it is possible to resolve these topological issues by increasing $\lDim$. For instance, a dimensionality of $\lDim=2\mDim + 1$ is enough to topologically embed any manifold of dimension $\mDim$ in $\R^\lDim$ \citep[Theorem V 3]{hurewicz1941dimension}.

\end{tcolorbox}

\begin{figure}[t]
    \centering
    \includegraphics[scale=0.18]{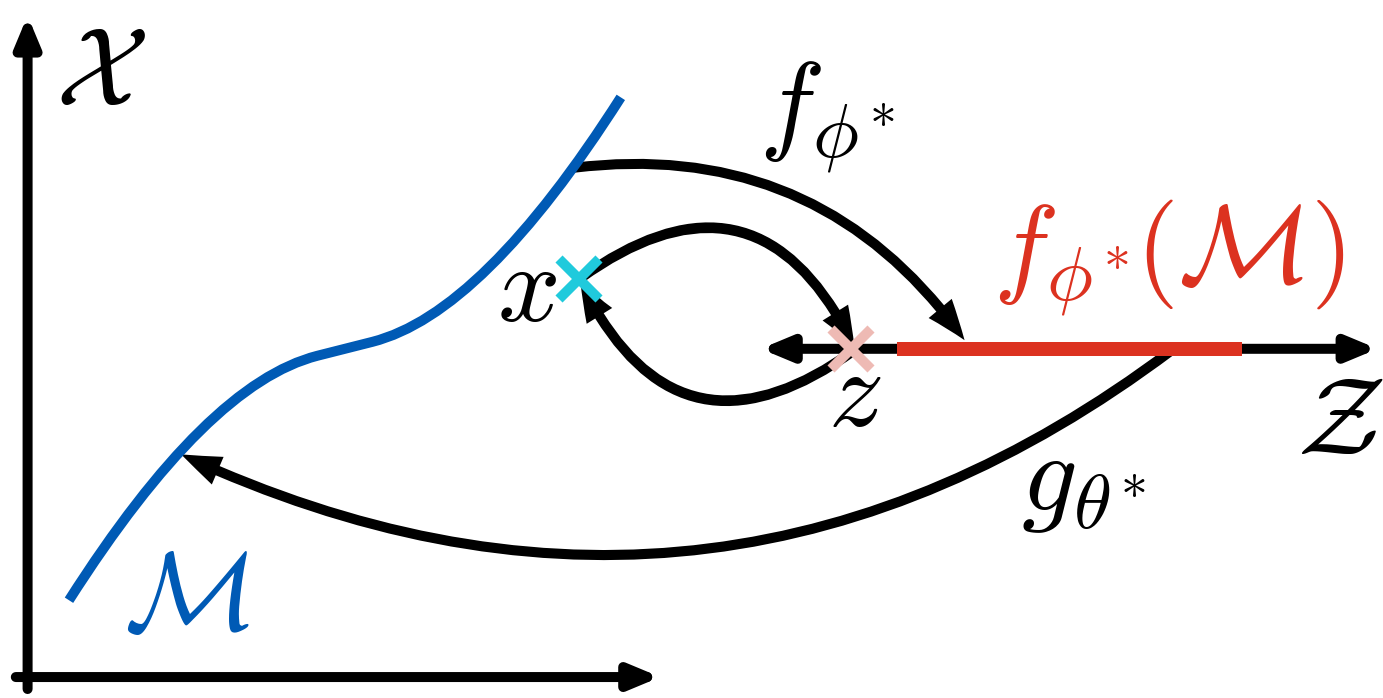}
    \caption{Illustration of why autoencoders, by themselves, do not characterize $\M$ even if they achieve perfect reconstructions on it. The illustrative point $z \in \latentspace \setminus \enc_{\paramAlt^\ast}(\M)$ is such that $x=\dec_{\param^\ast}(z) \notin \M$, so that the set of possible decoder outputs does not match $\M$, i.e.\ $\dec_{\param^\ast}(\latentspace) \neq \M$ -- even though $\M$ is contained in $\dec_{\param^\ast}(\latentspace)$ due to the assumption of perfect reconstructions. Additionally, in this example $x \in \dataspace \setminus \M$ is perfectly reconstructed, so that the set of perfectly reconstructed points does not match $\M$, i.e.\ $\{x \in \dataspace \mid x = \dec_{\param^\ast}(\enc_{\paramAlt^\ast}(x))\} \neq \M$ -- even though $\M$ is contained in this set whenever $x = \dec_{\param^\ast}(\enc_{\paramAlt^\ast}(x))$ for every $x \in \M$.}
    \label{fig:manifold_learning}
\end{figure}

\subsection{The Change-of-Variables Formula}\label{sec:change_of_variable}

It will often be the case that we have a density $\latentDens$ on $\latentspace$ along with a decoder $\dec: \latentspace \rightarrow \dataspace$. Together, these two components implicitly define the distribution of $X = \dec(Z)$, where $Z \sim \latentDens$, and it will often be of interest to explicitly evaluate the density $\ambientDens$ of $X$ (formally, $\ambientDens$ is the pushforward density of $\latentDens$ through $\dec$).
The suitable tool is the change-of-variables formula, whose simplest form states that, when $\latentspace = \dataspace = \R^\aDim$, if $\dec$ is a diffeomorphism (i.e.\ a continuously differentiable function with a continuously differentiable inverse), then
\begin{align}
    \ambientDens(x) & = \latentDens(z)\left|\det \nabla_z \dec(z)\right|^{-1}\label{eq:change_variable_not_used}\\
    & = \latentDens\left(f(x)\right)\left|\det \nabla_x \enc(x) \right|\label{eq:change_variable_used},
\end{align}
where $\enc = \dec^{-1}$ and $z = \enc(x)$, so that $\nabla_z \dec(z) \in \R^{\aDim \times \aDim}$ and $\nabla_x \enc(x) \in \R^{\aDim \times \aDim}$. 
An extension of this formula that will also be of use applies to the case where $\latentspace = \R^\lDim$ with $\lDim \leq \aDim$. When $\lDim < \aDim$, $\dec:\latentspace \rightarrow \dataspace$ cannot be a diffeomorphism, but if it is injective it can be a diffeomorphism onto its image, $\dec(\latentspace)$, in which case the density of $X$ is now given by
\begin{equation}\label{eq:change_variable_inj}
    \ambientDens(x) = \latentDens(z)\left| \det \left( \nabla_z \dec(z)^\top \nabla_z \dec(z) \right) \right|^{-\frac{1}{2}},
\end{equation}
where again $z=\enc(x)$, but now $\enc: \dec(\latentspace) \rightarrow \latentspace$ is the left inverse of $\dec$ (i.e.\ $z' = \enc(\dec(z'))$ for all $z' \in \latentspace$), and $\nabla_z \dec(z) \in \R^{\aDim \times \lDim}$. 

Several remarks about these formulas are worth making. $(i)$ Computationally, it is often the case that \autoref{eq:change_variable_used} is used, rather than \autoref{eq:change_variable_not_used}, as \autoref{eq:change_variable_used} requires only a forward pass through the encoder $\enc$ (as well as computing its Jacobian determinant), whereas \autoref{eq:change_variable_not_used} requires an additional forward computation through the decoder $\dec$; $(ii)$ \autoref{eq:change_variable_inj}, which is referred to as the injective change-of-variables formula, reduces to \autoref{eq:change_variable_not_used} in the case where $\lDim=\aDim$, since the determinant distributes over products of square matrices; and $(iii)$ when $\lDim < \aDim$, $\ambientDens$ in \autoref{eq:change_variable_inj} is a manifold-supported density because it is only defined on a submanifold, $\dec(\latentspace)$, of $\dataspace$, much like $\trueDens$ which is only defined on $\M$. We refer the reader to the work of \citet{kothe2023review} for a review of the uses the change-of-variables formula has within DGMs.

\begin{tcolorbox}[breakable, enhanced jigsaw,width=\textwidth]
    Note that a more formal way of describing the change-of-variables formula is through the language of pushforward measures, where we have a measure $\latentMeas$ on $\latentspace$ admitting a density $\latentDens$ with respect to $\lebesgue_\lDim$, along with the measurable map $\dec: \latentspace \rightarrow \dataspace$. In the case of \autoref{eq:change_variable_not_used} and \autoref{eq:change_variable_used}, $\ambientDens$ corresponds to the density of $\dec_\# \latentMeas$ with respect to $\lebesgue_\aDim$; i.e.\ $\ambientDens = \rd g_\# \latentMeas / \rd \lebesgue_\aDim$. In the case of \autoref{eq:change_variable_inj}, when $g$ is a smooth embedding, $\ambientDens$ is now a density with respect to the Riemannian measure on $\dec(\latentspace)$, i.e.\ $\ambientDens = \rd g_\# \latentMeas / \rd \lebesgue_{\dec(\latentspace)}$, where $\dec(\latentspace)$ is treated as an embedded submanifold of $\dataspace$.
\end{tcolorbox}

\subsection{Failures of KL Divergence}\label{sec:kl}

Despite the KL divergence being widely used throughout machine learning, it is most commonly used with the implicit assumption that the two involved densities are densities in the ``same sense'' (i.e.\ when they both admit the same dominating measure, see the grey box below). This assumption fails in the manifold setting when $\modelDens$ is full-dimensional, since $\trueDens$ is manifold-supported. We thus find it useful to provide the formal definition of the KL divergence in the grey box below, along with a discussion. In summary, the usual formula for computing KL divergence,
\begin{equation}
    \KL \left(\trueDens \Mid \modelDens \right) = \E_{X \sim \trueDens}\left[ \log \dfrac{\trueDens(X)}{\modelDens(X)}\right],
\end{equation}
is only valid when $\trueDens$ and $\modelDens$ are such that for every subset $A$ of $\dataspace$ that is assigned probability $0$ by $\modelDens$, the density $\trueDens$ also assigns probability $0$ to $A$. Whenever this property does not hold, $\KL(\trueDens \Mid \modelDens)$ is defined as infinity. It follows that in the manifold setting, $\KL(\trueDens \Mid \modelDens)=\infty=\KL(\modelDens \Mid \trueDens)$ when $\modelDens$ is full-dimensional, as illustrated in \autoref{fig:kl1}. It also follows that $\KL(\trueDens \Mid \modelDens) = \infty$ even if $\modelDens$ is supported on a $\mDim$-dimensional manifold -- as long as $\M$ is not contained in the support of $\modelDens$ -- as illustrated in \autoref{fig:kl2}.

\paragraph{KL divergence and maximum-likelihood} The most common way of attempting to minimize the KL divergence between the true distribution and the model is through maximum-likelihood:
\begin{equation}
    \max_\theta \mathbb{E}_{X \sim \trueDens}\left[\log \modelDens(X) \right].
\end{equation}
When the KL divergence between $\trueDens$ and $\modelDens$ is not trivially equal to infinity, it can be written as
\begin{align}
    \KL(\trueDens \Mid \modelDens) & = \displaystyle \int \log \left( \dfrac{\trueDens(x)}{\modelDens(x)}\right)\trueDens(x)\rd x = \int \trueDens(x) \log \trueDens(x) \rd x - \int \trueDens(x) \log \modelDens(x) \rd x \\
    & = \E_{X \sim \trueDens}\left[\log \trueDens(X) \right] - \E_{X \sim \trueDens}\left[\log \modelDens(X) \right]. \label{eq:kl_simple}
\end{align}
Since $\E_{X \sim \trueDens}\left[\log \trueDens(X) \right]$ does not depend on $\param$, this common derivation shows that as long as $|\E_{X \sim \trueDens}\left[\log \trueDens(X) \right]| < \infty$, maximum-likelihood optimization is equivalent to minimizing KL divergence. However, a key step in this derivation (the first equality) is the assumption that the KL divergence between $\trueDens$ and $\modelDens$ is not trivially infinite. As previously mentioned, in the manifold setting we will generally have $\KL(\trueDens \Mid \modelDens) = \infty$. It follows that in this setting, maximum-likelihood is not equivalent to KL divergence minimization, a point that we will later revisit. 

\begin{figure}[t]
    \centering
    \subfigure[]{
    \label{fig:kl1}
    \centering
    \includegraphics[scale=0.25, trim={177, 123, 186, 115}, clip]{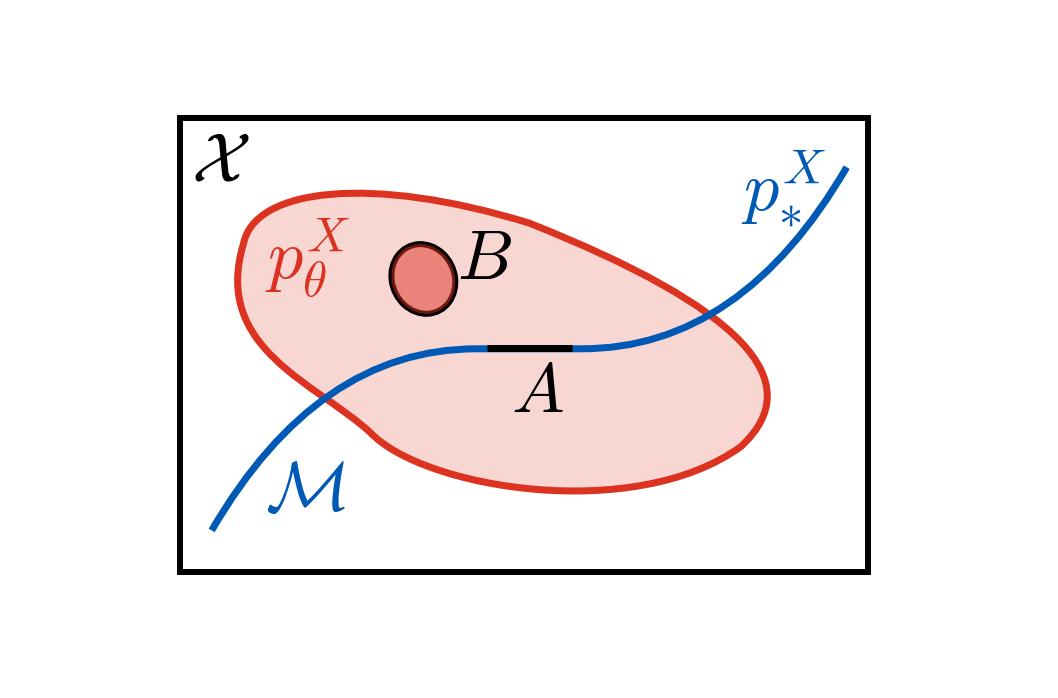}
    }
    \subfigure[]{
        \label{fig:kl2}
    \centering
        \includegraphics[scale=0.165]{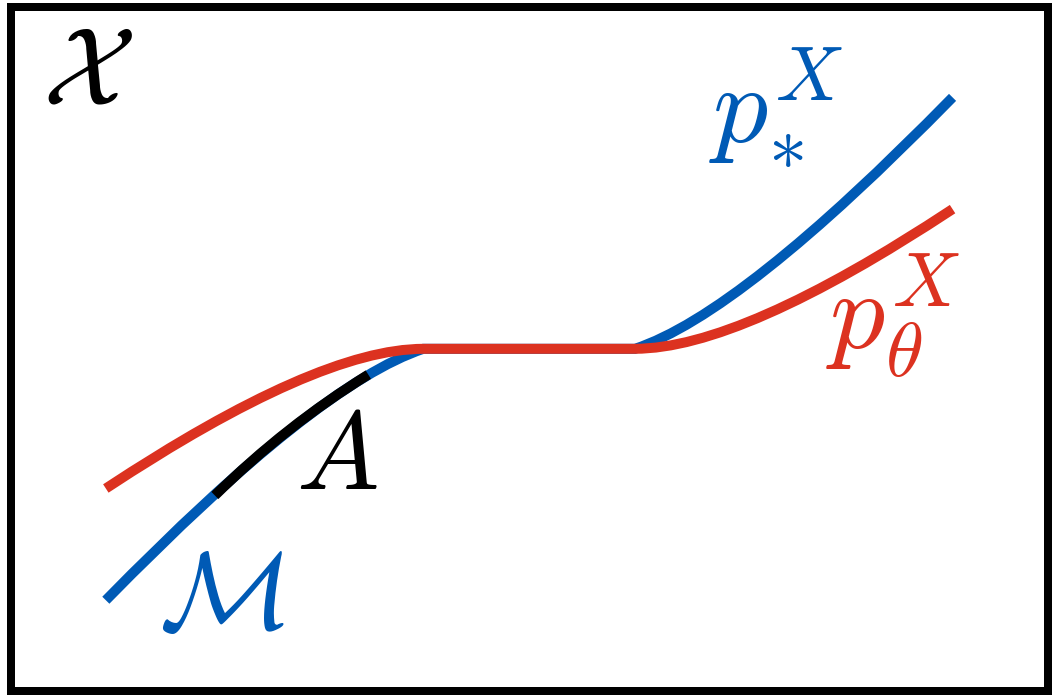}
    }
    \caption{Illustration of why KL divergences can be infinite in the manifold setting. \textbf{(a)} $\modelDens$ has full-dimensional support (light red region), while $\trueDens$ is supported on a lower-dimensional manifold $\M$ (blue curve). The model $\modelDens$ assigns probability $0$ to $A$, i.e.\ $\int_{A} \modelDens \rd x = 0$, because the region $A$ has zero volume in $\dataspace$. However, $\trueDens$ does not, since $\int_{A\cap\M} \trueDens \rd \textrm{vol}_\M > 0$. We conclude that $ \KL \left(\trueDens \Mid \modelDens \right) = \infty$. Meanwhile, $\int_{B\cap\M} \trueDens \rd \textrm{vol}_\M = 0$ because $B\cap\M=\emptyset$, yet we have $\int_B \modelDens \rd x > 0$, entailing that $ \KL \left(\modelDens \Mid \trueDens \right) = \infty$. \textbf{(b)} Analogous example where now $\modelDens$ and $\trueDens$ are both supported on low-dimensional manifolds. Since $\M$ is not contained in the support of $\modelDens$, there exists a set $A$ to which $\modelDens$ assigns probability $0$ despite having positive probability under $\trueDens$, so that $ \KL \left(\trueDens \Mid \modelDens \right) = \infty$.}
\end{figure}

\begin{tcolorbox}[breakable, enhanced jigsaw,width=\textwidth]    
Formally, the KL divergence between two probability measures $\P$ and $\Q$, $\KL(\P \Mid \Q)$, is defined as
        \begin{equation}\label{eq:kl_formal}
            \KL(\P \Mid \Q) \coloneqq \begin{cases}
                \displaystyle \int \log \left( \dfrac{\rd \P}{\rd \Q}(x)\right)\rd\P(x), \text{ if }\dfrac{\rd \P}{\rd \Q} \text{ exists}\\
                \infty, \text{ otherwise}
            \end{cases},
        \end{equation}
        where $\rd \P / \rd \Q$ denotes the Radon-Nikodym derivative of $\P$ with respect to $\Q$. By the Radon-Nikodym theorem, $\rd \P / \rd \Q$ exists if and only if $\P \ll \Q$. Finally, when both $\P$ and $\Q$ are dominated by the same measure $\eta$ -- i.e.\ $\P \ll \eta$ and $\Q \ll \eta$ -- with corresponding densities $p$ and $q$ with respect to $\eta$, the KL divergence between them simplifies to
        \begin{equation}\label{eq:kl_usual}
            \KL(\P \Mid \Q) = \KL(p \Mid q) = \displaystyle \int \log \left( \dfrac{p(x)}{q(x)}\right)p(x)\rd \eta(x),
        \end{equation}
        which recovers the commonly-used expressions for KL divergence (\autoref{eq:kl_simple}) whenever $\eta$ is either the Lebesgue measure  or the counting measure.
\end{tcolorbox}  

\begin{figure}[h!]
    \centering
    \includegraphics[scale=0.3, trim={80, 75, 80, 50}, clip]{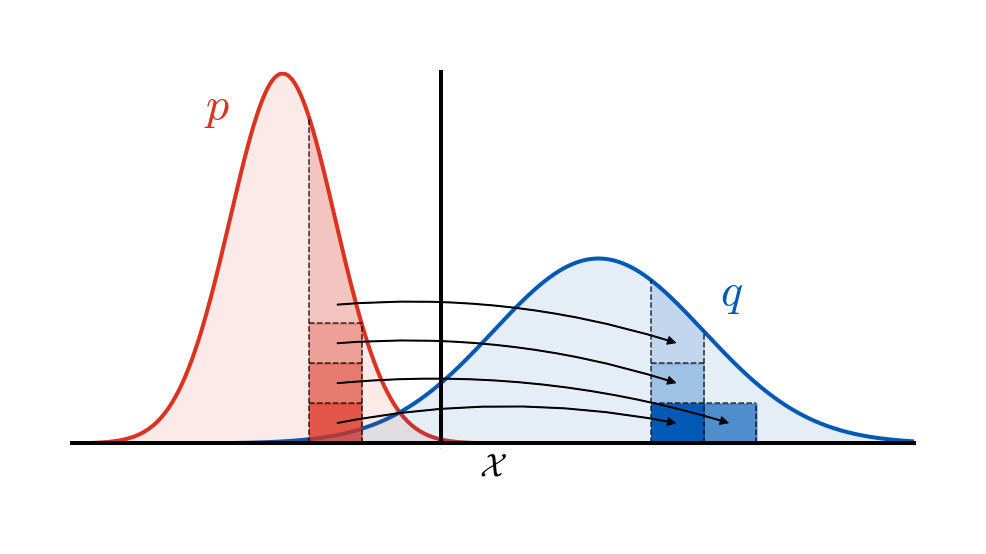}
    \caption{The optimal transport problem can be visualized as the minimum cost of ``transporting'' the density $p$ over to the density $q$. Picturing $p$ and $q$ as piles of dirt, each dirt particle from $p$ must be moved so that it becomes part of $q$. Moving dirt from $x$ to $y$ incurs a cost given by $c(x,y)$. The joint distribution $\gamma$ of $(X,Y)$ can be thought of as specifying the ``transport plan'': the constraint that its $X$-marginal matches $p$ ensures the starting pile of dirt is $p$; the constraint that its $Y$-marginal matches $q$ ensures the final pile of dirt is $q$; and its $(Y|X=x)$-conditional -- illustrated with the black arrows in the figure -- specifies how the dirt at $x$ from $p$ is (potentially stochastically) allocated to dirt from $q$. The most efficient plan possible for shifting all the dirt has an overall cost $\W^c(p, q)$. This analogy explains why the Wasserstein-1 distance is sometimes called the earth mover's distance.}
    \label{fig:wasserstein}
\end{figure}

\subsection{Wasserstein Distances}\label{sec:wasserstein}

In \autoref{sec:kl}, we summarized why $\KL(\trueDens \Mid \modelDens)$ does not provide a useful notion of divergence between $\trueDens$ and $\modelDens$ in the manifold setting, and the same is true of many other common divergences between distributions (see the discussion in \autoref{sec:gans}). 
Wasserstein distances, which are based on the optimal transport problem \citep{villani2009optimal, COTFNT}, provide a distance between distributions that remains meaningful even in the manifold setting. Despite the fact that accurately estimating Wasserstein distances is challenging \citep{arora2017generalization}, DGMs based on minimizing these distances tend to work very well in practice (e.g.\ \autoref{sec:wgans} and \autoref{sec:two-step-wasserstein}). 

The optimal transport problem between two densities $p$ and $q$ on $\dataspace$ is given by
\begin{equation}\label{eq:wasserstein_informal}
    \W^c(p, q) \coloneqq \inf_{\gamma \in \Pi(p, q)} \E_{(X,Y) \sim \gamma}[c(X, Y)],
\end{equation}
where $c:\mathcal{X} \times \mathcal{X} \rightarrow \R$ is called the cost function, and $\Pi(p, q)$ is the set of distributions on $\mathcal{X} \times \mathcal{X}$ whose marginals match $p$ and $q$, respectively. 
Intuitively, the optimal transport problem can be understood as the cost (as measured by $c$) of ``transporting'' $p$ to $q$, as illustrated in \autoref{fig:wasserstein}.
When $c$ is given by the $\ell_1$ distance (i.e.\ $c(x,y) = \Vert x-y \Vert_1$) $\W^c$ is called the Wasserstein-1 distance, and is denoted as $\W_1$. The $\W_1$ metric admits the following well-known dual formulation:
\begin{equation}\label{eq:w1_dual}
    \W_1(p, q) = \sup_{h \in \mathcal{H}}  \E_{X \sim p}[h(X)] - \E_{X \sim q}[h(X)],
\end{equation}
where $\mathcal{H} \coloneqq \{h:\mathcal{X} \rightarrow \mathbb{R} \mid h\text{ is Lipschitz and }\text{Lip}(h)\leq 1\}$, and $\text{Lip}(h)$ denotes the Lipschitz constant of $h$. Analogously, when $c$ is given by the squared $\ell_2$ distance (i.e.\ $c(x,y) = \Vert x-y \Vert_2^2$) $\sqrt{\W^c}$ is called the Wasserstein-2 distance, and is denoted as $\W_2$. The Wasserstein distances $\W_1(\trueDens, \modelDens)$ and $\W_2(\trueDens, \modelDens)$ remain meaningfully defined even in the manifold setting (formally, this is because they metrize weak convergence, which we discuss in the grey box below), and thus provide sensible optimization objectives. For $\W_1$, this property can be informally understood through \autoref{eq:w1_dual}, which essentially says that two distributions are close in Wasserstein distance if no Lipschitz function can discriminate between them. Intuitively, if no such function can discern between $\trueDens$ and $\modelDens$, then they must be ``truly'' close, even if one is manifold-supported and the other full-dimensional (or if both are supported on non-overlapping manifolds).

\begin{tcolorbox}[breakable, enhanced jigsaw,width=\textwidth]
First, we point out that the optimal transport problem in \autoref{eq:wasserstein_informal} applies to arbitrary probability measures $\P$ and $\Q$ on $\dataspace$, not only densities:
\begin{equation}
    \W^c(\P, \Q) \coloneqq \inf_{\Gamma \in \Pi(\P, \Q)} \E_{(X,Y) \sim \gamma}[c(X, Y)],
\end{equation}
where $\Pi(\P, \mathbb{Q}) \coloneqq \{\Gamma \in \Delta(\mathcal{X} \times \mathcal{X}) \mid \Gamma(A \times \mathcal{X})=\P(A)$ and $\Gamma(\mathcal{X} \times A) = \mathbb{Q}(A)$ for every measurable set $A\subset \dataspace\}$, and $c: \dataspace \times \dataspace \rightarrow \R$ is measurable. \\

If $c$ is the $\ell_1$ (or $\ell_2$) distance, then convergence in $\W^c$ is equivalent to convergence in distribution plus convergence in first (or second) moments \citep[Theorem 6.9]{villani2009optimal}.
In particular, this implies that if $\mathcal{X}$ is bounded (and $c$ is either the $\ell_1$ or $\ell_2$ distance), then $\W^c$ metrizes weak convergence, meaning that given a sequence $(\ambientMeas_{\param_t})_{t=1}^\infty$ of probability measures, $\W^c(\ambientMeas_{\param_t}, \trueMeas) \rightarrow 0$ as $t \rightarrow \infty$ if and only if $\ambientMeas_{\param_t} \xrightarrow{\omega} \trueMeas$ as $t \rightarrow \infty$. \citet{arjovsky2017wasserstein} identified that metrizing weak convergence is a desirable property in an optimization objective for training DGMs, as it ensures that ``getting closer and closer'' to the target distribution is properly quantified, even in the presence of dimensionality mismatch (see \autoref{app:primer} for a more detailed discussion of this point).\\
    
Note that the KL divergence does not metrize weak convergence. Let us illustrate why this is problematic through an example by letting $\P^{X_\sigma}_\ast = \trueMeas \circledast \mathcal{N}(0, \sigma^2 I_\aDim)$, where $\trueMeas$ is supported on $\M$. As $\sigma \rightarrow 0^+$, $\P^{X_\sigma}_\ast$ gets closer to $\trueMeas$, yet this is not reflected in the KL divergence, since $\KL(\trueMeas \Mid \P^{X_\sigma}_\ast)=\infty$ for every $\sigma > 0$ (this is because $\trueMeas \ll \P^{X_\sigma}_\ast$ does \emph{not} hold). Thus $\KL(\trueMeas \Mid \P^{X_\sigma}_\ast) \rightarrow \infty$ as $\sigma \rightarrow 0^+$. This means that, despite $\KL(\trueMeas \Mid \P^{X_\sigma}_\ast)$ being minimized at $\sigma=0$, the KL divergence provides no learning signal, and the same holds for the reverse KL. On the other hand, $\W^c(\trueMeas, \P^{X_\sigma}_\ast) \rightarrow 0$ as $\sigma \rightarrow 0^+$, provided that $\W^c$ metrizes weak convergence.
\end{tcolorbox}

\subsection{Maximum Mean Discrepancy}\label{sec:mmd}

Similarly to Wasserstein distances (\autoref{sec:wasserstein}), the maximum mean discrepancy \citep[MMD;][]{gretton2008kernel} provides a notion of distance between probability distributions which remains mathematically meaningful in the manifold setting. 
The MMD between two probability densities $p$ and $q$ on $\dataspace$, $\MMD_k(p,q)$, is given by
\begin{equation}
    \MMD_k(p, q) \coloneqq \Big(\E_{X, X' \sim p}[k(X, X')] - 2\E_{X \sim p, Y \sim q}[k(X, Y)] + \E_{Y, Y' \sim q}[k(Y, Y')]\Big)^{\frac{1}{2}},
\end{equation}
where $X, X', Y, Y'$ are independent, and $k:\dataspace \times \dataspace \rightarrow \R$ is a symmetric positive semi-definite kernel (i.e.\ a function having the property that, for any $n \in \mathbb{N}$ and $x_1, \dots, x_n \in \dataspace$, the $n \times n$ matrix $K$ given by $K_{ij} = k(x_i, x_j)$ is symmetric positive semi-definite), which is set as a hyperparameter. 

The MMD has several desirable mathematical properties. $(i)$ Under some regularity conditions which are satisfied by many commonly-used kernels, the MMD is a metric in the space of probability distributions over $\dataspace$. $(ii)$ $\MMD^2_k(p, q)$ can be straightforwardly estimated in an unbiased manner through Monte Carlo sampling, making it particularly amenable to gradient-based optimization. $(iii)$ Conditions on $k$ which make $\MMD_k$ meaningfully defined in the manifold setting (formally, conditions under which MMD metrizes weak convergence) are known \citep{simon2018kernel, simon2023metrizing}. Provided that $\dataspace$ is compact (which is the case for images in $[0,1]^\aDim$), these conditions hold for most commonly-used kernels, meaning that MMD can be used to compare distributions regardless of their support.

\section{Manifold-Unaware Deep Generative Models}\label{sec:common_dgms}
In this section we describe popular deep generative modelling frameworks which were not developed with the manifold setting in mind, and discuss their inability to learn $\trueDens$ in this setting. 

\subsection{The Problem with Likelihood-Based Approaches: Manifold Overfitting}\label{sec:manifold_overfitting}

Likelihood-based deep generative models are a broad and popular class of models, which includes variational autoencoders \citep{kingma2014auto, rezende2014stochastic}, normalizing flows \citep{dinh2015nice, dinh2017density}, energy-based models \citep{xie2016theory, du2019implicit}, continuous autoregressive models \citep{uria2013rnade}, and more \citep{bond2021deep}.
At a high-level, these models leverage neural networks to construct a full-dimensional density $\modelDens$. The models are trained by maximizing, sometimes approximately, the log-likelihood:
\begin{equation}\label{eq:ml}
    \max_\theta \mathbb{E}_{X \sim \trueDens}[\log \modelDens (X)].
\end{equation}
When the underlying density $\trueDens$ is full-dimensional, this objective is equivalent to minimizing the KL divergence between $\trueDens$ and $\modelDens$ (\autoref{eq:kl_simple}). However, in our setting of interest $\trueDens$ is manifold-supported, and as mentioned in \autoref{sec:kl}, this equivalence breaks down, leading to the natural question: what happens if the likelihood is optimized when $\modelDens$ is full-dimensional but $\trueDens$ is not?

\begin{figure}[t]
    \centering
    \includegraphics[scale=0.16,trim={8cm 6cm 12cm 5cm},clip]{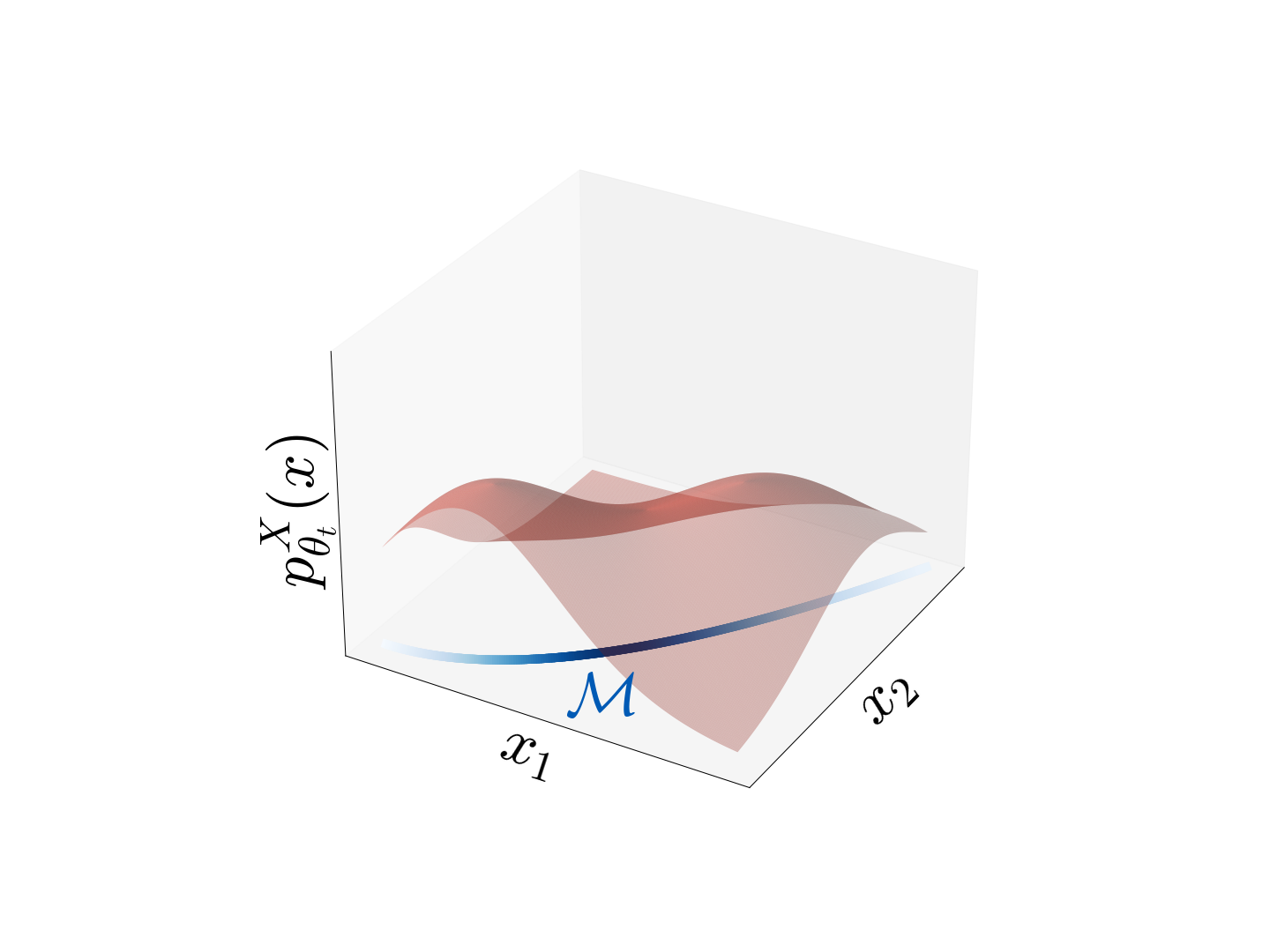}
    \includegraphics[scale=0.16,trim={9cm 6cm 12cm 5cm},clip]{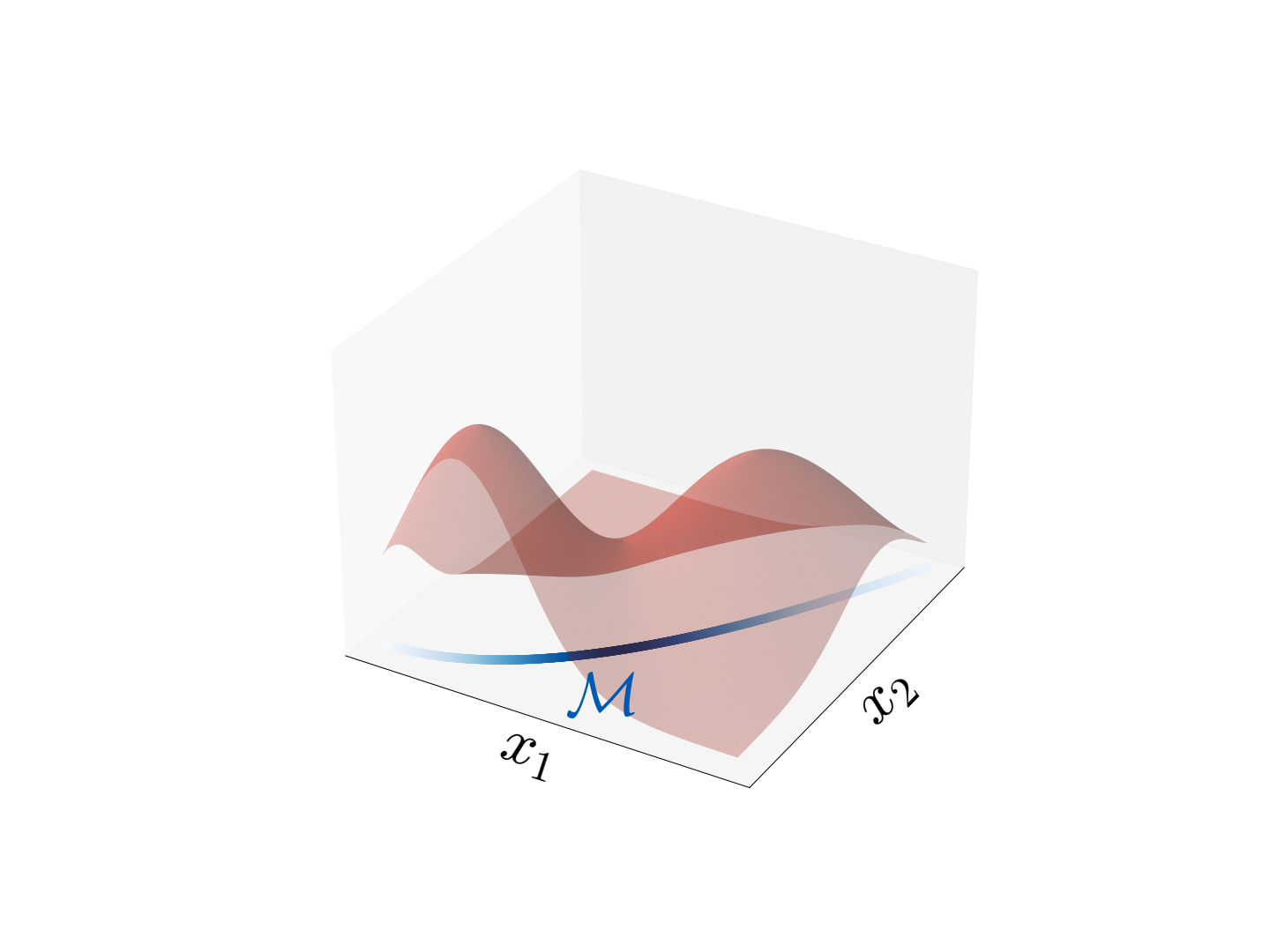}
    \includegraphics[scale=0.16,trim={9cm 6cm 5cm 5cm},clip]{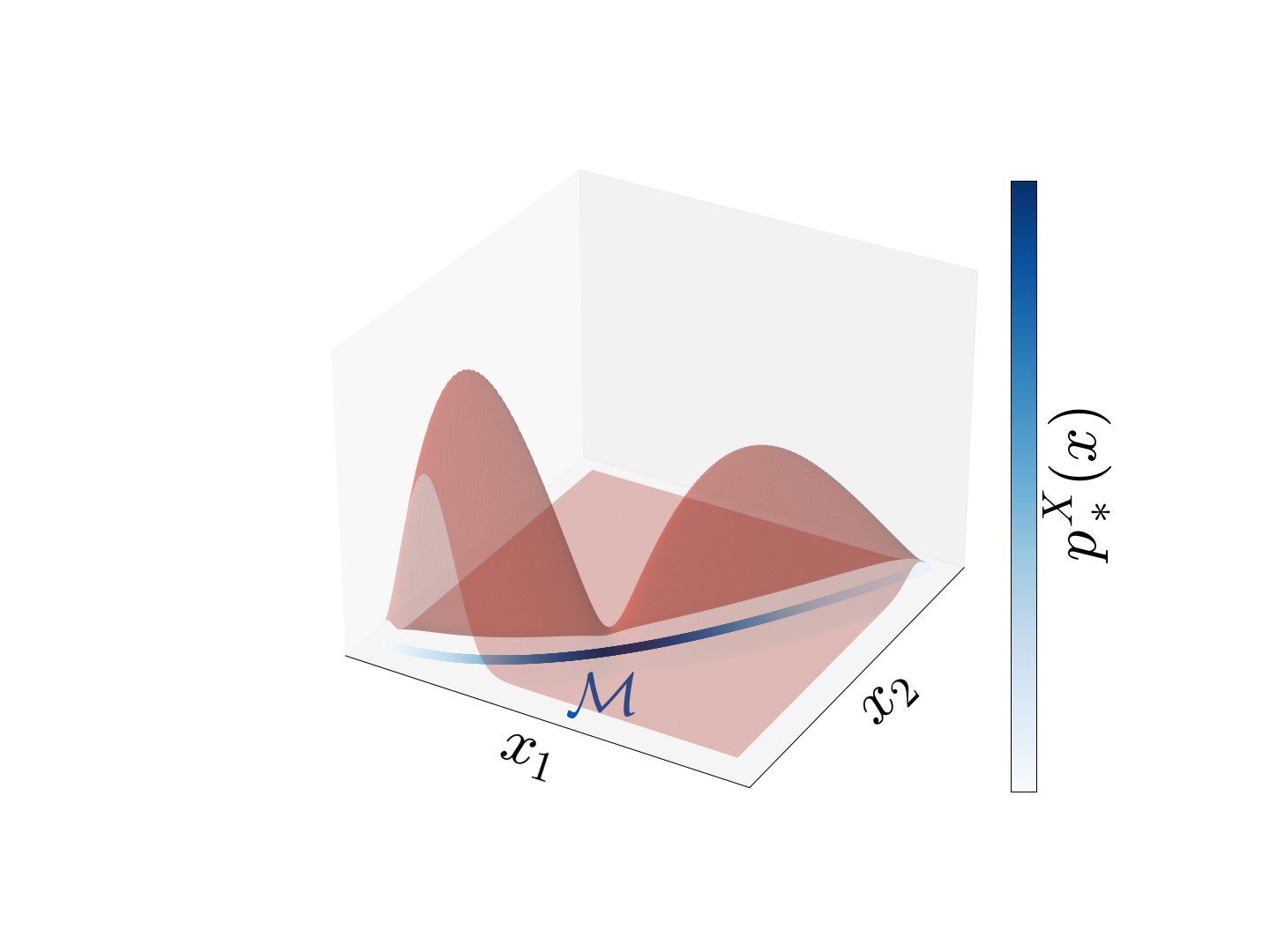}\\
    \includegraphics[scale=0.16,trim={8cm 6cm 12cm 5cm},clip]{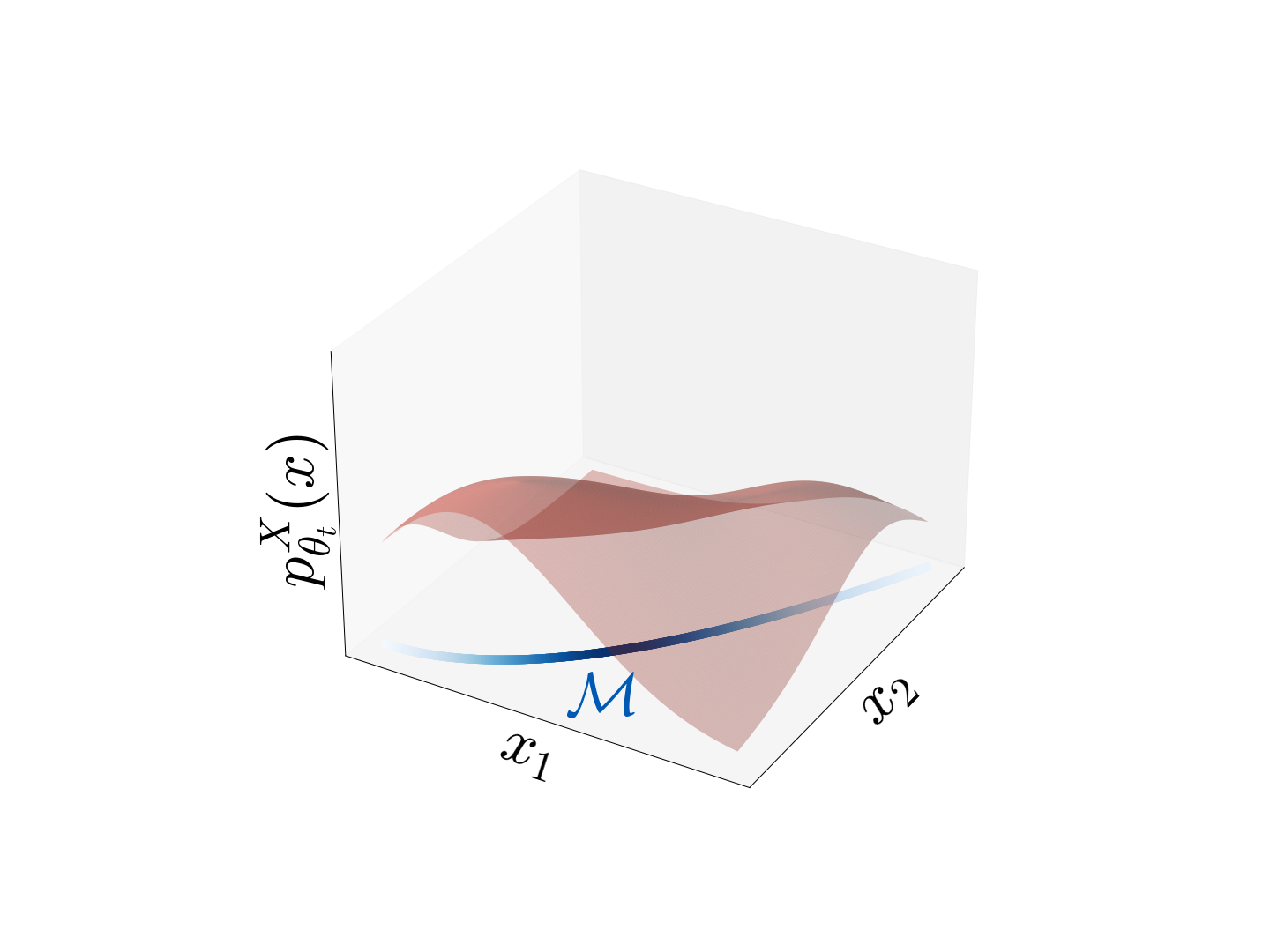}
    \includegraphics[scale=0.16,trim={9cm 6cm 12cm 5cm},clip]{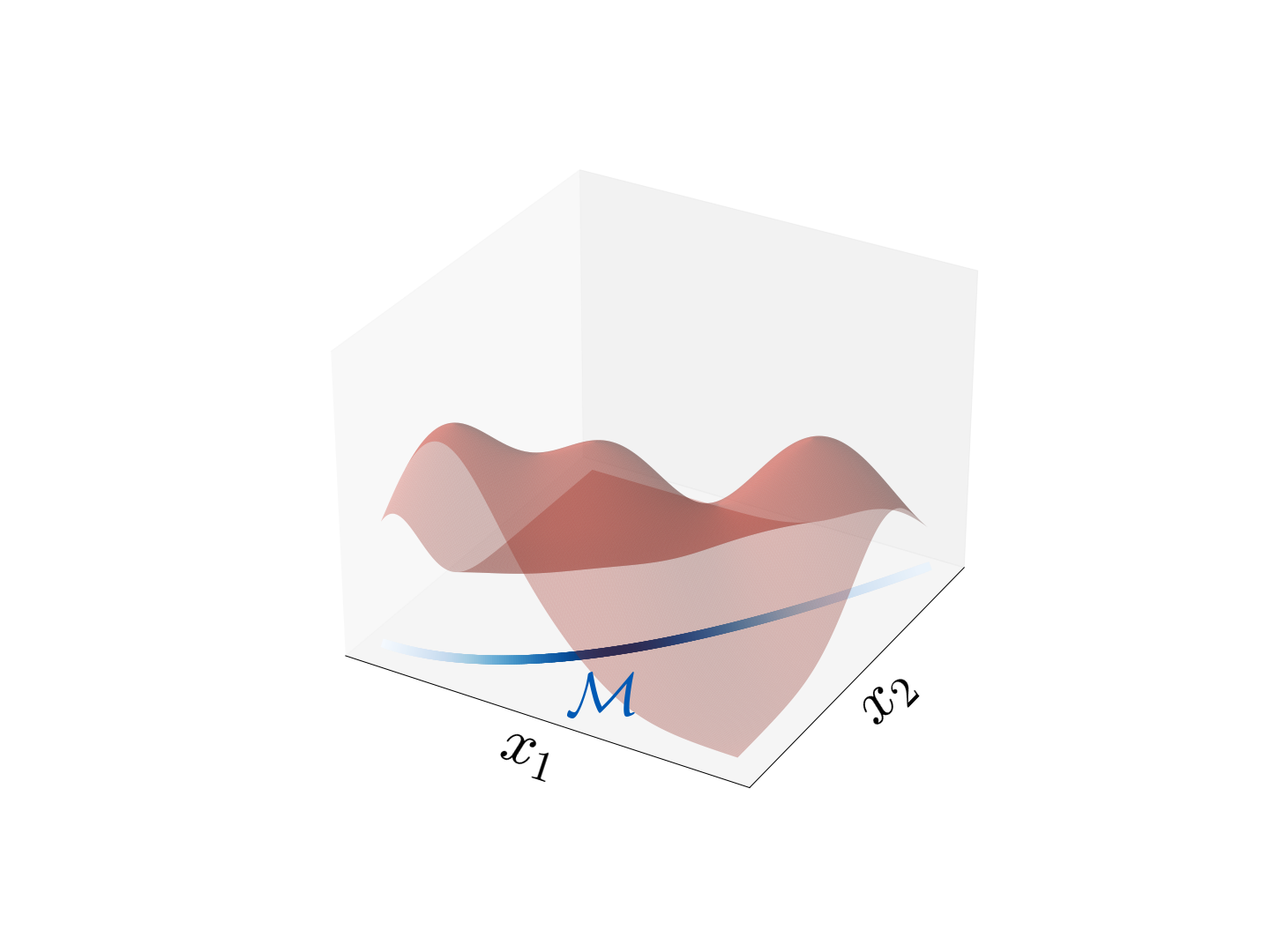}
    \includegraphics[scale=0.16,trim={9cm 6cm 5cm 5cm},clip]{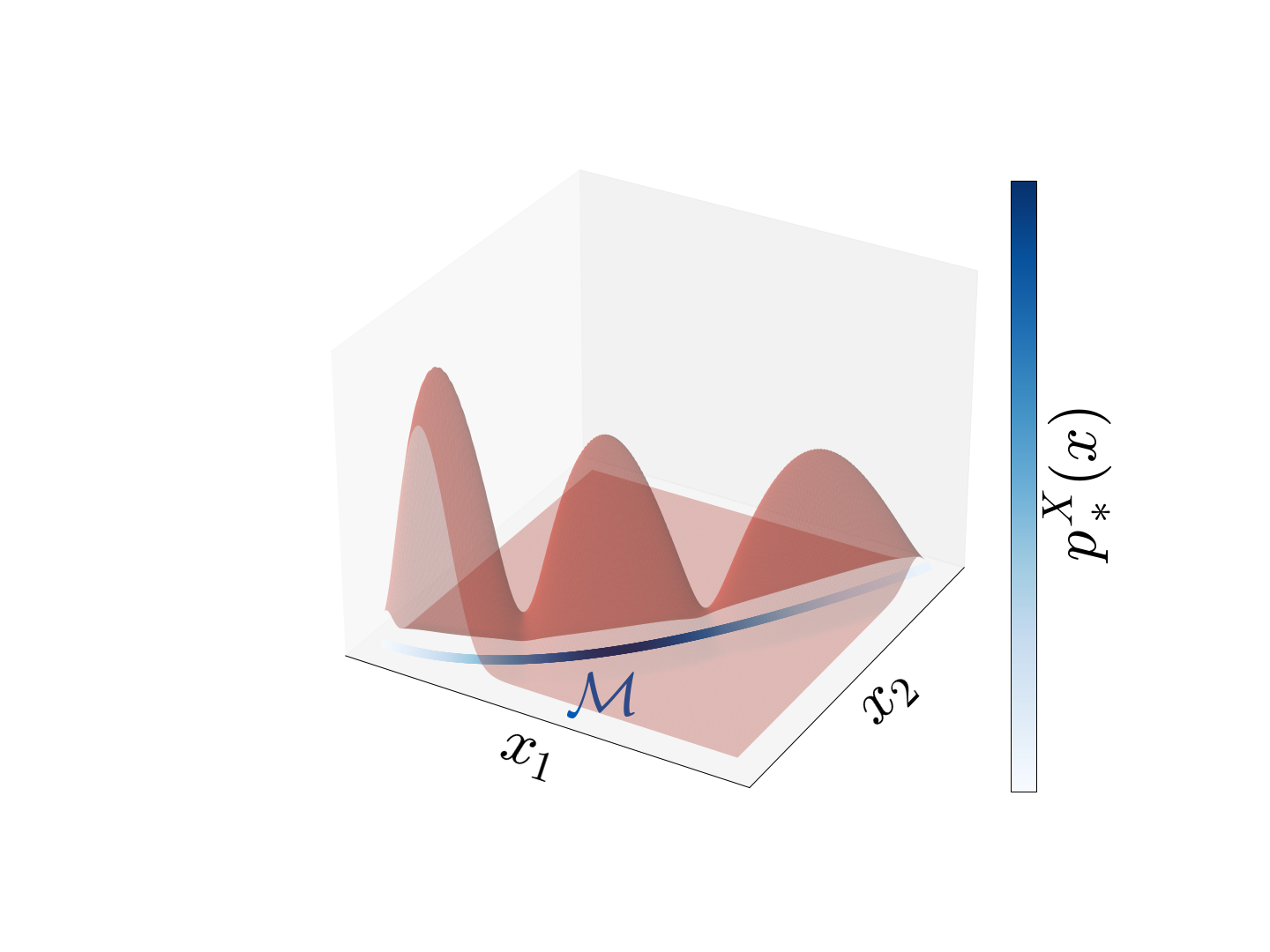}\\
    \vspace{5pt}
    \includegraphics[scale=0.09,trim={0cm 31cm 0cm 0cm},clip]{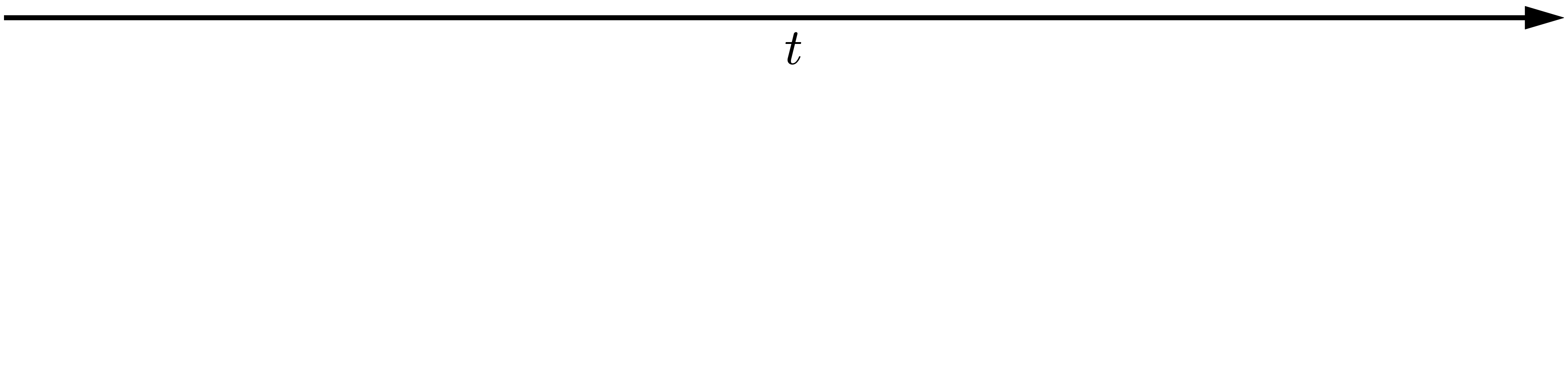}
    \caption{Illustration of manifold overfitting, where the $1$-dimensional $\trueDens$ (shades of blue) along a curve $\M$ in $2$-dimensional ambient space is improperly approximated. Each row shows a sequence of full-dimensional densities $\ambientDens_{\theta_t}$ (red surfaces) having the property that their likelihood diverges to infinity on all of $\M$, yet each sequence approximates a different manifold-supported density $p_\dagger^X$ on $\M$: the top sequence will recover a bimodal distribution on $\M$ and the bottom sequence a trimodal one, despite $\trueDens$ being unimodal.}
    \label{fig:manifold_overfitting}
\end{figure}

The first consequence of this dimensionality misspecification is that the log-likelihood does not admit a maximum as it can be made arbitrarily large. To see this, consider a sequence of full-dimensional models $(\ambientDens_{\theta_t})_{t=0}^\infty$ which concentrate more and more mass around $\M$ during training, as depicted in \autoref{fig:manifold_overfitting}. If $\M$ was full-dimensional, it would be impossible to have $\ambientDens_{\theta_t}(x) \rightarrow \infty$ as $t \rightarrow \infty$ for all $x \in \M$, as doing so would quickly violate the requirement that the densities integrate to $1$. However, when $\M$ is low-dimensional, it is ``infinitely thin'' in $\R^\aDim$, and thus the model densities can be made to diverge to infinity along the entire manifold. This phenomenon is illustrated twice in \autoref{fig:manifold_overfitting}. 

At a first glance, the fact that the likelihood does not admit a maximum might seem inconsequential, as one might hope that as long as $\E_{X \sim \trueDens}[\log \ambientDens_{\theta_t}(X)] \rightarrow \infty$ as $t \rightarrow \infty$, then $\trueDens$ is still being learned. However, this is not the case, and the reason is once again illustrated in \autoref{fig:manifold_overfitting}: there are many ways in which the likelihood can diverge to infinity. 
\citet{loaiza2022diagnosing} formalized this intuition by proving that under mild regularity conditions, for any manifold-supported density $p_\dagger^X$ on $\M$, there always exists a sequence of full-dimensional densities which simultaneously $(i)$ becomes arbitrarily large on the entire manifold, in turn maximizing likelihood, yet $(ii)$ approximates $p_\dagger^X$ rather than the true data-generating density $\trueDens$. The latter condition is formalized using weak convergence in the grey box below, but can be intuitively understood as saying that samples from $\ambientDens_{\theta_t}$ and $p_\dagger^X$ become indistinguishable as $t \rightarrow \infty$.\footnote{Note that approximating $p_\dagger^X$ does not imply that $\ambientDens_{\theta_t}(x) \rightarrow p_\dagger^X(x)$ as $t \rightarrow \infty$ for all $x$ because $\ambientDens_{\theta_t}$ is full-dimensional, whereas $p_\dagger^X$ is manifold-supported.}

An immediate consequence of this result is that maximum-likelihood is an ill-posed objective in the manifold setting, as it simply encourages models to concentrate mass around $\M$ with no concern for the distribution within it. \citet{loaiza2022diagnosing} thus call this behaviour \emph{manifold overfitting}. Several consequences of manifold overfitting are worth discussing. $(i)$ Manifold overfitting does \emph{not} imply that model densities diverge to infinity on \emph{all} of $\M$; as long as these densities diverge to infinity on a subset of non-zero probability under $\trueDens$ and do not converge to zero on the rest of $\M$, the log-likelihood will still be ``maximized'', i.e.\ $\E_{X \sim \trueDens}[\log p_{\param_t}^X(X)] \rightarrow \infty$ as $t \rightarrow \infty$. Analogously, densities could diverge to infinity on a superset of $\M$ -- such as a manifold of dimension higher than $\mDim$ but lower than $\aDim$ \citep{koehler2022variational}. Similarly, the log-likelihood can be made to diverge to infinity in such a way that the sequence of models $p_{\param_t}^X$ does not learn any distribution $p_\dagger^X$ \citep{loaiza2022diagnosing}. In other words, the behaviour of models which ``maximize'' likelihood in the manifold setting can be pathological beyond $\ambientDens_{\param_t}(x)$ diverging to infinity if and only if $x \in \M$. $(ii)$ One might be hopeful that in practice these pathological scenarios are avoided through the optimization dynamics of gradient descent so that $\trueDens$ is properly learned, yet this is not the case \citep{koehler2022variational}. $(iii)$ Manifold overfitting \emph{cannot} be detected by using test likelihoods: as long as the test data is generated from $\trueDens$, then it lies on $\M$ with probability $1$, and thus test likelihoods are also subject to degenerate behaviour. This observation highlights that, in the manifold setting, test log-likelihoods should be avoided as a DGM evaluation metric, and that sample-based metrics \citep{heusel2017, borji2019pros, stein2023exposing} should be favoured instead. This unreliability of test log-likelihoods is consistent with the fact that they are not always correlated with sample quality when modelling images \citep{theis2016note}.

\begin{tcolorbox}[breakable, enhanced jigsaw,width=\textwidth]
    The manifold overfitting result of \citet{loaiza2022diagnosing} can be more formally stated as saying that, under some regularity conditions and provided that $\mDim < \aDim$, for any distribution $\P_\dagger^X$ on $\dataspace$ supported on $\M$, there exists a sequence of distributions $(\P_{\param_t}^X)_{t=1}^\infty$ such that:
    \begin{itemize}
        \item $\P_{\param_t}^X$ is full-dimensional, i.e.\ $\P_{\param_t}^X \ll \lambda_D$, for every $t$.
        \item For every $x \in \M$, it holds that $p_{\param_t}^X(x) \rightarrow \infty$ as $t \rightarrow \infty$, where $p_{\param_t}^X$ is a density of $\P_{\param_t}^X$ with respect to $\lambda_D$.
        \item  For every $x \in \dataspace \setminus \cl_\dataspace(\M)$, it holds that $p_{\param_t}^X(x) \rightarrow 0$ as $t \rightarrow \infty$.
        \item $\P_{\param_t}^X \xrightarrow{\omega} \P_\dagger^X$ as $t \rightarrow \infty$.
    \end{itemize}
    \citet{loaiza2022diagnosing} proved this result under the assumption that $\M$ is analytic. Despite their other regularity conditions being very mild, this is a strong assumption. However, as we now argue, the result actually holds for arbitrary smooth submanifolds $\M$ of $\dataspace$. \citet[Theorem 3.1]{gray1974volume} proved a result which immediately implies that, if $\M$ is an analytic Riemannian manifold, then for $x \in \M$, as $\varepsilon \rightarrow 0$,
    \begin{equation}\label{eq:geodesic_vol}
        \lebesgue_\M \left( B^{\M}_\varepsilon(x)\right) = v(\mDim) \, \varepsilon^{\mDim} \left(1 + \mathcal{O}(\epsilon^2)\right),
    \end{equation}
    where $\lebesgue_\M$ is the Riemannian measure on $\M$, $B_\varepsilon^\M(x)$ denotes a geodesic ball in $\M$ of radius $\varepsilon$ centred at $x$, and $v(\mDim)$ is the volume of a $\mDim$-dimensional Euclidean ball of radius $1$. \citet{loaiza2022diagnosing} used the assumption that $\M$ is analytic only to apply \autoref{eq:geodesic_vol} by evoking the result of \citet{gray1974volume}. However, \autoref{eq:geodesic_vol} is known to hold for arbitrary smooth Riemannian manifolds \citep[Theorem 3.98]{gallot2004riemannian}. It immediately follows that the result of \citet{loaiza2022diagnosing} indeed holds for arbitrary smooth submanifolds $\M$ of $\dataspace$, even if they are not analytic.
\end{tcolorbox}

\subsubsection{The Unavoidable Numerical Instability of High-Dimensional Likelihoods}\label{sec:pathology_ours}

The manifold overfitting result of \citet{loaiza2022diagnosing} described in \autoref{sec:manifold_overfitting} establishes that maximum-likelihood is an ill-posed objective for high-dimensional densities in the manifold setting. Before continuing our review of existing work, we point out that their result does not rule out the possibility of somehow addressing the pathological behaviour of maximum-likelihood, for example by adding a regularizer. Here we prove that it is actually impossible to do so, by showing that for any ``infinitely thin'' subset $M$ of $\dataspace$ (of which $\M$ is an example, but here we do not require $M$ to be a manifold), any density $\ambientDens_\dagger$ supported on $M$, and any sequence of $\aDim$-dimensional models $\ambientDens_{\param_t}$ which learn $\ambientDens_\dagger$, the following holds: $(i)$ for any $x \in \dataspace$ outside of $M$, $\ambientDens_{\param_t}(x)$ gets arbitrarily close to $0$ as $t \rightarrow \infty$; and $(ii)$ for any $x \in M$ and any $L > 0$, for large enough $t$ it holds that $\ambientDens_{\param_t}(x') > L$ for some $x' \in \dataspace$ arbitrarily close to $x$. We formally state our theorem and include a technical discussion in the grey box below.

Technicalities aside, our result shows that likelihoods become arbitrarily close to $0$ outside $M$, and that they become arbitrarily large on it (or arbitrarily close to it). The problem here is twofold: likelihoods are unstable not only because they become arbitrarily large around $M$, but also because they must change very rapidly to approach $0$ outside of it. In particular, this implies that if $\ambientDens_{\param_t}$ is Lipschitz, the corresponding Lipschitz constant must blow up as $t \rightarrow \infty$. 

One way to interpret our result is as a ``soft generalization'' of the manifold overfitting result of \citet{loaiza2022diagnosing}; whereas they show that for any target $\ambientDens_\dagger$ there exists a sequence of models which approximates it while exploding on $\M$ and converging to $0$ elsewhere, we show that \emph{any} sequence recovering $\ambientDens_\dagger$ will exhibit similar pathological behaviour on $M$. Two implications of our result are worth discussing:

\begin{itemize}
    \item \textbf{Numerical instability of likelihood evaluation}\quad Our theorem applies even if the models were not trained through maximum-likelihood, so that if the target distribution is correctly recovered through any means, density evaluation will remain numerically unstable -- even when training itself does not involve likelihoods and is numerically stable. To see this, simply apply our theorem with $\ambientDens_\dagger = \trueDens$, which immediately yields that any sequence of $\aDim$-dimensional models which learn $\trueDens$ will do so with numerically unstable likelihoods. We can gain intuition as to why this should indeed be the case through \autoref{fig:manifold_overfitting}: the only way for the red surfaces ($\ambientDens_{\param_t}$) to recover the density on the blue curve ($\trueDens$) is by spiking to infinity around it, and by not assigning mass elsewhere.

    \item \textbf{Unfixability of maximum-likelihood}\quad Another consequence of our result is that maximum-likelihood cannot be ``fixed''; for example, any regularizer added to it which ensures that $\trueDens$ is learned (rather than some arbitrary $\ambientDens_\dagger$) would not circumvent the aforementioned numerical instabilities -- provided it does not obviate the need to compute likelihoods (or any surrogates used) during training (e.g.\ by cancelling out the log-likelihood, at which point it would not fit the description of a regularizer anymore). Analogously, any regularizer or architecture guaranteeing numerical stability of likelihoods would be such that $\trueDens$ is not learned. In other words, our result ensures that learning $\trueDens$ and having numerically stable likelihoods cannot happen simultaneously.
\end{itemize}

\begin{tcolorbox}[breakable, enhanced jigsaw,width=\textwidth]
    \begin{restatable}[Likelihood Instability of Deep Generative Models]{theorem}{main} \label{thm:main} Let $M \subset \dataspace$ be a Borel set such that $\lebesgue_\aDim(\cl_\dataspace(M))=0$, and let $\ambientMeas_\dagger$ be a probability measure on $\dataspace$ such that $\ambientMeas_\dagger(M)=1$ and $\supp(\ambientMeas_\dagger) = \cl_\dataspace(M)$. Let $(\ambientMeas_{\param_t})_{t=1}^\infty$ be a sequence of probability measures on $\dataspace$ such that $\ambientMeas_{\param_t} \xrightarrow{\omega} \ambientMeas_\dagger$ as $t \rightarrow \infty$ and $\ambientMeas_{\param_t} \ll \lebesgue_\aDim$, with corresponding densities  $\ambientDens_{\param_t}$. Then:
    \begin{itemize}
        \item $\displaystyle \liminf_{t \rightarrow \infty} \ambientDens_{\param_t}(x) = 0$, $\lebesgue_\aDim$-almost-everywhere on $\dataspace \setminus \cl_\dataspace(M)$.
        \item $\displaystyle \sup_{x' \in B_\varepsilon(x)} \ambientDens_{\param_t}(x') \rightarrow \infty$ as $t \rightarrow \infty$ for every $x \in \cl_\dataspace(M)$ and every $\varepsilon > 0$, where \hbox{$B_\varepsilon(x) \coloneqq \{x' \in \dataspace \mid \Vert x' - x \Vert_2 < \varepsilon\}$}.
    \end{itemize}
    \end{restatable}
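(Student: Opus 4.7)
The plan is to combine the Portmanteau characterization of weak convergence (see \autoref{app:primer}) with Fatou's lemma and outer regularity of Lebesgue measure; set $C \coloneqq \cl_\dataspace(M)$ throughout, so by assumption $\lebesgue_\aDim(C) = 0$, $\ambientMeas_\dagger(C) = 1$, and $\supp(\ambientMeas_\dagger) = C$.

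For the first bullet, I would cover the open set $\dataspace \setminus C$ by countably many open balls $(B_n)_{n \geq 1}$ whose closures remain disjoint from $C$. This is possible because $\dataspace$ is second countable and, for any $x \notin C$, the ball $B_{d(x,C)/2}(x)$ has closure contained in $\{y : d(y,C) \geq d(x,C)/2\} \subset \dataspace \setminus C$. Each such closed ball satisfies $\ambientMeas_\dagger(\overline{B_n}) \leq \ambientMeas_\dagger(\dataspace \setminus C) = 0$, so the closed-set direction of Portmanteau gives $\limsup_t \ambientMeas_{\param_t}(\overline{B_n}) = 0$, hence $\ambientMeas_{\param_t}(B_n) \to 0$. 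Fatou's lemma applied to the nonnegative densities then yields
\[
\int_{B_n} \liminf_{t \to \infty} \ambientDens_{\param_t}(y) \, \rd \lebesgue_\aDim(y) \leq \liminf_{t \to \infty} \ambientMeas_{\param_t}(B_n) = 0,
\]
so $\liminf_t \ambientDens_{\param_t} = 0$ $\lebesgue_\aDim$-almost everywhere on $B_n$, and the countable union delivers the first conclusion.

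For the second bullet, fix $x \in C$ and $\varepsilon > 0$. Since $x \in \supp(\ambientMeas_\dagger)$, the number $\beta \coloneqq \ambientMeas_\dagger(B_\varepsilon(x))$ is strictly positive, and because $\ambientMeas_\dagger$ concentrates on $C$ we have $\beta = \ambientMeas_\dagger(B_\varepsilon(x) \cap C)$. Suppose for contradiction that $\sup_{x' \in B_\varepsilon(x)} \ambientDens_{\param_t}(x')$ does not diverge to $\infty$; then there exist $L < \infty$ and a subsequence $(t_k)$ such that $\ambientDens_{\param_{t_k}} \leq L$ $\lebesgue_\aDim$-a.e.\ on $B_\varepsilon(x)$, and this subsequence still satisfies $\ambientMeas_{\param_{t_k}} \xrightarrow{\omega} \ambientMeas_\dagger$. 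Because $B_\varepsilon(x) \cap C$ is a Borel set of Lebesgue measure $0$, outer regularity of $\lebesgue_\aDim$ yields an open $U \supset B_\varepsilon(x) \cap C$ with $\lebesgue_\aDim(U) < \beta/(2L)$. Setting $V \coloneqq U \cap B_\varepsilon(x)$, the set $V$ is open, contained in $B_\varepsilon(x)$, contains $B_\varepsilon(x) \cap C$, and has $\lebesgue_\aDim(V) < \beta/(2L)$. The open-set direction of Portmanteau, combined with the density bound on the subsequence, then produces
\[
\beta \leq \ambientMeas_\dagger(V) \leq \liminf_{k \to \infty} \ambientMeas_{\param_{t_k}}(V) \leq L \cdot \lebesgue_\aDim(V) < \beta/2,
\]
the desired contradiction.

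The main obstacle is engineering the set $V$ in the second bullet so that it does two incompatible-looking jobs at once: it must contain enough of $C$ for Portmanteau to pin $\ambientMeas_{\param_{t_k}}(V)$ near $\beta$, yet have Lebesgue volume small enough that any uniform upper bound $L$ on the densities inside $B_\varepsilon(x)$ becomes self-contradictory. The hypothesis $\lebesgue_\aDim(\cl_\dataspace(M)) = 0$ is precisely what makes this possible, via outer regularity of $\lebesgue_\aDim$ on the Lebesgue-null Borel set $B_\varepsilon(x) \cap C$; everything else is a mechanical pairing of Portmanteau with Fatou (part one) or with an $L^\infty$-style density bound (part two).
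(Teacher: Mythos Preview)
Your argument is correct. Part one is essentially the paper's proof: both cover $\dataspace \setminus \cl_\dataspace(M)$ by countably many open sets whose closures miss $\cl_\dataspace(M)$, use a Portmanteau direction to force the $\ambientMeas_{\param_t}$-mass of each to vanish, and apply Fatou. The only cosmetic difference is that the paper uses the continuity-set form of Portmanteau while you use the closed-set form on the closures of your balls.

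Part two is where the approaches genuinely diverge. The paper argues constructively: it introduces the $\delta$-thickenings $M_\delta = \{x : d(x,M) < \delta\}$, proves three auxiliary lemmas (that every ball contains a smaller continuity ball, that $M_\delta$ is a continuity set, and that balls around support points have positive $\ambientMeas_\dagger$-mass), splits $\ambientMeas_{\param_t}(B_\varepsilon(x))$ over $M_\delta \cap B_\varepsilon(x)$ and its complement, and passes to limits via the continuity-set Portmanteau before sending $\delta \to 0$. This produces an explicit lower bound $\ambientMeas_\dagger(B_\varepsilon(x)) / \lebesgue_\aDim(M_\delta \cap B_\varepsilon(x))$ on $\liminf_t \sup_{B_\varepsilon(x)} \ambientDens_{\param_t}$ before the limit in $\delta$ is taken. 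Your route is a cleaner contradiction: assume a uniformly bounded subsequence of densities on $B_\varepsilon(x)$, invoke outer regularity of $\lebesgue_\aDim$ once to manufacture an open $V \subset B_\varepsilon(x)$ that carries all of $\ambientMeas_\dagger(B_\varepsilon(x))$ yet has arbitrarily small Lebesgue volume, and let the open-set Portmanteau collide with the $L^\infty$ bound. This bypasses the three lemmas entirely (you use the support characterization behind Lemma~4 as a one-line standard fact) and never needs to identify continuity sets. The trade-off is that the paper's argument is quantitative in spirit, while yours is purely qualitative---but since the theorem itself is qualitative, your version is the more economical proof.
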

    \begin{proof} See \hyperref[app:th1]{Appendix B.1}. \end{proof}

    We now make some relevant observations about the \hyperref[thm:main]{Likelihood Instability Theorem}. $(i)$ Note that we only require the closure of $M$ to have Lebesgue measure $0$, so it need not be a manifold. Our result thus applies in settings beyond the standard manifold hypothesis, such as when $M$ is given by a union of manifolds \citep{brown2022union}, or by a non-manifold set with singularities \citep{von2023topological, wang2024cw}. $(ii)$ $\liminf_{t \rightarrow \infty} \ambientDens_{\param_t}(x)$ cannot in general be replaced by $\lim_{t \rightarrow \infty} \ambientDens_{\param_t}(x)$ since the limit need not exist, but as an immediate corollary, if the limit exists, then it must be $0$ $\lebesgue_\aDim$-almost-everywhere on $\dataspace \setminus \cl_\dataspace(M)$. $(iii)$ We also point out that $\sup_{x' \in B_\varepsilon(x)} \ambientDens_{\param_t}(x')$ cannot be in general replaced by $\ambientDens_{\param_t}(x)$ either, despite our conclusion holding for every $\varepsilon > 0$. Intuitively, this is because for any given $x \in M$ the divergence to infinity of the density might not happen at $x$, but rather on a sequence converging to it: we provide an illustrative example in \hyperref[app:th1]{Appendix B.1}. $(iv)$ We do emphasize that despite not concluding that $\ambientDens_{\param_t}(x) \rightarrow 0$ outside of $M$ nor that $\ambientDens_{\param_t}(x) \rightarrow \infty$ in $M$, our result does unequivocally ensure numerical instability of the involved densities.
\end{tcolorbox}

\subsubsection{Variational Autoencoders}\label{sec:vaes}

Variational autoencoders \citep[VAEs;][]{kingma2014auto, rezende2014stochastic} are a class of likelihood-based models. 
The continuous VAEs that we consider here specify a fixed prior density $\latentDens$ (commonly a standard Gaussian) on $\latentspace$, along with a learnable conditional full-dimensional likelihood $p^{X|Z}_\param$ on $\dataspace$, which is often a parameterized Gaussian,
\begin{equation}\label{eq:vae_decoder}
    p^{X|Z}_\param(x|z) = \N\left(x; \decoder (z), \Sigma_\param^{X|Z}(z)\right),
\end{equation}
where $\Sigma_\param^{X|Z} : \latentspace \rightarrow \R^{\aDim \times \aDim}$ is symmetric positive definite, often given by $\gamma I_D$, where $\gamma > 0$ is treated as a free parameter rather than the output of a neural network. The conditional likelihood $p^{X|Z}_\param(\cdot|z)$ in \autoref{eq:vae_decoder} can be understood as a stochastic decoder, whose mean is given by the deterministic decoder $\decoder(z)$. 
Together, the prior and the conditional likelihood implicitly define the marginal likelihood over data:
\begin{equation}\label{eq:vae_model_dens}
    \modelDens(x) = \int \latentDens(z) p^{X|Z}_\param (x|z)\rd z.
\end{equation}
Since computing the marginal likelihood involves an intractable integral, a variational posterior density $q^{Z|X}_\paramAlt$ on $\latentspace$ is introduced, and the following objective, called the evidence lower bound (ELBO), is jointly maximized over $\param$ and $\paramAlt$:
\begin{align}
    \mathcal{E}_{\trueDens}(\param, \paramAlt) & \coloneqq \mathbb{E}_{X \sim \trueDens} \left[ \mathbb{E}_{Z \sim q^{Z|X}_\paramAlt(\cdot|X)}[\log p^{X|Z}_\param (X|Z)] - \KL \left(q^{Z|X}_\paramAlt(\cdot|X) \, \Big\Vert \, \latentDens\right)  \right] \label{eq:elbo_obj}\\
    & = \mathbb{E}_{X \sim \trueDens}\left[\log \modelDens (X) - \KL \left(q^{Z|X}_{\paramAlt}(\cdot|X) \, \Big\Vert \, p^{Z|X}_{\param}(\cdot|X)\right)\right] \leq \mathbb{E}_{X \sim \trueDens}[\log \modelDens (X)],\label{eq:elbo_theoretical}
\end{align}
where $p^{Z|X}_\param$ denotes the true posterior density, which is implicitly defined by the prior $\latentDens$ and the conditional likelihood $p_\param^{X|Z}$. Note that \autoref{eq:elbo_obj} is used as the optimization objective, since the true posterior density cannot be tractably evaluated. While not directly usable as an objective, \autoref{eq:elbo_theoretical} shows that the ELBO lower-bounds the log-likelihood $\mathbb{E}_{X \sim \trueDens}[\log \modelDens (X)]$, and differs from it only by the error incurred by $q^{Z|X}_\paramAlt$ to approximate the true posterior: this is often used as a justification for using the ELBO as an objective, as it simultaneously encourages learning $\theta$ through maximum-likelihood, and $\paramAlt$ so that $q^{Z|X}_\paramAlt$ matches the true posterior. It is common to also specify $q^{Z|X}_\paramAlt$ as a Gaussian,
\begin{equation}\label{eq:vae_variational_posterior}
    q^{Z|X}_\paramAlt(z|x) = \N\left(z; \encoder(x), \Sigma_\paramAlt^{Z|X}(x)\right),
\end{equation}
where $\Sigma_\paramAlt^{Z|X}: \dataspace \rightarrow \R^{d \times d}$ is also symmetric positive definite, and often given by a diagonal matrix with positive entries along its diagonal. In an analogous manner to the conditional likelihood, the variational posterior $q_\paramAlt^{Z|X}(\cdot|x)$ in \autoref{eq:vae_variational_posterior} can be interpreted as a stochastic encoder, whose mean is given by the deterministic encoder $\encoder(x)$.

An issue which commonly affects VAEs is posterior collapse \citep{chen2017variational, wang2021posterior}, where the learned variational posterior $q_{\paramAlt^\ast}^{Z|X}$ partially collapses to the prior $\latentDens$. We will shortly explain posterior collapse in VAEs through the manifold lens, and thus we briefly summarize the phenomenon here: in the case where $\Sigma_{\paramAlt^\ast}^{Z|X}$ is taken as a diagonal matrix, a subset of the diagonal entries of $\Sigma_{\paramAlt^\ast}^{Z|X}(x)$ collapses to $1$ for $x \in \M$, and the corresponding entries of $\enc_{\paramAlt^\ast}(x)$ collapse to $0$, matching the standard Gaussian prior $\latentDens$; whereas the remaining diagonal entries of $\Sigma_{\paramAlt^\ast}^{Z|X}(x)$ are extremely close to $0$, essentially losing stochasticity along these coordinates. In other words, VAEs tend to only ``use'' a subset of the coordinates of their latent space $\latentspace$ to obtain data samples and default the rest to the prior.

\paragraph{VAEs through the lens of the manifold hypothesis} Despite the autoencoder-like structure of VAEs that leverages low-dimensional representations, VAEs as presented above are full-dimensional models. This is a direct consequence of the choice of $p^{X|Z}_\param$, which always assigns strictly positive density to all of $\dataspace$, i.e.\ $p^{X|Z}_\param (x|z) > 0$ for all $x \in \dataspace$ and $z \in \latentspace$. This in turn implies that $\modelDens (x) > 0$ for all $x \in \dataspace$, so that the model density is not supported on a low-dimensional manifold. Thus, since the ELBO is maximized as a proxy for the log-likelihood, intuitively VAEs should be subject to manifold overfitting. \citet{dai2019diagnosing} formally show that this is indeed the case by proving that, subject to some regularity conditions and assuming that $\lDim \geq \mDim$, for any manifold-supported density $p_\dagger^X$ on $\M$, there exists a sequence of VAE models parameterized by $(\param_t, \paramAlt_t)_{t=1}^\infty$ such that: $(i)$ $\mathcal{E}_{\trueDens}(\param_t, \paramAlt_t) \rightarrow \infty$ as $t \rightarrow \infty$; $(ii)$ the VAE models $p_{\param_t}^X$ learn $p_\dagger^X$ instead of $\trueDens$; and $(iii)$ $\KL(q^{Z|X}_{\paramAlt_t}(\cdot|x) \Mid p_{\param_t}^{Z|X}(\cdot|x)) \rightarrow 0$ as $t \rightarrow \infty$ for every $x \in \M$. Although this result predates the manifold overfitting result of \citet{loaiza2022diagnosing} from \autoref{sec:manifold_overfitting}, it can be understood as saying that maximizing the ELBO instead of the log-likelihood does not prevent manifold overfitting. It is also worth noting that while the work of \citet{loaiza2022diagnosing} extends the result from \citet{dai2019diagnosing} to non-VAE models and VAE models with flexible variational posteriors \citep{rezende2015variational, kingma2016improved, berg2018sylvester, caterini2021variational} -- since if the variational posterior is flexible enough, optimizing the ELBO becomes equivalent to maximizing the log-likelihood in the nonparametric regime (\autoref{sec:setup}) -- it does not immediately imply that manifold overfitting can happen when $q^{Z|X}_\paramAlt$ is Gaussian, which \citet{dai2019diagnosing} do prove.

Well-known training instabilities of VAEs can be understood as consequences of manifold overfitting. For example, as previously mentioned, it is common to use $\Sigma^{X|Z}_\param(z) = \gamma I_\aDim$ for every $z \in \latentspace$, where $\gamma$ is a free parameter ($\gamma$ is also often treated as a non-learnable hyperparameter instead). This purposely simplistic choice is made for the sake of training stability, e.g.\ parameterizing $\Sigma^{X|Z}_\param$ as a diagonal matrix whose non-zero entries are given by a neural network can easily result in divergent training \citep{lin2019balancing, rybkin2021simple}; the added flexibility of the stochastic decoder of this VAE can be understood as making it more prone to experience manifold overfitting. We also highlight that, when modelling images, the best empirically performing VAEs do not use a Gaussian conditional likelihood $p_\param^{X|Z}$. Instead, they treat pixels as discrete and use a categorical $p_\param^{X|Z}$ \citep{vahdat2020nvae}. Mathematically, these discrete models are not afflicted by manifold overfitting (we discuss why in \autoref{sec:other_dgms}), which helps explain why they outperform VAEs with continuous conditional likelihoods.

\citet{dai2019diagnosing} provide further insights into the interplay between VAEs and the manifold hypothesis beyond manifold overfitting. In particular, they also show that under appropriate conditions, Gaussian VAEs such as the ones presented above achieve perfect reconstructions, in the sense that encoding and then decoding any $x \in \M$ recovers $x$. 
This result justifies using VAEs as autoencoders despite the fact that they suffer from manifold overfitting. Additionally, \citet{dai2019diagnosing} also show that only $\mDim$ latent dimensions are needed to achieve these perfect reconstructions, suggesting that the posterior over the remaining $\lDim - \mDim$ latent dimensions defaults to the standard Gaussian prior $\latentDens$ due to the KL term in \autoref{eq:elbo_obj}. In other words, the manifold lens helps elucidate why posterior collapse happens.

\begin{tcolorbox}[breakable, enhanced jigsaw, width=\textwidth]
    We now formalize the discussion on the results of \citet{dai2019diagnosing}, which assume Gaussian VAEs as described above, with the decoder covariance given by $\Sigma_\param^{X|Z}(z) = \gamma I_D$, where $\gamma > 0$ is a free parameter that does not depend on $z$. They also assume throughout that $\M$ is diffeomorphic to $\R^{\mDim}$, which is a much stronger assumption than required by \citet{loaiza2022diagnosing}.\\
    
    The first result of \citet{dai2019diagnosing} assumes some regularity conditions, that $\mDim < \aDim$, 
    that $\lDim \geq \mDim$, and that $\gamma$ is learnable (i.e.\ it is part of $\theta$). The result then states that for any distribution $\P_\dagger^X$ on $\dataspace$ supported on $\M$, there exist a sequence of VAE models parameterized by $(\param_t, \paramAlt_t)_{t=1}^\infty$ such that:
    \begin{itemize}
        \item For every $x \in \M$, it holds that $p_{\param_t}^X(x) \rightarrow \infty$ and $\KL\left(q^{Z|X}_{\paramAlt_t}(\cdot|x) \, \Big\Vert \, p_{\param_t}^{Z|X}(\cdot|x)\right) \rightarrow 0$ as $t \rightarrow \infty$. Note that together, these two limits imply not only that $\E_{X \sim \trueMeas}[\log p_{\param_t}^X(X)] \rightarrow \infty$, but also that $\mathcal{E}_{\trueDens}(\param_t, \paramAlt_t) \rightarrow \infty$ as $t \rightarrow \infty$.
        \item $\P_{\param_t}^X \xrightarrow{\omega} \P_\dagger^X$ as $t \rightarrow \infty$, where $\P_{\param_t}^X$ is the distribution corresponding to the model density $p_{\param_t}^X$.\\
    \end{itemize}

    As previously mentioned, \citet{dai2019diagnosing} also show that Gaussian VAEs achieve perfect reconstructions, and they link this behaviour to posterior collapse. To do so, they first consider $\gamma > 0$ as a fixed hyperparameter instead of being learnable, and write the corresponding ELBO as $\mathcal{E}_{\trueDens}(\param, \paramAlt; \gamma)$, with corresponding maximizers $\param^\ast(\gamma)$ and $\paramAlt^\ast(\gamma)$. They then prove that, if $\lDim \geq \mDim$ and under similar regularity conditions to the result above, for every $\gamma > 0$ there exists $\gamma' \in (0, \gamma)$ such that:
    \begin{equation}
        \mathcal{E}_{\trueDens}\left(\param^\ast(\gamma'), \paramAlt^\ast(\gamma'); \gamma'\right) > \mathcal{E}_{\trueDens}\left(\param^\ast(\gamma), \paramAlt^\ast(\gamma); \gamma \right).
    \end{equation}
    In particular this suggests that, when $\gamma$ is learnable, it must converge to $0$ to make the ELBO diverge to infinity. An intuitive way to understand this is as saying that, for a small enough decoder variance $\gamma$, it is preferable to maximize the $\log p_{\param}^{X|Z}(X|Z)$ term in \autoref{eq:elbo_obj} -- which in this case boils down to an $\ell_2$ reconstruction error weighted by $1/(2\gamma)$ -- instead of minimizing the KL term. \citet{dai2019diagnosing} then leverage this result to show that VAEs achieve perfect reconstructions in the sense that
    \begin{equation}
        \lim_{\gamma \rightarrow 0} \dec_{\param^\ast(\gamma)}\left(\enc_{\paramAlt^\ast(\gamma)}(x)\right) =x, \quad \trueMeas\text{-almost-surely}.
    \end{equation}
    Finally, since $\trueMeas$ is supported on a $\mDim$-dimensional manifold $\M$, perfectly reconstructing a point $x \in \M$ requires $\mDim$ dimensions, and not the full $\lDim$ of the latent space $\latentspace$: this means that $\mDim$ latent dimensions are used to achieve perfect reconstructions, and thus the respective approximate posterior variances are sent to $0$; whereas the remaining $\lDim - \mDim$ dimensions in the approximate posterior default to a standard Gaussian to minimize the KL term in the ELBO. This explanation of posterior collapse was suggested by \citet{dai2019diagnosing}, and it was further formalized by \citet{zheng2022learning}.
\end{tcolorbox}

\subsubsection{Normalizing Flows}\label{sec:nfs}

Normalizing flows \citep[NFs;][]{dinh2015nice, dinh2017density, papamakarios2017masked, kingma2018glow, durkan2019neural, kobyzev2020normalizing, papamakarios2021normalizing} are a class of DGMs that leverage the change-of-variables formula to enable maximum-likelihood training. NFs construct a bijective neural network $\decoder : \latentspace \rightarrow \dataspace$, where $\latentspace = \dataspace = \R^\aDim$, such that both $\decoder$ and its inverse $\enc_\param$ are differentiable.\footnote{Note that in NFs, the ``encoder'' $\enc_\param$ is uniquely determined by the ``decoder'' $\decoder$. Thus, the ``encoder'' does not require auxiliary parameters $\paramAlt$, which is why we parameterize it with the generative parameters $\theta$. We do nonetheless highlight that moving away from this restriction by parameterizing the encoder and decoder networks separately and making them learn to invert each other has been attempted \citep{draxler2024free}.} A prior density $\latentDens$ is specified (often a standard Gaussian), and sampling from the model $\modelDens$ is achieved through $X = \decoder(Z)$ where $Z \sim \latentDens$ (formally, $\modelDens$ is the pushforward of $\latentDens$ through $\decoder$). The change-of-variables formula from \autoref{eq:change_variable_used} provides the maximum-likelihood objective:
\begin{equation}\label{eq:ml_nf}
    \max_\param \E_{X \sim \trueDens}\left[ \log \latentDens \left(\enc_\param (X)\right) + \log \left| \det \nabla_x \enc_\param(X) \right| \right].
\end{equation}
Constructing the Jacobian $\nabla_x \enc_\param(x)$ through automatic differentiation to compute the log-likelihood and then backpropagate with respect to $\param$ is computationally prohibitive. To circumvent this issue, NFs are constructed in such a way that ensures that $\det \nabla_x \enc_\param(x)$ can be efficiently evaluated in closed-form (or at least approximated), thus enabling gradient optimization with respect to $\param$.

A relevant variant of NFs are continuous NFs, where $\enc_\param$ is defined implicitly through an ordinary differential equation (ODE) rather than explicitly constructed \citep{chen2018neural, salman2018deep, grathwohl2019ffjord}. More specifically, an auxiliary neural network $v_\param: \dataspace \times [0, T] \rightarrow \dataspace$ is used to specify the ODE:
\begin{align}\label{eq:cont_nf_ode}
\begin{split}
    & \rd x_t = v_\param(x_t, t) \rd t,\\
    & x_0 \in  \dataspace.
\end{split}
\end{align}
Under standard regularity conditions this ODE has a unique and smooth solution on $[0, T]$ \citep{khalil2002nonlinear},\footnote{The most notable of these conditions is $v_\param$ being Lipschitz in $t$ (with the Lipschitz constant not depending on $x$), which will become relevant when we discuss diffusion models in \autoref{sec:diffusion_models}.} i.e.\ it characterizes the trajectory $(x_t)_{t \in [0, T]}$, and thus implicitly defines a mapping from the initial condition to the final point in the trajectory, namely
\begin{align}
\begin{split}
\enc_\param: \dataspace &\rightarrow \latentspace,\\
x_0 &\mapsto x_T,
\end{split}
\end{align}
where again $\latentspace = \dataspace$. 
Furthermore, under the same conditions that guarantee a unique solution to \autoref{eq:cont_nf_ode}, $\enc_\param$ is invertible and its inverse $\decoder$ can be computed by solving the reverse ODE
\begin{align}\label{eq:cont_nf_reverse}
\begin{split}
    & \rd y_t = - v_\param (y_t, T-t) \rd t,\\
    & y_0=x_T \in \latentspace,
\end{split}
\end{align}
whose solution is the reversed trajectory $(y_t)_{t \in [0, T]} = (x_{T-t})_{t \in [0, T]}$, so that $\decoder$ corresponds to the map
\begin{align}\label{eq:cnf_g}
    \begin{split}
        \decoder: \latentspace &\rightarrow \dataspace,\\
        y_0 &\mapsto y_T.
    \end{split}
\end{align}

Like standard NFs, continuous NFs are sampled from by first obtaining $Y_0 \sim \latentDens$, and then computing $Y_T = \decoder(Y_0)$ -- which is now done by numerically solving \autoref{eq:cont_nf_reverse} initialized at $y_0 = Y_0$. In this case the $\log \det$ term in the change-of-variables formula takes the form \citep{chen2018neural}
\begin{equation}\label{eq:change_of_variable_cnf}
    \log \det \nabla_{x_0} \enc_{\param}(x_0) = \int_{0}^T \tr \left(\nabla_{x_t} v_\param(x_t, t) \right) \rd t,
\end{equation}
thus enabling maximum-likelihood training of continuous NFs through
\begin{equation}\label{eq:ml_cnf}
    \max_\param \E_{X_0 \sim \trueDens}\left[ \log \latentDens \left(\enc_\param (X_0)\right) + \int_{0}^T \tr \left(\nabla_{x_t} v_\param(X_t, t) \right) \rd t \right],
\end{equation}
where $X_t$ corresponds to $x_t$ when \autoref{eq:cont_nf_ode} is initialized at $x_0 = X_0 \sim \trueDens$. 
While the computations used to train continuous NFs through \autoref{eq:ml_cnf} are significantly different than those used for standard NFs, both models are fundamentally doing the same thing: modelling the data as the distribution obtained by mapping a simple distribution $\latentDens$ such as a Gaussian through a bijective function $\decoder$ (defined explicitly or implicitly), and training the model via maximum-likelihood.

\paragraph{Normalizing flows through the lens of the manifold hypothesis} By construction, NFs -- continuous or not -- are full-dimensional models and are thus susceptible to manifold overfitting (\autoref{sec:manifold_overfitting}). There are however other pathologies associated with using NFs in the manifold setting. For example, \citet{cornish2020relaxing} show that if the supports of $\latentDens$ and $\trueDens$ are not homeomorphic,\footnote{Recall that two spaces are homeomorphic when there exists a homeomorphism -- i.e.\ a continuous invertible function with continuous inverse -- between them.} then any normalizing flow which approximates $\trueDens$ must have exploding bi-Lipschitz constant. In the manifold setting, the support of $\trueDens$ is the $\mDim$-dimensional manifold $\M$, which is not homeomorphic to the support of $\latentDens$, i.e.\ $\R^\aDim$. The exploding bi-Lipschitz constant implied by the result of \citet{cornish2020relaxing} entails that, if NFs converge to a distribution on a low-dimensional manifold (whether this is $\trueDens$ or some other distribution), they must do so in a numerically unstable way, which is completely consistent with our result in \autoref{sec:pathology_ours}. These theoretical insights are borne out in practice; for example, \citet{behrmann2021understanding} show that trained NFs are numerically non-invertible. While perhaps initially surprising, this phenomenon is neatly explained by considering NFs through the lens of the manifold hypothesis, which we return to in \autoref{sec:infs}.

\subsubsection{Energy-Based Models}\label{sec:ebms}

Energy-based models \citep[EBMs;][]{xie2016theory, du2019implicit} are likelihood-based DGMs which construct a density $\modelDens$ by specifying it up to proportionality. More specifically, a neural network $E_\param: \dataspace \rightarrow \R$, called the energy function, is used to define $\modelDens$ through
\begin{equation}
    \modelDens (x) \propto e^{-E_\param (x)},
\end{equation}
where $\modelDens$ is assumed to be well-defined (i.e.\ its normalizing constant is presumed finite: $\int_\dataspace e^{-E_\param (x)} \rd x < \infty$). While the likelihood of EBMs is unavailable due to the normalizing constant of $\modelDens$ being intractable, the observation that
\begin{equation}
    \nabla_\param \log \modelDens(x) = \E_{X \sim \modelDens}\left[ \nabla_\param E_\param(X)\right] - \nabla_\param E_\param(x)
\end{equation}
enables gradient optimization of the log-likelihood, since
\begin{equation}\label{eq:contrastive_divergence}
    \nabla_\param \E_{X \sim \trueDens}\left[\log \modelDens (X)\right] = \E_{X \sim \modelDens}\left[ \nabla_\param E_\param(X)\right] -  \E_{X \sim \trueDens}\left[ \nabla_\param E_\param(X) \right],
\end{equation}
and the expectation with respect to $\modelDens$ can be estimated through Markov chain Monte Carlo (MCMC) methods such as Langevin dynamics \citep{welling2011bayesian}. In practice, the log-likelihood is maximized by implementing the loss
\begin{equation}
    \min_\param \E_{X \sim \trueDens}[E_\param(X)] - \E_{X \sim \ambientDens_{\param'}}[E_\param(X)],
\end{equation}
where $\param' = \mathtt{stopgrad}(\param)$, as it provides the correct gradient with respect to $\param$.\footnote{$\mathtt{stopgrad}$ is a computational operator, commonly available in automatic differentiation libraries such as PyTorch \citep{paszke2019pytorch}, which leaves the forward pass unchanged ($\param' = \param$) while ignoring gradients when differentiating ($\nabla_\param \param' = 0$), i.e.\ $\E_{X \sim p_{\param'}^X}[E_\param(X)] = \E_{X \sim \modelDens}[E_\param(X)]$, yet $\nabla_\param \E_{X \sim p_{\param'}^X}[E_\param(X)] = \E_{X \sim \modelDens}[\nabla_\param E_\param(X)]$ even though $\nabla_\param \E_{X \sim \modelDens}[E_\param(X)]$ is not in general equal to $\E_{X \sim \modelDens}[\nabla_\param E_\param(X)]$.}

\paragraph{Energy-based models through the lens of the manifold hypothesis} Since by construction $\modelDens(x) > 0$ for all $x \in \dataspace$, EBMs are full-dimensional models and are thus susceptible to manifold overfitting (\autoref{sec:manifold_overfitting}). EBM training and sampling are known to be difficult. A common trick to sample from a trained EBM is to initialize MCMC chains not from noise, but from previous chains held in a replay buffer. The buffer is a set of MCMC samples from chains that have been advanced by the EBM throughout the entire training process \citep{du2019implicit, grathwohl2020your}. Alternatively, a mixture of historical training checkpoints can be used for sampling \citep{du2019implicit}. These tricks are necessary because EBMs are prone to mode collapse, especially in high-dimensional ambient spaces \citep{arbel2020generalized,loaiza2022diagnosing}. This mode collapse behaviour is often blamed on Langevin dynamics and the multimodality of the target distribution, but can be further explained by the large or unstable gradients in the density landscape of a model that has undergone manifold overfitting.

One particularly interesting exception is the normalized autoencoder proposed by \citet{yoon2021autoencoding}, which employs the reconstruction error of an autoencoder to define the energy function; i.e.
\begin{equation}\label{eq:nae}
    E_\param(x) = \dfrac{\Vert x -\decoder(\enc_\param(x)) \Vert_2^2}{T},
\end{equation}
where $T>0$ is a hyperparameter.\footnote{Note that the distribution of normalized autoencoders depends on the encoder $\enc_\param$, which is why we parameterize it with $\param$ instead of auxiliary parameters $\paramAlt$; this encoder need not share parameters with the decoder $\decoder$.} Much like variational autoencoders (\autoref{sec:vaes}), despite using an autoencoder-like structure, normalized autoencoders remain full-dimensional models; this is true of EBMs regardless of the choice of energy function. An interesting observation, which to the best of our knowledge has not been previously made, is that the energy function in \autoref{eq:nae} is lower-bounded by $0$, which in turn implies that $e^{- E_{\param_t}(x)}$ is upper-bounded, meaning that $p^X_{\param_t}(x)$ cannot be sent to infinity by making $E_{\param_t}(x)$ arbitrarily negative for $x \in \M$. In particular, manifold overfitting can only occur when the normalizing constant $\int_\dataspace e^{-E_{\param_t} (x)} \rd x$ goes to $0$. 
While it is of course possible for this to happen with arbitrarily flexible networks, we hypothesize that EBMs with lower-bounded energy functions might have an inductive bias which helps them avoid manifold overfitting in practice, albeit likely at the cost of generative quality. This might explain their empirical success at density-based out-of-distribution detection \citep{yoon2023energy}.

\subsection{Generative Adversarial Networks}\label{sec:gans}

Generative adversarial networks \citep[GANs;][]{goodfellow2014generative, radford2016unsupervised} use a neural network $\decoder : \latentspace \rightarrow \dataspace$ called the generator, along with a latent distribution $\latentDens$ (usually a standard Gaussian), to specify the model distribution $\modelDens$. Similarly to normalizing flows (\autoref{sec:nfs}), samples $X = \decoder(Z)$ from a GAN are obtained by sampling $Z \sim \latentDens$ and transforming the result through $\decoder$ (as in NFs, $\modelDens$ is formally given by the pushforward of $\latentDens$ through $\decoder$), although unlike NFs, $\lDim = \aDim$ is not required, and rather $\lDim < \aDim$ is the standard choice for GANs, so that $\decoder$ need not be invertible. GANs are not likelihood-based models, and in order to train $\decoder$, they introduce a binary classifier $h_\paramAlt:\dataspace \rightarrow (0,1)$, which is trained to distinguish between real samples $X \sim \trueDens$ and generated samples $X \sim \modelDens$. The generator $\decoder$ is trained alongside $h_\paramAlt$, so as to make the classifier unable to successfully differentiate between real and generated samples:
\begin{equation}\label{eq:gan_objective}
    \min_\param \max_\paramAlt \Exp_{X \sim \trueDens}\left[\log h_\paramAlt (X)\right] + \Exp_{Z \sim \latentDens}\left[\log \left(1 - h_\paramAlt (\decoder (Z))\right)\right].
\end{equation}
Assuming that the classifier is arbitrarily flexible, it can be shown that the above objective is equivalent to minimizing the Jensen-Shannon divergence, $\mathbb{JS}$, between the true distribution and the model,
\begin{equation}
    \min_\param \mathbb{JS}\left(\trueDens \Mid \modelDens \right),
\end{equation}
where $\mathbb{JS}(p \Mid q) \coloneqq \frac{1}{2} \mathbb{KL}(p \Mid \frac{1}{2}p + \frac{1}{2}q) + \frac{1}{2} \mathbb{KL}(q \Mid \frac{1}{2}p + \frac{1}{2}q)$. This result was first shown by \vspace{1pt} \citet{goodfellow2014generative} under the unstated assumption that $\modelDens$ and $\trueDens$ are densities supported on the same manifold for all $\param$. While this assumption is unrealistic in the manifold setting and should not be expected to hold in practice, the proof provided by \citet{goodfellow2014generative} is ``correct in spirit'', and was later formalized and generalized by \citet{donahue2016adversarial}.

\paragraph{Generative adversarial networks through the lens of the manifold hypothesis} The Jensen-Shannon divergence $\mathbb{JS}(p \Mid q)$ is meaningfully defined even when $\mathbb{KL}(p \Mid q) = \infty$. At a first glance this might suggest that optimizing the Jensen-Shannon divergence circumvents issues such as manifold overfitting (\autoref{sec:manifold_overfitting}), which arise from attempting to minimize the KL divergence. However, the gradients of the Jensen-Shannon divergence $\mathbb{JS}(\trueDens \Mid \modelDens)$ with respect to model parameters $\param$ will be $0$ whenever the supports of $\trueDens$ and $\modelDens$ do not overlap (formally, the Jensen-Shannon divergence does not metrize weak convergence). This property makes gradient optimization futile whenever the support of the GAN has no overlap with the underlying data manifold, which \citet{arjovsky2017wasserstein} identify as a cause of training instabilities for GANs. The Jensen-Shannon divergence is a particular instance of an $f$-divergence \citep{polyanskiy2022information}, and there is work generalizing GANs to minimize $f$-divergences \citep{nowozin2016f}. Yet, \citet{arjovsky2017wasserstein} also show that various other $f$-divergences, such as the total variation distance and KL divergence, suffer from similar pathologies as the Jensen-Shannon divergence when there is mismatch between the supports of $\modelDens$ and $\trueDens$. In other words, although GANs do not suffer from manifold overfitting, they can still struggle to model manifold-supported data. It is however worth highlighting that the manifold-related woes of GANs are fundamentally different than those of likelihood-based models: the former use a proper low-dimensional model (whenever $\lDim<\aDim$), and the resulting problems are due only to the optimization objective; whereas the latter are full-dimensional models, and are thus misspecified. Still, GANs can remain topologically misspecified, e.g.\ when $\M$ is disconnected (\autoref{sec:disconnected}), but again, this is an inherently different situation than the dimensional misspecification of likelihood-based models.

\subsection{Score Matching}\label{sec:score_matching}

Score matching \citep{hyvarinen2005estimation} is a method to learn full-dimensional densities $\trueDens$. The main idea is to learn the (Stein) score function, $\nabla_x \log \trueDens$, rather than $\trueDens$ itself.\footnote{While in machine learning $\nabla_x \log \modelDens$ is often called the score function of a model, in the statistics literature the score function refers to $\nabla_\param \log \modelDens$, whereas $\nabla_x \log \modelDens$ is called the Stein score.}  In order to achieve this, a model $\modelDens$ is implicitly characterized by $s_\param^X: \dataspace \rightarrow \dataspace$, whose goal is to approximate the unknown true score function. The Fisher divergence, which is sometimes referred to as the Fisher information distance \citep{dasgupta2008asymptotic}, and which is defined as
\begin{equation}
    \mathbb{F}(p, q) \coloneqq \E_{X \sim p}\left[\Vert \nabla_x \log q(X) - \nabla_x \log p(X) \Vert_2^2\right],
\end{equation}
is leveraged for this goal. Ideally the model would be trained by minimizing $\mathbb{F}(\trueDens, \modelDens)$ as
\begin{equation}\label{eq:score_matching_naive}
    \min_\param \mathbb{E}_{X \sim \trueDens}\left[\Vert s_\param^X(X) - \nabla_x \log \trueDens(X) \Vert_2^2\right],
\end{equation}
but na\"ively doing so requires evaluating the unknown $\nabla_x \log \trueDens$. \citet{hyvarinen2005estimation} showed that, under mild regularity conditions,
\begin{equation}
    \mathbb{F}(p, q) = \mathbb{E}_{X \sim p}\left[\Vert \nabla_x \log q(X) \Vert_2^2 + 2 \tr \nabla_x^2 \log q(X)\right] + c(p),
\end{equation}
where $c(p)$ is a term which depends only on $p$. Since $c(\trueDens)$ is a constant with respect to $\param$, the objective in \autoref{eq:score_matching_naive} is thus equivalent to
\begin{equation}\label{eq:score_matching}
    \min_\param \mathbb{E}_{X \sim \trueDens}\left[\Vert s_\param^X(X)\Vert_2^2 + 2 \tr \nabla_x s_\param^X(X)  \right],
\end{equation}
which can actually be minimized.
Once a model is trained, Markov chain Monte Carlo methods such as Langevin dynamics can be used to sample from it,
similarly to energy-based models (\autoref{sec:ebms}).

\paragraph{Score matching through the lens of the manifold hypothesis} As mentioned above, score matching is derived under the assumption that the underlying data distribution $\trueDens$ is full-dimensional. While score matching has been extended to known manifolds \citep{mardia2016score}, we are not aware of any work theoretically studying dimensional mispecification within score matching in an analogous manner to how \citet{loaiza2022diagnosing} characterize manifold overfitting (\autoref{sec:manifold_overfitting}) within likelihood-based models. 
Nonetheless, we should intuitively expect score matching to fail under the manifold setting due to this dimensional misspecification. To see why this is the case, we begin by noting that $\modelDens$ is indeed full-dimensional since $s_\param^X$ takes inputs from all of $\dataspace$ rather than just $\M$ ($s_\param^X$ is evaluated at potentially any point in $\dataspace$ during sampling when using procedures such as Langevin dynamics). The score functions $s_\param^X$ and $\nabla_x \log \trueDens$ are thus different types of objects -- the former is a full-dimensional score function and the latter is a manifold-supported one. Comparing the values of dimensionally-mismatched densities is not meaningful, and the comparison remains equally meaningless between the corresponding score functions. Consequently, there is no reason to expect \autoref{eq:score_matching_naive} to succeed at matching $\modelDens$ to $\trueDens$ in the presence of dimensional misspecification. 
This issue was identified by \citet{song2019generative}, who empirically confirm that score matching struggles in the manifold setting.

\section{Manifold-Aware Deep Generative Models}\label{sec:manifold-gen}
As covered throughout \autoref{sec:common_dgms}, many commonly-used DGMs struggle to learn distributions on unknown manifolds. There are various (not always mutually exclusive) approaches that enable manifold-awareness, including judiciously adding noise to the target distribution; using support-agnostic optimization objectives (e.g.\ those which metrize weak convergence); and two-step models, which carry out generative modelling on a low-dimensional latent space and then map back to data space. We review these approaches in \autoref{sec:adding_noise}, \autoref{sec:weak_convergence_losses}, and \autoref{sec:two_step}, respectively. In \autoref{sec:two-step-wasserstein} we show that $(i)$ two-step models can be interpreted as (potentially regularized) minimizers of an upper bound of the Wasserstein distance, thus establishing a link between these different approaches for achieving manifold-awareness, and that $(ii)$ the upper bound becomes tight at optimality whenever an autoencoder can achieve perfect reconstructions. Finally, in \autoref{sec:topology} we cover methods which make an explicit attempt at properly capturing the topology of $\M$. We take a lax interpretation of manifold-awareness throughout, and discuss not only DGMs which are formally manifold-aware, but also those which, while mathematically manifold-unaware, leverage some inductive bias towards manifold-awareness.

\subsection{Manifold-Awareness by Adding Noise}\label{sec:adding_noise}

When manifold-unawareness arises due to the mismatch between the dimension of the model and that of the true distribution -- as is the case for likelihood-based models (\autoref{sec:manifold_overfitting}) -- adding noise to the training data seems like a natural solution; this can make the target distribution full-dimensional (e.g.\ by convolving the true distribution with a Gaussian) and thus hopefully avoids manifold-related problems.  Indeed, dequantization -- i.e.\ the practice of adding noise to data that was discretized so as to be able to fit a continuous density model -- is very common \citep{theis2016note, dinh2017density, ho2019flow++}, and can be further justified as a way to avoid manifold overfitting. Unfortunately, it has been shown that just adding Gaussian noise is not enough to empirically avoid manifold overfitting \citep{zhang2020spread, loaiza2022diagnosing, pmlr-v187-loaiza-ganem23a}. Even though the theoretical conditions for manifold overfitting do not hold anymore, the new (noisy) target density will be extremely peaked around $\M$ (\autoref{sec:pathology_ours}), and thus still numerically exposed to manifold-related woes. This observation is consistent with known convergence rates at which DGMs trained on noisy data recover $\trueDens$ \citep{chae2023likelihood}. The lesson here is that adding noise can enable DGMs to learn unknown manifolds, but the noise has to be added carefully. Various methods doing so have been proposed, which we now review.

\subsubsection{Denoising Score Matching}\label{sec:denoising_score_matching}

As previously mentioned, \citet{song2019generative} showed that score matching (\autoref{sec:score_matching}) struggles to model manifold-supported data, and they thus advocate for adding noise and using denoising score matching \citep{vincent2011connection} instead. In denoising score matching, the target distribution is not $\trueDens$ anymore, but rather the distribution $p^{X_\sigma}_\ast$ obtained by adding independent Gaussian noise $\mathcal{N}(\ \cdot \ ; 0, \sigma^2 I_D)$ to samples from $\trueDens$, where $\sigma^2$ is a hyperparameter. More formally, $p^{X_\sigma}_\ast \coloneqq \trueDens \circledast \mathcal{N}(\ \cdot \ ;0, \sigma^2 I_D)$. Importantly, adding full-dimensional Gaussian noise ensures that $p^{X_\sigma}_\ast$ is always full-dimensional, regardless of the support of $\trueDens$. Score matching can then be applied to learn a network $s_\param^X$ to approximate $\nabla_{x_\sigma} \log p^{X_\sigma}_\ast$ through \autoref{eq:score_matching} (with $\trueDens$ replaced by $p_\ast^{X_\sigma}$). However, \citet{vincent2011connection} shows that this objective is equivalent to
\begin{equation}\label{eq:denoising_score_matching_1}
    \min_\theta \mathbb{E}_{X_0 \sim \trueDens}\left[\mathbb{E}_{X_\sigma \sim p_\ast^{X_\sigma | X_0}(\cdot|X_0)}\left[\Vert s_\param^X(X_\sigma) - \nabla_{x_\sigma} \log p_\ast^{X_\sigma | X_0}(X_\sigma|X_0) \Vert_2^2 \right]\right],
\end{equation}
where $p_\ast^{X_\sigma|X_0}(x_\sigma|x_0) = \mathcal{N}(x_\sigma ; x_0, \sigma^2 I_D)$ is the density of noisy data (denoted $X_\sigma$) given the (un-noised) datapoint $X_0=x_0$. \autoref{eq:denoising_score_matching_1} is much easier to optimize than the usual score matching objective (\autoref{eq:score_matching}), since there is no need to backpropagate through the trace of the Jacobian of $s_\param^X$.
\citet{saremi2019neural} use the loss function in \autoref{eq:denoising_score_matching_1} to learn an energy-based model (\autoref{sec:ebms}) on the noised-out data, but derive the loss from the perspective of empirical Bayes \citep{robbins1956empirical}; their approach can in principle be applied to other noising processes, but only the Gaussian case is implemented in their work.

Despite mathematically avoiding manifold-related pathologies, denoising score matching as presented above faces a tradeoff; setting $\sigma$ to a very small value means that the target density $p_\ast^{X_\sigma}$ is closer to the actual manifold-supported data density $\trueDens$, but doing so also means the target density is highly peaked around $\M$ and thus might be harder to properly learn (\autoref{sec:pathology_ours}). As a way of being able to use small amounts of noise while still efficiently learning the resulting distribution, \citet{song2019generative} propose to use various noise levels. More specifically, they consider fixed noise levels $0 < \sigma_1 < \sigma_2 < \dots < \sigma_T$, and modify the score function to take the noise level as input; i.e.\ $s_\param^X:\dataspace \times [\sigma_1, \sigma_T] \rightarrow \dataspace$ is now such that it aims to approximate the score function at all the corresponding noise levels: $s_\param^X(\ \cdot \ , \sigma_t) \approx \nabla_{x_{\sigma_t}} \log p^{X_{\sigma_t}}_\ast$ for $t=1,\dots,T$. This new score function is trained with a weighted sum of the corresponding denoising score matching objectives,
\begin{equation}\label{eq:denoising_score_matching_T}
    \min_\theta \sum_{t=1}^T w(t) \mathbb{E}_{X_0 \sim \trueDens}\left[\mathbb{E}_{X_{\sigma_t} \sim p_\ast^{X_{\sigma_t}|X_0}(\cdot|X_0)}\left[\Vert s_\param^X(X_{\sigma_t}, \sigma_t) - \nabla_{x_{\sigma_t}} \log p^{X_{\sigma_t}|X_0}_\ast(X_{\sigma_t}|X_0) \Vert_2^2 \right]\right],
\end{equation}
where $w(t) > 0$ is a pre-specified weight coefficient which aims to keep the $T$ terms in the sum at roughly equal magnitudes. The intuition behind using varying noise levels is twofold. $(i)$ Learning the score function for larger values of $\sigma$ is easier, and thanks to parameter sharing ($\param$ is the same for all noise levels), doing so is helpful for learning the score function for small values of $\sigma$. 
$(ii)$ Once the model is trained, different noise levels are also used within an annealed sampling scheme. $s_{\param^\ast}^X(\ \cdot \ , \sigma_T)$ is used alongside Markov chain Monte Carlo to generate a sample, which is then used to initialize another Markov chain that now uses $s_{\param^\ast}^X(\ \cdot \ , \sigma_{T-1})$; this process is repeated until $s_{\param^\ast}^X(\ \cdot \ , \sigma_1)$ is used -- and works much better than only using $s_{\param^\ast}^X(\ \cdot \ , \sigma_1)$. Although this scheme produces approximate samples from $p_\ast^{X_{\sigma_1}}$ rather than from $\trueDens$, as long as $\sigma_1$ is small enough, the difference is negligible in practice.

\subsubsection{Score-Based Diffusion Models}\label{sec:diffusion_models}

\citet{song2021score} proposed score-based diffusion models as an extension of denoising score matching (\autoref{sec:denoising_score_matching}) where there is a continuum of noise levels. Formally, they achieve this by constructing $(X_t)_{t\in [0,T]}$, where $X_t \in \dataspace$ for every $t \in [0,T]$, as an Ornstein–Uhlenbeck process given by the It\^o stochastic differential equation (SDE),\footnote{Readers unfamiliar with SDEs can understand \autoref{eq:forward_diffusion} through its Euler-Maruyama discretization: split $[0, T]$ into $n$ sub-intervals of equal length $\Delta_{t, n}=T/n$, sample $X_{0, n} \sim \trueDens$, and set $X_{t_{k+1}, n} = X_{t_k, n} -\tfrac{\beta(t_k)}{2}X_{t_k, n} \Delta_{t, n} + \sqrt{\beta(t_k)} \Delta_{B_{t_k}, n}$ for $k=0,\dots, n-1$, where $t_k = k \Delta_{t, n}$, and where $\Delta_{B_{t_k}, n} = B_{t_{k+1}} - B_{t_{k}} \sim \N(\ \cdot \ ; 0, \Delta_{t, n} I_D)$ are independent. This procedure characterizes $X_{t, n}$ at the times $t_k$ for $k=0,\dots, n$, and linearly interpolating between them yields a continuous stochastic process $(X_{t, n})_{t\in [0, T]}$, the limit of which as $n \rightarrow \infty$ corresponds to the process specified by the SDE.}
\begin{align}\label{eq:forward_diffusion}
\begin{split}
    & \rd X_t = -\dfrac{\beta(t)}{2}X_t\rd t + \sqrt{\beta(t)} \rd B_t,\\
    & X_0 \sim \trueDens,
\end{split}
\end{align}
where $\beta: [0,T] \rightarrow \R_+$ is a hyperparameter (often an affine function, i.e.\ $\beta(t) = \beta_{\text{min}} + (\beta_{\text{max}} - \beta_{\text{min}})t/T$, where $0<\beta_{\text{min}}<\beta_{\text{max}}$), and $(B_t)_{t\in [0,T]}$ denotes a $D$-dimensional Brownian motion. Other choices of SDE are possible, but we focus on the one above -- which is often referred to as a \emph{variance preserving} SDE -- since it is assumed in some of the theoretical results that we will shortly discuss. Under mild regularity conditions, the SDE admits a unique solution \citep{oksendal1998stochastic}, thus characterizing the density $p_\ast^{X_t}$ of $X_t$ for every $t \in [0,T]$, and prescribes how to progressively transform the data density $p_\ast^{X_0} = \trueDens$ into the noisier density $p_\ast^{X_T}$. Note that, due to the added Gaussian noise (from the Brownian motion in \autoref{eq:forward_diffusion}), $p^{X_t}_\ast$ is a full-dimensional density for every $t \in (0,T]$ \emph{regardless of the support} of $\trueDens$.

Reversing $(X_t)_{t\in [0,T]}$ provides a way to transform samples from $p_\ast^{X_T}$ into samples from $\trueDens$. The reverse process $(Y_t)_{t \in [0,T]} \coloneqq (X_{T-t})_{t\in [0,T]}$ also obeys an SDE \citep{anderson1982reverse, haussmann1986time}:\footnote{Note that the Brownian motions in \autoref{eq:forward_diffusion} and \autoref{eq:backward_diffusion} are not in general the same Brownian motion (they just have the same distribution), but we do not differentiate between them for notational simplicity. 
Note also that \autoref{eq:backward_diffusion} differs from the corresponding equation in \citep{song2021score} since $\rd t$ in our notation corresponds to $-\rd t$ in theirs.}
\begin{align}\label{eq:backward_diffusion}
\begin{split}
    & \rd Y_t = \beta(T-t)\left(\dfrac{Y_t }{2}+ \nabla_{y_t} \log p_\ast^{X_{T-t}}(Y_t)\right)\rd t + \sqrt{\beta(T-t)}\rd B_t,\\
    & Y_0 \sim p_\ast^{X_T}.
\end{split}
\end{align}
The main idea of score-based diffusion models is to leverage this reverse SDE to build a generative model. In order to achieve this, some approximations are needed. First, since $p_\ast^{X_T}$ is not known exactly, \autoref{eq:backward_diffusion} is initialized at a known distribution $p_\ast^{X_\infty}$. Formally, $(Y_t)_{t \in [0,T]}$ is approximated by $(\tilde{Y}_t)_{t \in [0,T]}$, where
\begin{align}\label{eq:backward_diffusion_approx}
\begin{split}
    & \rd \tilde{Y}_t = \beta(T-t)\left(\dfrac{\tilde{Y}_t}{2} + \nabla_{\tilde{y}_t} \log p_\ast^{X_{T-t}}(\tilde{Y}_t)\right)\rd t + \sqrt{\beta(T-t)}\rd B_t,\\
    & \tilde{Y}_0 \sim p_\ast^{X_\infty}.
\end{split}
\end{align}
We denote the density of $\tilde{Y}_t$ as $\tilde{p}^{X_{T-t}}$, and will shortly explain how $p_\ast^{X_\infty}$ is chosen so as to be close to $p_\ast^{X_T}$. The score $\nabla_{\tilde{y}_t} \log p_\ast^{X_{T-t}}(\tilde{Y}_t)$ is also unknown, and thus must be approximated as well. Score-based diffusion models leverage neural networks to construct $s_\param^X: \dataspace \times (0,T] \rightarrow \dataspace$ with the goal of approximating this function, i.e.\ $s_\param^X(x, t) \approx \nabla_x \log p_\ast^{X_t}(x)$ for all $x \in \dataspace$ and $t \in (0,T]$. We will also soon explain how this network is trained, but for a given $s_\param^X$, $(\tilde{Y}_t)_{t \in [0,T]}$ is approximated by $(\hat{Y}_t)_{t \in [0, T]}$, where
\begin{align}\label{eq:backward_diffusion_model}
\begin{split}
    & \rd \hat{Y}_t = \beta(T-t) \left(\dfrac{\hat{Y}_t}{2} + s_\param^X(\hat{Y}_t, T-t)\right)\rd t + \sqrt{\beta(T-t)}\rd B_t,\\
    & \hat{Y}_0 \sim p_\ast^{X_\infty}.
\end{split}
\end{align}
We denote the density of $\hat{Y}_t$ as $\hat{p}_{\param}^{X_{T-t}}$. Ideally, a model sample would be obtained by perfectly solving \autoref{eq:backward_diffusion_model}. In practice this SDE must be discretized and a numerical solver must be used. The model distribution $\modelDens$ is thus given by the approximate solution of \autoref{eq:backward_diffusion_model} at time $T$.

In summary, diffusion models aim to solve \autoref{eq:backward_diffusion}, since perfectly doing so would yield samples from $\trueDens$, but this is impossible and three sources of error have to be introduced to approximately solve this equation. $(i)$ $p^{X_T}_\ast$ is unknown, and is thus approximated by $p^{X_\infty}_\ast$; $(ii)$ the true score $\nabla_x \log p^{X_{T-t}}_\ast(x)$ is also unknown, and is thus approximated by $s_\param^X(x, T-t)$; and $(iii)$ the resulting SDE in \autoref{eq:backward_diffusion_model} must be solved numerically, inducing discretization error.

We have not yet discussed how diffusion models are trained. Before doing so, we point out that \autoref{eq:forward_diffusion} has the known transition kernel
\begin{equation}\label{eq:diffusion_conditional}
    p^{X_t|X_0}_\ast(x_t|x_0) = \mathcal{N}\left(x_t; \sqrt{1 - \sigma_t^2} x_0, \sigma_t^2 I_D\right),
\end{equation}
where $p^{X_t|X_0}_\ast(\cdot|x_0)$ is the conditional density of $X_t$ given that $X_0=x_0$, and where
\begin{equation}
    \sigma_t^2 = 1 - e^{-\int_0^t \beta(s) \rd s}.
\end{equation}
Thanks to \autoref{eq:diffusion_conditional}, $\nabla_{x_t} \log p^{X_t|X_0}_\ast(x_t|x_0)$ can be evaluated, and sampling from $p^{X_t|X_0}_\ast(\cdot|x_0)$ is very straightforward. Together, these points imply that denoising score matching (\autoref{sec:denoising_score_matching}) provides a tractable objective for training diffusion models:\footnote{While the target conditional densities in \autoref{eq:denoising_score_matching_T} and \autoref{eq:score_matching_diffusion} -- i.e.\ $\mathcal{N}(x_{\sigma_t}; x_0, \sigma^2_t I_D)$ and $\mathcal{N}(x_{t}; \sqrt{1-\sigma^2_t}x_0, \sigma^2_t I_D)$, respectively -- differ in that the mean of the latter is scaled by $\sqrt{1-\sigma^2_t}$, the result of \citet{vincent2011connection} which justifies denoising score matching can be easily adapted to this scaled setting, so that \autoref{eq:score_matching_diffusion} is minimized when $s_\param^X(\ \cdot \ , t)$ matches the true score function $\nabla_{x_t} \log p^{X_t}_\ast$.}
\begin{equation}\label{eq:score_matching_diffusion}
    \min_\param \int_0^T w(t) \mathbb{E}_{X_0 \sim \trueDens}\left[\mathbb{E}_{X_t \sim p^{X_t|X_0}_\ast(\cdot|X_0)}\left[\Vert s_\param^X(X_t, t) - \nabla_{x_t} \log p^{X_t|X_0}_\ast(X_t|X_0) \Vert_2^2\right]\right]\rd t,
\end{equation}
where $w:[0,T] \rightarrow \R_+$ is a weighting function (set as a hyperparameter). 

Additionally, as long as $\beta$ is such that $\sigma_T^2 \rightarrow 1$ as $T \rightarrow \infty$, \autoref{eq:diffusion_conditional} also implies that $p^{X_T|X_0}_\ast(\cdot|x_0)$ stops depending on $x_0$ in the sense that it converges to $\N(\ \cdot \ ; 0, I_D)$ as $T \rightarrow \infty$: this observation provides an avenue for approximately sampling from $p^{X_T}_\ast$, namely by setting $p^{X_\infty}_\ast$ to a standard Gaussian.

Finally, \citet{song2021score} also show that the SDE in \autoref{eq:forward_diffusion} is intimately linked to the ODE
\begin{align}\label{eq:diffusion_forward_ode}
\begin{split}
    & \rd x_t = -\dfrac{\beta(t)}{2}\left(x_t + \nabla_{x_t} \log p_\ast^{X_t}(x_t)\right)\rd t,\\
    & x_0 \in \dataspace.
\end{split}
\end{align}
These equations are related in that, under some regularity conditions, if the ODE is initialized at $x_0 = X_0 \sim \trueDens$, then $x_T$ will have the same distribution as $X_T$. \autoref{eq:diffusion_forward_ode} can of course not be solved because the true score function is unknown, but it can be approximated by replacing it with the learned score function, resulting in the new ODE:
\begin{align}\label{eq:diffusion_forward_ode_approx}
\begin{split}
    & \rd \hat{x}_t = -\dfrac{\beta(t)}{2}\left(\hat{x}_t + s_{\param^\ast}^X(\hat{x}_t, t)\right)\rd t,\\
    & \hat{x}_0 \in \dataspace.
\end{split}
\end{align}
This equation allows us to interpret diffusion models as continuous normalizing flows (\autoref{sec:nfs}): $v_{\param^\ast}(x,t)$ in \autoref{eq:cont_nf_ode} is given by $-\tfrac{\beta(t)}{2}(x + s_{\param^\ast}^X(x, t))$. The connection between diffusion models and continuous NFs has two relevant consequences. $(i)$ It allows for an alternative way of sampling from them: instead of solving \autoref{eq:backward_diffusion_model}, the ODE in \autoref{eq:diffusion_forward_ode_approx} can be reversed in time as in \autoref{eq:cont_nf_reverse} to obtain
\begin{align}\label{eq:diffusion_backward_ode_approx}
\begin{split}
    & \rd \hat{y}_t = \dfrac{\beta(T-t)}{2}\left(\hat{y}_t + s_{\param^\ast}^X(\hat{y}_t, T-t)\right)\rd t,\\
    & \hat{y}_0 \in \dataspace.
\end{split}
\end{align}
Initializing this ODE at $\hat{y}_0 = \hat{Y}_0 \sim p_\ast^{X_\infty}$ (which plays the role of $\latentDens$ in continuous NFs) and solving it will then result in samples from $\trueDens$ if the score function was properly learned, and provided that \autoref{eq:diffusion_forward_ode_approx} and \autoref{eq:diffusion_backward_ode_approx} admit unique solutions which are inverses of each other. $(ii)$ The connection to continuous NFs is also used to justify using the change-of-variables formula (\autoref{eq:change_of_variable_cnf}) for evaluating the density $\hat{p}^{X_0}_{\param}$ implicitly defined by $s_{\param}^X$.

\paragraph{Diffusion models through the lens of the manifold hypothesis} There are deep connections between diffusion models and manifolds. First, score-based diffusion models are linked to maximum-likelihood. For example, \citet{sohl2015deep} and \citet{ho2020denoising} formulate diffusion models not through SDEs, but through variational inference. In this formulation, the time interval $[0,T]$ is discretized, $X_0$ still corresponds to data, and $X_t$ for $t>0$ is treated as a latent variable. Then, the (discretized) forward process from \autoref{eq:forward_diffusion} corresponds to a fixed variational approximation to the posterior distribution (of latents given data), and the backward process from \autoref{eq:backward_diffusion_model} provides a likelihood term (of data given latents). The resulting model, which is reminiscent of a variational autoencoder (\autoref{sec:vaes}), can be trained either by maximizing an ELBO (similar to \autoref{eq:elbo_obj}) or a reweighted version of it. \citet{song2021maximum} establish another connection between score-based diffusion models and maximum-likelihood: under the assumption that $\trueDens$ is a full-dimensional density (which does not hold in the manifold setting) and some other regularity conditions, the denoising score matching objective from \autoref{eq:score_matching_diffusion}, with a specific choice of weighting function $w$, becomes equivalent to minimizing an upper bound of $\mathbb{KL}(\trueDens \Mid \hat{p}_{\param}^{X_0})$ which becomes tight at optimality.\footnote{It is worthwhile to highlight that \citet{kwon2022score} proved a similar result with Wasserstein distance (\autoref{sec:wasserstein}), namely that the denoising score matching objective used to train diffusion models provides, up to scaling and constant factors, an upper bound of the $\mathbb{W}_2$ distance between the model and $\trueDens$ which becomes tight at optimality. Unfortunately, \citet{kwon2022score} also assume $\trueDens$ is full-dimensional, and thus their result cannot be used to justify the manifold-awareness of diffusion models.} In other words, when $\trueDens$ is full-dimensional and for a particular choice of $w$, diffusion models are likelihood-based models (provided that one ignores the approximation error between $p^{X_T}_\ast$ and $p^{X_\infty}_\ast$, and the discretization error, but intuitively both of these errors can be made small by choosing a large $T$ and making the discretization sufficiently fine, respectively).

Na\"ively, both of these views of diffusion models suggest that they are a likelihood-based method and thus susceptible to manifold overfitting (\autoref{sec:manifold_overfitting}). 
However, a competing intuition is that the denoising carried out through the backward SDE (\autoref{eq:backward_diffusion} or its approximation \autoref{eq:backward_diffusion_model}) effectively projects noisy data onto $\M$ by removing the noise \citep{kadkhodaie2021stochastic}. Fortunately, it is the latter intuition that turns out to be correct: note that diffusion models do not only aim to learn the true manifold-supported data density $\trueDens = p^{X_0}_\ast$, but also its noisy full-dimensional versions $p^{X_t}_\ast$ for every $t \in (0,T]$. 
Indeed, the objective in \autoref{eq:score_matching_diffusion} recovers the score function for all $t \in (0,T]$, and stopping the reverse process from \autoref{eq:backward_diffusion} at time $T-t$ instead of $T$ results in a sample from $p^{X_t}_\ast$. Since $p^{X_t}_\ast$ is full-dimensional for $t > 0$, we should not expect maximum-likelihood to fail at learning it, and by continuity of the solutions of \autoref{eq:backward_diffusion}, we should expect $p^{X_0}_\ast$ to be properly learned, \emph{even under the manifold setting}. \citet{pidstrigach2022score} formalizes this intuition, proving that under mild assumptions, $\tilde{p}^{X_0}$ and $\trueDens$ have the same support, and that $\mathbb{KL}(\trueDens \Mid \tilde{p}^{X_0}) \leq \mathbb{KL}(p^{X_T}_\ast \Mid p^{X_\infty}_\ast)$.\footnote{Note that even though $\trueDens$ and $\tilde{p}^{X_0}$ are not full-dimensional densities, the KL divergence between them is meaningfully defined because they have the same support (\autoref{sec:kl}).} 
Since $\tilde{p}^{X_0}$ is the distribution of the model under the assumptions of no discretization error and having perfectly recovered the true score function (justified by the nonparametric regime assumption from \autoref{sec:setup}), the fact that $\KL(p^{X_T}_\ast \Mid p^{X_\infty}_\ast) \rightarrow 0$ as $T \rightarrow \infty$ then implies that perfectly trained score-based diffusion models properly learn $\trueDens$. 
\citet{pidstrigach2022score} also shows that if the score function is well approximated, in the sense that $\Vert s_{\param^\ast}^X(x, t) - \nabla_{x} \log p^{X_t}_\ast(x)\Vert_2$ is upper-bounded (with the bound not depending on $x$ nor $t$), then $\hat{p}_{\param^\ast}^{X_0}$ has the same support as $\trueDens$, i.e.\ the model (assuming no discretization error) has the same support as the data. 
Although this result does not guarantee that diffusion models recover their target distribution, it does ensure that they recover the correct manifold, even if there is some error in the learned score function. \citet{debortoli2022convergence} further refined these results, finding an upper bound for $\W_1(\trueDens, \ambientDens_{\param^\ast})$ under reasonable assumptions (recall that $\modelDens$ corresponds to the approximate solution of \autoref{eq:backward_diffusion_model} at time $T$). This upper bound depends on $T$, the error between the true and modelled score functions, and the step size of the discretization used to solve \autoref{eq:backward_diffusion_model}; the upper bound goes to $0$ as these quantities go to $\infty$, $0$, and $0$ at appropriate rates, respectively. All of this entails that score-based diffusion models can learn distributions on unknown manifolds. 

\begin{figure}[t]
    \centering
    \subfigure[]{
    \label{fig:exploding_score}
    \centering
    \includegraphics[width=0.47\columnwidth, trim={310, 123, 30, 12}, clip]{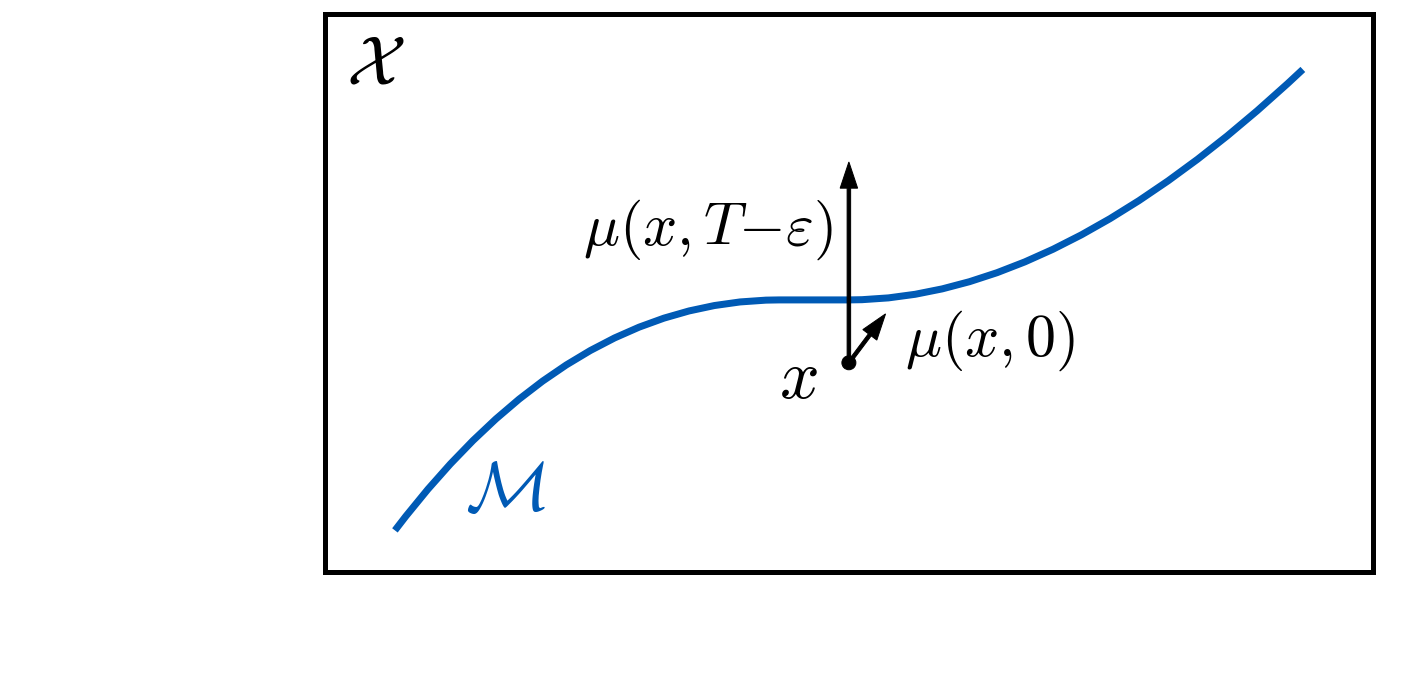}
    }
    \subfigure[]{
    \label{fig:exploding_score2}
    \centering
    \includegraphics[width=0.47\columnwidth, trim={310, 123, 30, 12}, clip] {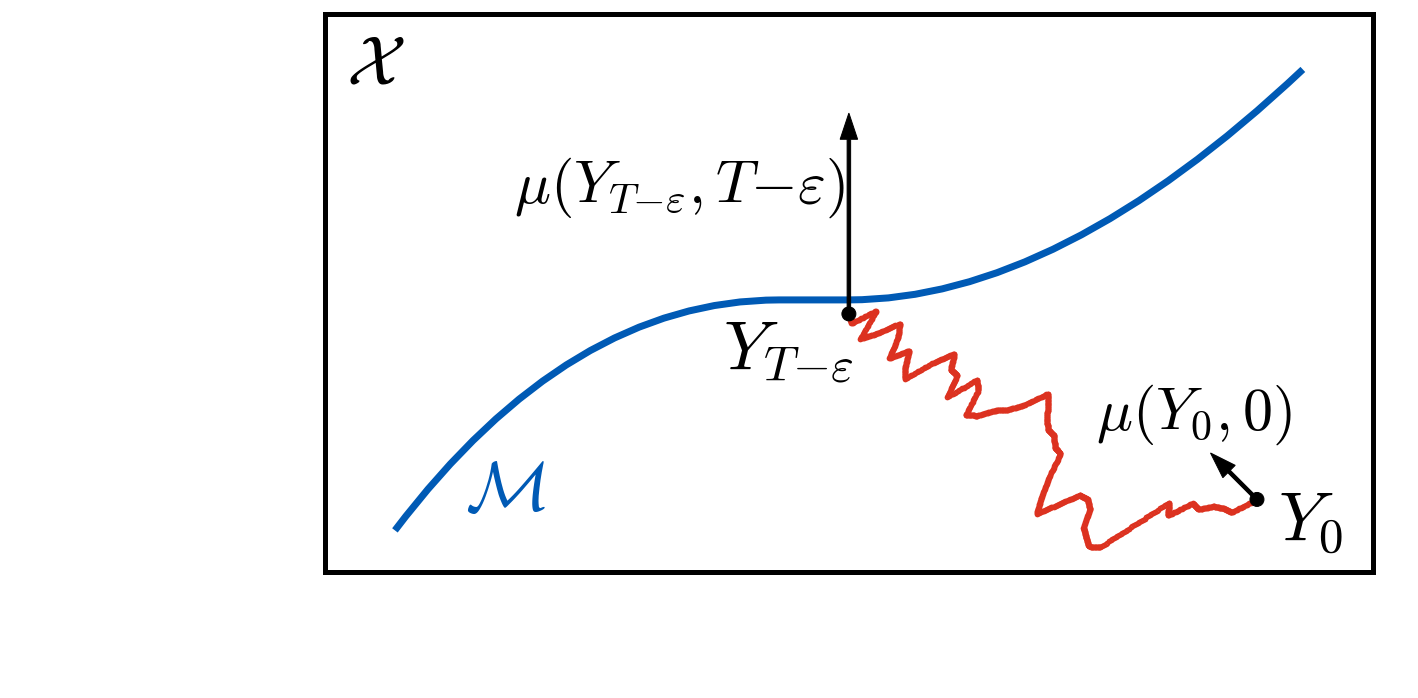}
    }
    \caption{\textbf{(a)} Informal illustration of why the score function explodes. Consider $Y_{t}=x$ for some fixed $x$ outside of $\M$ as $t$ increases from $0$ to $T$. Since diffusion models learn manifolds, $Y_T$ must be in $\M$. Thus, when $t$ gets ``infinitesimally'' close to $T$, the diffusion must push $x$ onto $\M$ by moving it in the direction of the drift term in \autoref{eq:backward_diffusion}, i.e.\ $\mu(Y_t, t) \coloneqq \beta(T-t)(Y_t/2 + \nabla_{y_t} \log p_\ast^{X_{T-t}}(Y_t))$, but using only an ``infinitesimally'' small step size. Since $x$ is not ``infinitesimally'' close to $\M$, it needs to move a ``non-infinitesimal'' amount in an ``infinitesimal'' time step: the only way in which this can happen is if the norm of the drift, and thus of the score as well, explodes to infinity. \textbf{(b)} In practice, the norm of the score function diverges not only for fixed points ($\Vert s_{\param^\ast}^X(x, T-t) \Vert_2 \rightarrow \infty$), but also along generated trajectories ($\Vert s_{\param^\ast}^X(Y_t, T-t) \Vert_2 \rightarrow \infty$).}
\end{figure}

\citet{pidstrigach2022score} also argues that to properly learn the manifold the score $\nabla_x \log p^{X_{T-t}}_\ast(x)$ must explode to infinity at time $T$, i.e.\ $\Vert \nabla_x \log p^{X_{T-t}}_\ast(x) \Vert_2 \rightarrow \infty$ as $t \rightarrow T^-$ for every $x$ outside of $\M$. This behaviour, which we illustrate in \autoref{fig:exploding_score}, is easy to understand intuitively: \autoref{eq:backward_diffusion} results in $Y_T \sim \trueDens$, which must be in $\M$ because, as discussed in the previous paragraph, diffusion models learn manifolds. 
Now, fix $x$ outside of $\M$. 
Since $Y_t$ is full-dimensional for every $t < T$, it follows that $x$ is in its support, so that $x$ is a possible value for $Y_t$. This means that when $t$ is ``infinitesimally close'' to $T$, the norm of the drift in \autoref{eq:backward_diffusion}, i.e.\ $\beta(T-t) \Vert x/2 + \nabla_{x} \log p^{X_{T-t}}_\ast(x)\Vert_2$, must be ``infinitely large'' to move $x$ to $\M$, since $x$ is outside of $\M$. In other words, the score must explode to infinity. It follows that if $s_{\param^\ast}^X(x, T-t)$ closely matches $\nabla_x \log p^{X_{T-t}}_\ast(x)$, it must also diverge as $t \rightarrow T^-$.  
\citet{lu2023mathematical} formalize this intuition, showing that under some regularity conditions, not only does the score function explode when $\mDim<\aDim$, but also that it does not when $\mDim=\aDim$. This phenomenon justifies an often used parameterization of score functions: rather than directly taking $s_\param^X$ as a neural network (which would be continuous on the compact interval $[0,T]$ and would thus achieve its maximum, making it impossible for the function to diverge and approximate the true score function), $s_\param^X$ is taken as $s_\param^X(x,t) = \hat{s}_\param^X(x, t) / \sigma_t$, where $\hat{s}_\param^X$ is the neural network. 
Empirically, it has been observed that diffusion models produce trajectories with exploding score functions (\autoref{fig:exploding_score2}), i.e.\ that $\Vert s^X_{\param^\ast}(Y_t, T-t)\Vert_2 \rightarrow \infty$ as $t \rightarrow T^-$ \citep{kim2022soft}, which explains the common practice of not running \autoref{eq:backward_diffusion_model} until time $T$, and instead stopping at time $T-\varepsilon$ for some small $\varepsilon > 0$ \citep{vahdat2021score}. 
The results of \citet{pidstrigach2022score} and \citet{lu2023mathematical} showing that the score is unbounded in the manifold setting thus strongly hint at why these trajectories have exploding scores.\footnote{Note that the result that $\Vert s^X_{\param^\ast}(x, T-t) \Vert_2 \rightarrow \infty$ as $t \rightarrow T^-$ for every $x$ outside of $\M$ does not formally imply that $\Vert s^X_{\param^\ast}(Y_t, T-t) \Vert_2 \rightarrow \infty$ with probability $1$, even if $Y_t$ is outside of $\M$ for $t<T$, since $Y_t$ converges to $Y_T \in \M$. Nevertheless the score function exploding at fixed points shows it is unbounded, which is necessary for the trajectories to blow up as well.}

We also highlight that there is no contradiction between diffusion models being able to learn distributions on unknown manifolds and the fact that they can be interpreted as continuous normalizing flows (which cannot do so because of manifold overfitting, see \autoref{sec:manifold_overfitting}); the former are trained through denoising score matching instead of maximum-likelihood. We note as well that the function $v_{\param^\ast}(x,t)=-\tfrac{\beta(t)}{2}(x + s_{\param^\ast}^X(x, t))$ which allows us to interpret diffusion models as continuous NFs through the ODE in \autoref{eq:cont_nf_ode} is not Lipschitz in $t$ when the score function blows up to infinity at time $0$. In turn, there is no immediate guarantee that \autoref{eq:diffusion_forward_ode_approx} has a unique solution, so that diffusion models need not be properly defined as continuous NFs; this is of course consistent with the numerical instabilities which render their likelihoods unreliable (\autoref{sec:pathology_ours} and \autoref{sec:nfs}).

Overall, we believe this view of score-based diffusion models through the manifold setting is particularly illuminating, as it justifies their strong empirical performance (unlike many other popular models, diffusion models can properly learn manifold-supported distributions) and a popular parameterization of the score function, while also explaining its exploding behaviour at time $0$. 

Lastly, we highlight that the progressive noising of data employed by diffusion models (\autoref{eq:forward_diffusion}, or a discretized version) has been adopted in contexts beyond denoising score matching: \citet{xiao2022tackling} combine it with adversarial training (\autoref{sec:gans} and \autoref{sec:wgans}), and \citet{tran2023one} leverage it for manifold-aware maximum-likelihood training of DGMs.

\begin{tcolorbox}[breakable, enhanced jigsaw,width=\textwidth]
    Here we briefly formalize some of the conclusions of \citet{pidstrigach2022score} discussed above. The statement that $\tilde{p}^{X_0}$ and $\trueDens$ have the same support is formalized as $\tilde{\P}^{X_0} \ll \trueMeas$ and $\trueMeas \ll \tilde{\P}^{X_0}$, where $\tilde{\P}^{X_0}$ is the probability measure corresponding to $\tilde{p}^{X_0}$. Note that this statement implies that $\tilde{\P}^{X_0}$ and $\trueMeas$ must have the same sets of measure $0$, and thus of measure $1$ as well, and since the support of a distribution depends only on the sets to which it assigns probability $1$ (\autoref{eq:def_support}), these two probability measures must have the same support. Similarly, the statement that $\hat{p}^{X_0}_{\theta^\ast}$ and $\trueDens$ have the same support is formalized as $\hat{\P}^{X_0}_{\theta^\ast} \ll \trueMeas$ and $\trueMeas \ll \hat{\P}^{X_0}_{\theta^\ast}$, where $\hat{\P}^{X_0}_{\theta^\ast}$ is the probability measure corresponding to $\hat{p}^{X_0}_{\theta^\ast}$.
\end{tcolorbox}

\subsubsection{Conditional Flow Matching}\label{sec:flow_matching}

Continuous normalizing flows (\autoref{sec:nfs}) as defined through \autoref{eq:cont_nf_ode} were originally trained through maximum-likelihood. As discussed in \autoref{sec:diffusion_models}, diffusion models can be interpreted as providing an alternative training objective for continuous NFs. 
Conditional flow matching \citep[CFM;][]{liu2023flow, albergo2023building, lipman2023flow} provides additional alternatives, the main variant of which we cover here.

Recall that diffusion models parameterize and learn a vector field ($s_\param^X$, approximating the score function) by attempting to regress against the true score,
\begin{equation}\label{eq:uncond_reg_diff}
    \min_\param \int_0^T w(t) \E_{X_t \sim p_\ast^{X_t}}[\Vert s_\param^X(X_t, t) - \nabla_{x_t} \log p_\ast^{X_t}(X_t) \Vert_2^2] \rd t,
\end{equation}
where we will assume $w(t)=1$ to better highlight their similarities to CFM. 
As discussed in \autoref{sec:score_matching} and \autoref{sec:denoising_score_matching}, directly optimizing \autoref{eq:uncond_reg_diff} cannot be done in practice because $p_\ast^{X_t}$ is unknown. However, by conditioning on $X_0$, the equivalent -- yet tractable -- objective in \autoref{eq:score_matching_diffusion} is obtained. 

Consider a ``true'' vector field $v_\ast: \dataspace \times [0, T] \rightarrow \dataspace$ in the sense that its corresponding forward ordinary differential equation (\autoref{eq:cont_nf_ode}) maps samples from $\trueDens$ at time $t=0$ to samples from $\latentDens$ at $t=T$. Note that even when such a vector field exists, it is not unique. 
Similarly to diffusion models, CFM learns a vector field $v_\param$ by attempting to regress against $v_\ast$, i.e.\
\begin{equation}\label{eq:uncond_reg_fm}
    \min_\param \int_0^T \E_{X_t \sim p_\ast^{X_t}}[\Vert v_\param(X_t, t) - v_\ast(X_t, t) \Vert_2^2] \rd t,
\end{equation}
where $p_\ast^{X_t}$ now corresponds to the density of $x_t$ if $\rd x_t = v_\ast(x_t, t) \rd t$ is initialized at $x_0 = X_0 \sim \trueDens$. 
Again, \autoref{eq:uncond_reg_fm} cannot be directly optimized because $v_\ast$, and hence also $p_\ast^{X_t}$, are unknown. Conditioning is once again the solution, except now conditioning is done on both $X_0$ and $X_T$ which allows for explicit construction of a target vector field that maps $X_0$ to $X_T$. There are many such vector fields, but the simplest is the vector field $(X_T - X_0)/T$ \citep{tong2023improving}. Since this vector field is time-independent, sampling from the corresponding $p_\ast^{X_t}$ becomes easy as one can take $X_t = ((T-t)X_0 + tX_T)/T$, where $X_0 \sim \trueDens$ and $X_T \sim \latentDens$ are independently sampled. In summary, $v_\param$ is trained through
\begin{equation}\label{eq:cond_reg_fm}
    \min_{\param} \int_0^T \mathbb{E}_{X_0 \sim \trueDens, X_T \sim \latentDens}\left[ \left\Vert v_\param\left(\dfrac{(T-t)X_0 + tX_T}{T}, t\right) - \dfrac{X_T - X_0}{T} \right\Vert_2^2\right] \rd t.
\end{equation}
Remarkably, when $\trueDens$ and $\latentDens$ are both full-dimensional, and under some other regularity conditions, the conditional optimization problem in \autoref{eq:cond_reg_fm} turns out to be equivalent to the unconditional optimization problem in \autoref{eq:uncond_reg_fm} for a particular $v_\ast$. Hence, under these conditions, the vector field $v_{\param^\ast}$ optimized under \autoref{eq:cond_reg_fm} can still be used for generation with the reverse ODE in \autoref{eq:cont_nf_reverse}.

\paragraph{Conditional flow matching through the lens of the manifold hypothesis} We begin by pointing out that \citet{lipman2023flow} add a small amount of noise to the data while applying CFM, which enforces full-dimensionality of the target density. However, if no noise is added, the known correctness guarantees for CFM break down when $\trueDens$ is supported on a low-dimensional manifold. \citet{kingma2023understanding} proved that, surprisingly, the objectives in \autoref{eq:score_matching_diffusion} for diffusion models and \autoref{eq:cond_reg_fm} for CFM are equivalent given appropriate hyperparameter choices. Since diffusion models are manifold-aware, at first glance this result might seem to imply that CFM must also learn distributions on unknown manifolds. However, this does \emph{not} follow from the result of \citet{kingma2023understanding}: the manifold-awareness of diffusion models is guaranteed when the backward stochastic differential equation (\autoref{eq:backward_diffusion_model}) is used to sample from the model, whereas CFM always uses an ODE (\autoref{eq:cont_nf_reverse}).\footnote{In turn, the result of \citet{kingma2023understanding} does ensure that if the vector field learned through CFM is converted back to a score function and used alongside the backward SDE for sampling, then the resulting model will be manifold-aware. However, this is not how CFM is sampled from in practice. Note also that any result ensuring that \autoref{eq:diffusion_backward_ode_approx} can indeed produce samples from $\trueDens$ in the manifold setting would in turn guarantee the manifold-awareness of CFM.} To the best of our knowledge, there is currently no formal analysis of CFM under the manifold hypothesis analogous to that of \citet{pidstrigach2022score} or \citet{debortoli2022convergence} for diffusion models.\footnote{The closest analysis we are aware of is due to \citet{gao2024convergence}, who provide a Wasserstein distance bound for the ODE sampler. However, this bound requires assumptions which are incompatible with the manifold setting, and thus does not ensure manifold-awareness. This stands in contrast with the analogous bound for the SDE sampler by \citet{debortoli2022convergence}, which does guarantee manifold-awareness.} Nonetheless, due to its similarities with diffusion models, we conjecture that CFM is indeed manifold-aware, which would be consistent with its strong empirical performance. We do however point out that if CFM successfully learns its target distribution in the manifold setting, then it must experience the numerical instabilities of likelihood evaluation that we established in \autoref{sec:pathology_ours}, even if these instabilities do not manifest themselves during training since likelihoods do not appear in \autoref{eq:cond_reg_fm}. Finally, we also highlight the work of \citet{kapusniak2024metric}, who point out that $X_t$ linearly interpolating between data $X_0$ and noise $X_T$ has some undesirable properties; they thus propose a modification of CFM with the goal of ensuring that the interpolations remain close to geodesics in $\M$, which they show leads to empirical improvements.

\subsubsection{Noisy Normalizing Flows} \label{sec:noisy_nfs}

Several models have been proposed based on the idea of adding noise to the data, training a normalizing flow (\autoref{sec:nfs}) on this noisy data, and then having a ``deflation'' procedure whose goal is to sample from $\trueDens$ rather than its learned noisy version. One such model is the denoising NF of \citet{horvat2021denoising}, which itself is heavily based on the theoretical results of \citet{horvat2023density}.\footnote{\citet{horvat2023density} was released first on arXiv despite having a later publication date than \citet{horvat2021denoising}.}
Motivated by the inherent struggles of flow-based architectures on manifold-supported data (described in \autoref{sec:manifold_overfitting} and \autoref{sec:nfs}), denoising NFs attempt to model an ``inflated'' version of the data distribution with added noise, and then provide conditions under which this noise can be ``deflated'' to recover the true on-manifold density. 
Similar to denoising score matching (\autoref{sec:denoising_score_matching}), the inflation step of \citet{horvat2021denoising} consists of simply adding full dimensional Gaussian noise to the data, and building a density model for $p^{X_\sigma}_\ast \coloneqq \trueDens \circledast \mathcal N(\, \cdot \, ; 0, \sigma^2 I_\aDim)$, where $\sigma > 0$ is a hyperparameter.
On the other hand, \citet{horvat2023density} discuss a more theoretically-grounded inflation step, where $(\aDim - \mDim)$-dimensional Gaussian noise is added to $X \sim \trueDens$ along the 
\emph{normal space} of $\M$ at $X$.\footnote{``Normal'' in the geometric sense, not the Gaussian sense.} 
While determining the normal space is only possible for known manifolds, the authors argue that when $\mDim$ is much smaller than $\aDim$, using full-dimensional Gaussian noise is a good approximation of $(\aDim - \mDim)$-dimensional noise in the normal space. 
We will shortly review the deflation step.

To model $p^{X_\sigma}_\ast$, \citet{horvat2021denoising} use a full-dimensional DGM $p^{X_\sigma}_\param$.
Their specific choice is a $\aDim$-dimensional NF, albeit with a non-standard learnable latent distribution $\latentDens_\param$. For some hyperparameter $\lDim$ meant to approximate $\mDim$, the first $\lDim$ latent coordinates are themselves modelled by a $\lDim$-dimensional NF, and the remaining $\aDim - d$ latent coordinates use a zero-mean  Gaussian with covariance $\sigma^2 I_{\aDim - \lDim}$, matching the noise level of the inflation step above.
The overall likelihood for their full-dimensional model is thus
\begin{equation} \label{eq:dnf_likelihood}
    p^{X_\sigma}_\param(x) = p_\theta^{Z_1}(z_1) p^{Z_2}(z_2) \left| \det \nabla_x \enc_\theta(x) \right|,
\end{equation}
where: $\enc_\param = \decoder^{-1}$ with $\decoder$ being the $\aDim$-dimensional NF, $z = (z_1, z_2) = \enc_\param(x)$ with $z_1 \in \R^\lDim$ and $z_2 \in \R^{\aDim - \lDim}$, and the latent density $\latentDens_\param$ is given by $\latentDens_\param(z)= p_\param^{Z_1}(z_1)p_\param^{Z_2}(z_2)$, where $p_\param^{Z_1}$ is the density corresponding to the $d$-dimensional NF model, and $p^{Z_2}(\cdot) = \mathcal N(\, \cdot \, ; 0, \sigma^2 I_{\aDim-d})$ with no trainable parameters. 
The idea is that the first $d$ latent coordinates are noise-insensitive -- i.e.\ they are meant to \emph{denoise} the data -- while the remaining $\aDim - d$ coordinates are noise-sensitive. The model $p^{X_\sigma}_\param$ defined above has no explicit manifold-awareness. 
\citet{horvat2021denoising} thus add an injective flow construction (reviewed further in \autoref{sec:infs}) into the mix, defining the actual (denoised) manifold-supported generative process $\modelDens$ by first sampling $Z_1 \sim p^{Z_1}_\param$ and then setting $X = \decoder((Z_1, 0))$, where $0 \in \R^{\aDim - \lDim}$ and $(\cdot,\cdot)$ denotes concatenation so that $(Z_1, 0) \in \R^D$. \citet{horvat2021denoising} refer to using the manifold-supported $\modelDens$ rather than the full-dimensional $p^{X_\sigma}_\param$ as ``deflating'' $p^{X_\sigma}_\param$.

\citet{horvat2021denoising} also follow injective flows (\autoref{eq:obj_rnfs}) in adding a reconstruction term as a regularizer to encourage manifold-awareness of the overall objective, which can be written as 
\begin{equation} \label{eq:obj_dnfs}
    \max_\param \Exp_{X_\sigma \sim p_\ast^{X_\sigma}} \left[\log p_\param^{X_\sigma}(X_\sigma)\right] - \beta \Exp_{X_\sigma \sim p_\ast^{X_\sigma}} \left[ \Vert X_\sigma - \decoder \left(\left(\enc_\param(X_\sigma)_1, 0\right)\right)\Vert_2^2 \right],
\end{equation}
where $\beta > 0$ is a hyperparameter, $\enc_\param(X_\sigma)_1 \in \R^\lDim$ corresponds to the first $\lDim$ coordinates of $\enc_\param(X_\sigma)$, and once again $0 \in \R^{\aDim - \lDim}$. The regularizer encourages $\decoder$ to not need $Z_2=\enc_\param(X_\sigma)_2$ (i.e.\ the last $\aDim-\lDim$ coordinates of $\enc_\param(X_\sigma)$) to perfectly reconstruct $X_\sigma$: this can be intuitively understood as providing an inductive bias which promotes capturing the added noise only through $Z_2$, thus justifying the use of $\modelDens$ instead of $p^{X_\sigma}_\param$.

We now discuss the ``deflation'' aspect of this approach.
\citet{horvat2023density} prove that a trained denoising normalizing flow $\ambientDens_{\param^\ast}$ recovers $\trueDens$ under the following conditions: $(i)$ the noise added to $X \sim \trueDens$ is only added along the $(\aDim - \mDim)$-dimensional normal space to the true manifold $\M$ at $X$, $(ii)$ the noise parameter $\sigma$ is sufficiently small, and $(iii)$ the manifold $\M$ is ``sufficiently smooth and disentangled'' (this condition is properly formalized by \citet{horvat2023density}).

Condition $(i)$ is not satisfied by the denoising normalizing flow, although \citet{horvat2021denoising} argue that when $\aDim$ is much larger than $\mDim$, $\mathcal N(\, \cdot \, ;0, \sigma^2 I_\aDim)$ is a good approximation for the $(\aDim-\mDim)-$dimensional Gaussian noise in the normal space; it is also worth reiterating that it would not be possible to add noise in the normal space without knowing the true manifold exactly in the first place.
Condition $(iii)$ is also worth discussing: it is entirely unclear if natural data observed ``in the wild'' is sufficiently smooth and disentangled, 
so we may not have any guarantee of retrieving the true manifold-supported data distribution even in the nonparametric regime. In summary, despite denoising NFs being an elegant approach, their manifold-awareness can only be ensured under potentially strong and hard-to-verify assumptions.

\citet{postels2022maniflow} proposed a very similar approach to denoising NFs, except the latent density used to define $p_\param^{X_\sigma}$ is fixed as a standard Gaussian, no regularizer is used during training, and the ``deflation'' step is done by first sampling $X_\sigma \sim p_{\param^\ast}^{X_\sigma}$ and then solving
\begin{equation}\label{eq:maniflow}
    X = \argmax_x \, \log p_{\param^\ast}^{X_\sigma}(x) - \beta\Vert x - X_\sigma \Vert_2^2.
\end{equation}
Although \citet{postels2022maniflow} do not rigorously justify their method, the intuition behind this ``deflation'' step is sensible; since $p_{\param^\ast}^{X_\sigma}$ should spike around $\M$ when $\sigma$ is small enough, \autoref{eq:maniflow} can be informally interpreted as pulling $X_\sigma$ closer to $\M$.

Finally, \citet{kim2020softflow} use a continuum of noise levels in a manner reminiscent of diffusion models (\autoref{sec:diffusion_models}). More specifically, they first construct a conditional normalizing flow $\decoder: \latentspace \times [0, \sigma_{\text{max}}] \rightarrow \dataspace$, where $\sigma_{\text{max}} > 0$ is a hyperparameter. For every $\sigma \in [0, \sigma_{\text{max}}]$, the flow $\decoder(\cdot, \sigma)$ must be a diffeomorphism, whose inverse we denote as $\enc_\param(\cdot, \sigma)$. They then condition the NF on the amount of added noise and train through (conditional) maximum-likelihood:
\begin{equation}
    \max_\param \int_0^{\sigma_{\text{max}}} \E_{X_\sigma \sim p_\ast^{X_\sigma}} \left[ \log \latentDens\left(\enc_\param(X_\sigma, \sigma)\right) + \log \left| \det \nabla_{x_\sigma} \enc_\param(X_\sigma, \sigma) \right| \right] \rd \sigma.
\end{equation}
Since the conditional NF is trained so that $X_\sigma = \dec_{\param^\ast}(Z, \sigma)$, where $Z \sim \latentDens$, is distributed according to $p_\ast^{X_\sigma}$, the ``deflation'' step here simply consists of using $\sigma=0$ when sampling from the model, i.e.\ $X = \dec_{\param^\ast}(Z, 0)$.

Finally, we highlight that although the three methods presented above entail fitting full-dimensional likelihood-based models to full-dimensional target densities, the target densities consist of the convolution of manifold-supported densities with small amounts of noise. As a result, these methods are still exposed to the numerical pathologies described in \autoref{sec:pathology_ours}.

\subsubsection{Spread Divergences}\label{sec:spread}

As previously mentioned, attempting to minimize KL divergence through maximum-likelihood in the manifold setting can result in manifold overfitting (\autoref{sec:manifold_overfitting}), and while adding a small amount of noise to the data can circumvent the problem in theory, it might not in practice. \citet{zhang2020spread} propose to not only add noise to the data, but to add the same amount of noise to the model as well. They formalize this idea by introducing spread divergences. Here we will focus exclusively on the spread KL divergence and Gaussian noise, but highlight that the same ideas can be applied to other divergences and (potentially learnable) noise distributions. Formally, for a fixed $\sigma > 0$, the spread KL divergence $\KL_\sigma(p \Mid q)$ between probability densities $p$ and $q$ is given by
\begin{equation}
    \KL_\sigma(p \Mid q) \coloneqq \KL\left(p_\sigma \Mid q_\sigma \right),
\end{equation}
where $p_\sigma \coloneqq p \circledast \mathcal{N}(\ \cdot \ ; 0, \sigma^2 I_D)$ and $q_\sigma \coloneqq q \circledast \mathcal{N}(\ \cdot \ ; 0, \sigma^2 I_D)$, i.e.\ the spread KL divergence is the KL divergence between noisy versions of $p$ and $q$. The spread KL divergence has two important properties: $\mathbb{KL}_\sigma(p \Mid q) \geq 0$, with equality if and only if $p = q$; and it is always meaningfully defined, since the noisy versions of the original distributions are always full-dimensional (\autoref{sec:kl}). Together, these properties imply that the spread KL divergence provides a mathematically sensible objective to train generative models under the manifold setting. 

While the noisy model $\modelDens \circledast \mathcal{N}(\ \cdot \ ; 0, \sigma^2 I_D)$ always has a full-dimensional density, this density cannot in general be evaluated, and thus minimizing $\mathbb{KL}_\sigma(\trueDens \Mid \modelDens)$ is not immediately trivial. 
\citet{zhang2020spread} propose a very similar model to a variational autoencoder (\autoref{sec:vaes}), called the $\delta$-VAE, where the conditional distribution of $X$ given $Z$ is now a point mass at $\decoder(Z)$, instead of a Gaussian as in \autoref{eq:vae_decoder}. In other words, this model is like a Gaussian VAE, except no additional noise is added to $\decoder(Z)$, i.e.\ all the noise in this model comes from the low-dimensional $Z$. 
Note that, unlike VAEs, this model is not full-dimensional. Much like VAEs are trained to minimize an upper bound of $\mathbb{KL}(\trueDens \Mid \modelDens)$ (or equivalently, maximizing the lower bound to the log-likelihood from \autoref{eq:elbo_obj}), $\delta$-VAEs minimize an upper bound of $\mathbb{KL}_\sigma(\trueDens \Mid \modelDens)$, which as we will see ends up amounting to a simple modification to standard VAE training. Importantly, even if the supports of $\modelDens$ and $\trueDens$ do not overlap, their spread KL divergence is meaningfully defined and thus provides a valid objective that enables $\delta$-VAEs to properly learn distributions on unknown manifolds. More specifically, $\delta$-VAEs use a variational posterior density $q^{Z|X_\sigma}_\paramAlt$ and are trained through
\begin{equation}\label{eq:spread_elbo}
    \max_{\param, \paramAlt} \mathbb{E}_{X_\sigma \sim p^{X_\sigma}_\ast} \left[ \mathbb{E}_{Z \sim q^{Z|X_\sigma}_\paramAlt(\cdot|X_\sigma)}[\log \mathcal{N}(X_\sigma; \decoder(Z), \sigma^2 I_D)] - \mathbb{KL}\left(q^{Z|X_\sigma}_\paramAlt(\cdot|X_\sigma) \, \Big\Vert \, p_Z\right)  \right],
\end{equation}
where $p^{X_\sigma}_\ast \coloneqq \trueDens \circledast \mathcal{N}(\ \cdot \ ; 0, \sigma^2 I_D)$. Note that this objective is equivalent to that of VAEs from \autoref{eq:elbo_obj}, except Gaussian noise is added to the data, and the covariance of the decoder is not learnable, but rather made to match the covariance of the noise that was added to the data.

A point about $\delta$-VAEs warrants discussion.  A common ``cheat'' when sampling from a trained Gaussian VAE model is to sample $Z \sim \latentDens$, and then output the decoder mean $\dec_{\param^\ast}(Z)$, rather than outputting a sample from a Gaussian centered at this point (i.e.\ the noise from the stochastic decoder is ignored). The use of the spread KL divergence elegantly justifies this common practice, since the noise (corresponding to the $\N (X_\sigma; \decoder(Z), \sigma^2 I_\aDim)$ term in \autoref{eq:spread_elbo}) here is added due to the spread KL divergence, rather than being part of the model itself. \citet{zhang2020spread} did not make this observation when introducing $\delta$-VAEs, and to the best of our knowledge, we are the first to point this out. 

Finally, \citet{zhang2023spread} aim to extend the use of the spread KL divergence beyond VAEs, and propose a procedure to use it within the context of normalizing flows (\autoref{sec:nfs}). Since the spread KL divergence remains intractable, they introduce another bound as the training objective. Unlike other variational bounds such as the ELBO (\autoref{eq:elbo_obj}) or \autoref{eq:spread_elbo}, the bound of \citet{zhang2023spread} does not become tight at optimality, so it lacks the theoretical guarantees of $\delta$-VAEs which ensure manifold-awareness.

\subsection{Manifold-Awareness through Support-Agnostic Optimization Objectives}\label{sec:weak_convergence_losses}

In \autoref{sec:adding_noise} we showed various DGMs which learn manifolds by first adding noise to $\trueDens$, and then using a full-dimensional objective like score matching or KL minimization (i.e.\ maximum-likelihood). An alternative to these approaches is using an objective that is agnostic to the supports of the underlying distributions being compared. In particular, divergences between probability distributions which metrize weak convergence, such as Wasserstein distances (\autoref{sec:wasserstein}) or maximum mean discrepancy (\autoref{sec:mmd}), provide such support-agnostic objectives. We now cover DGMs which are trained through these objectives.

\subsubsection{Wasserstein Generative Adversarial Networks}\label{sec:wgans}

\citet{arjovsky2017wasserstein} proposed to minimize Wasserstein distance (\autoref{sec:wasserstein}) between $\trueDens$ and $\modelDens$ as a way to address the issues arising from attempting to train generative adversarial networks using various $f$-divergences outlined in \autoref{sec:gans}, resulting in strong empirical performance \citep{karras2018progressive, karras2019style, karras2020analyzing}. In the GAN context considered by \citet{arjovsky2017wasserstein}, where $\modelDens$ is again given by the distribution of $X = \decoder(Z)$, where $Z \sim \latentDens$ (once more, formally, $\modelDens$ is given by the pushforward of $\latentDens$ through $\decoder$) and $\latentDens$ is fixed (e.g.\ standard Gaussian), this means changing the GAN objective from \autoref{eq:gan_objective} to
\begin{equation}
    \min_\param \max_\paramAlt \Exp_{X \sim \trueDens}\left[ h_\paramAlt (X)\right] - \Exp_{Z \sim \latentDens}\left[h_\paramAlt (\decoder (Z))\right],
\end{equation}
where the neural network $h_\paramAlt: \mathcal{X} \rightarrow \R$ is no longer a classifier and is now constrained to be Lipschitz. If, as with standard GANs, the network $h_\paramAlt$ is assumed  to be arbitrarily flexible, the objective reduces to minimizing the $\W_1$ distance between the true distribution and the model (\autoref{eq:w1_dual}),
\begin{equation}
    \min_\theta \W_1(\trueDens, \modelDens),
\end{equation}
which as mentioned in \autoref{sec:wasserstein}, provides a support-agnostic optimization objective for training $\modelDens$. 
\citet{arjovsky2017wasserstein} enforce the Lipschitz constraint on $h_\paramAlt$ through weight clipping, which was later improved upon by \citet{gulrajani2017improved} and by \citet{miyato2018spectral}. The former use a regularizer encouraging $\Vert \nabla_x h_\paramAlt(x) \Vert_2$ to be close to $1$, while the latter regularizes the spectral norm of the weight parameters of $h_\paramAlt$; both obtain much better empirical performance than weight clipping.

\subsubsection{Wasserstein Autoencoders}\label{sec:wasserstein_autoencoders}

As described in \autoref{sec:wgans} for Wasserstein generative adversarial networks, \citet{arjovsky2017wasserstein} leveraged the dual formulation of $\W_1$ (\autoref{eq:w1_dual}). \citet{tolstikhin2017wasserstein} proposed Wasserstein autoencoders (WAEs), which instead leverage the definition of $\W^c$ (\autoref{eq:wasserstein_informal}) -- which remains a support-agnostic objective for training DGMs. In particular, they proved that when $\modelDens$ is given by the distribution of $X = \decoder(Z)$ where $Z \sim \latentDens$ (once again, $\modelDens$ formally corresponds to the pushforward of $\latentDens$ through $\decoder$), the optimal transport cost can be written as
\begin{equation}\label{eq:wae_property}
    \W^c(\trueDens, \modelDens) = \inf_{q^{Z|X} \in \mathcal{Q}(\trueDens, \latentDens)} \Exp_{X \sim \trueDens}\left[ \Exp_{Z \sim q^{Z|X}(\cdot|X)}\left[ c(X, \decoder(Z))\right]\right],
\end{equation}
where $\mathcal{Q}(\trueDens, \latentDens)$ is the set of conditional (on $X$) densities on $\mathcal{Z}$ whose marginal matches $\latentDens$, i.e.\ $\mathcal{Q}(\trueDens, \latentDens) \coloneqq \{q^{Z|X}:\mathcal{X} \rightarrow \Delta(\mathcal{Z}) \mid q^Z = \latentDens\}$, where $q^Z \coloneqq \Exp_{X \sim \trueDens}[q^{Z|X}(\cdot|X)]$. \citet{tolstikhin2017wasserstein} propose two losses to train WAEs, both inspired by this property: analoguously to variational autoencoders, a conditional distribution $q^{Z|X}_\paramAlt$ is parameterized (e.g.\ \autoref{eq:vae_variational_posterior}), and the first loss is given by
\begin{equation}\label{eq:wae_objective1}
    \min_{\param, \paramAlt} \max_{\paramAlt'} \E_{X \sim \trueDens}\left[\E_{Z \sim q^{Z|X}_\paramAlt(\cdot|X)}[c(X, \decoder(Z))]\right] + \beta \left(\E_{Z \sim q^Z_\paramAlt}\left[\log h_{\paramAlt'}(Z)\right] + \E_{Z \sim \latentDens}\left[ \log\left( 1 - h_{\paramAlt'}(Z)\right)\right]\right),
\end{equation}
where $h_{\paramAlt'}: \latentspace \rightarrow (0, 1)$, and $\beta > 0$ is a hyperparameter. The first term in the above objective can be understood as minimizing the cost in \autoref{eq:wae_property} with no regard for the constraint that $q^{Z|X}_\paramAlt \in \mathcal{Q}(\trueDens, \latentDens)$, whereas the second term encourages this constraint to be satisfied through an adversarial loss (\autoref{eq:gan_objective}) that aims to minimize $\mathbb{JS}(q^Z_\paramAlt \Mid \latentDens)$; thus, \autoref{eq:wae_objective1} does indeed aim to minimize the optimal transport cost in \autoref{eq:wae_property}. Note that even though $q^Z_{\paramAlt}$ cannot be evaluated, it is trivial to obtain a sample $Z$ from it by first sampling $X \sim \trueDens$, and then sampling $Z|X \sim q^{Z|X}_{\paramAlt}(\cdot|X)$, so that optimizing \autoref{eq:wae_objective1} is tractable.

We find it relevant to highlight some differences between WAEs and other DGMs. $(i)$ Unlike GANs (\autoref{sec:gans}) and Wasserstein GANs, WAEs use an adversarial loss on the latent space $\latentspace$ rather than on ambient space $\dataspace$. This helps stabilize WAEs, as adversarial losses on ambient space can be notoriously unstable. Nonetheless, properly trained Wasserstein GANs tend to empirically outperform WAEs. $(ii)$ WAEs are also very similar to variational autoencoders (\autoref{sec:vaes}) -- e.g.\ if $c(x, y) = \Vert x-y \Vert_2^2$, the WAE and Gaussian VAE losses become extremely alike -- but the KL term in the VAE loss (\autoref{eq:elbo_obj}) encourages $q^{Z|X}_\paramAlt(\cdot|X)$ to match $\latentDens$ for every $X$ sampled from $\trueDens$, whereas WAEs only encourage this to happen on average, i.e.\ $ \E_{X \sim \trueDens}[q_\paramAlt^{Z|X}(\cdot|X)] = \ q^Z_\paramAlt = \latentDens$.

The second WAE loss is completely analogous, except it uses maximum mean discrepancy (\autoref{sec:mmd}) instead of an adversarial loss to encourage satisfying the constraint that $q^Z_\paramAlt = \latentDens$,
\begin{equation}\label{eq:wae_objective2}
    \min_{\param, \paramAlt}  \E_{X \sim \trueDens}\left[\E_{Z \sim q^{Z|X}_\paramAlt(\cdot|X)}[c(X, \decoder(Z))]\right] + \beta \mathbb{MMD}^2_k\left(q^Z_\paramAlt, \latentDens \right)
\end{equation}
for some kernel $k: \latentspace \times \latentspace \rightarrow \R$, which results in an objective with no adversarial component. The choice of which objective to use on latent space to enforce $q^Z_\paramAlt = \latentDens$ has also been expanded in follow-up work, for example \citet{kolouri2018sliced} use the sliced Wasserstein distance, and \citet{patrini2020sinkhorn} use relaxed (Sinkhorn) optimal transport.

Finally, we point out that \citet{tolstikhin2017wasserstein} found that using arbitrarily distributions $q^{Z|X}_\paramAlt$ was not key for good empirical performance, and they thus restrict $\mathcal{Q}(\trueDens, \modelDens)$ to only contain point masses, i.e.\ $q_\paramAlt^{Z|X}(\cdot|x)$ is given by a point mass at $\encoder(x)$. This choice, which amounts to using deterministic rather than stochastic encoders, reduces the first term in \autoref{eq:wae_objective1} and \autoref{eq:wae_objective2} to a reconstruction error (as measured by $c$), $\E_{X \sim \trueDens}[c(X, \decoder(\encoder(X)))]$.

\begin{tcolorbox}[breakable, enhanced jigsaw,width=\textwidth]
Here we simply highlight that since the distributions involved need  not necessarily admit Lebesgue densities, \autoref{eq:wae_property} must be formalized by using measures instead of densities:
\begin{equation}
    \W^c(\trueMeas, \modelMeas) = \inf_{\Q^{Z|X} \in \mathcal{Q}(\trueMeas, \latentMeas)} \Exp_{X \sim \trueMeas}\left[ \Exp_{Z \sim \Q^{Z|X}(\cdot|X)}\left[ c(X, \decoder(Z))\right]\right],
\end{equation}
where $\mathcal{Q}(\trueMeas, \latentMeas) \coloneqq \{ \Q^{Z|X}:\mathcal{X} \rightarrow \Delta(\mathcal{Z}) \mid \Q^Z = \latentMeas\}$ with $\Q^Z \coloneqq \Exp_{X \sim \trueMeas}[\Q^{Z|X}(\cdot|X)]$. 
\end{tcolorbox}

\subsubsection{Generative Networks Based on Maximum Mean Discrepancy}\label{sec:gmmns}

\citet{dziugaite2015training} and \citet{li2015generative} propose another way to train the model $\modelDens$ corresponding to $X = \decoder(Z)$ and $Z \sim \latentDens$ (again, we point out that formally, this model corresponds to the pushforward of $\latentDens$ through $\decoder$): by minimizing maximum mean discrepancy (\autoref{sec:mmd}). Although their motivation was to avoid the adversarial training involved in generative adversarial networks (\autoref{sec:gans}) rather than to model manifold-supported data, the resulting objective provides a mathematically principled way of training DGMs under the manifold setting. The training objective of generative moment matching networks is simply
\begin{equation}
    \min_\param \mathbb{MMD}_k^2\left(\trueDens, \modelDens\right)
\end{equation}
for a pre-specified kernel $k$. Although minimizing MMD is straightforward, generative moment matching networks are not known for achieving good empirical performance: this highlights that, even though manifold-awareness should be considered a necessary condition for strong empirical results, it is not sufficient.

To improve the empirical performance of generative moment matching networks, \citet{li2017mmd} propose maximum mean discrepancy generative adversarial networks (MMD GANs), where the main idea is to reintroduce adversarial training to learn the kernel. First, for a fixed auxiliary latent space $\latentspace'=\R^{d'}$, a given a kernel $k: \latentspace' \times \latentspace' \rightarrow \R$, and a neural network $h_\paramAlt: \dataspace \rightarrow \latentspace'$, \citet{li2017mmd} defined the kernel $k_\paramAlt: \dataspace \times \dataspace \rightarrow \R$ as $k_\paramAlt(x, y) \coloneqq k(h_\paramAlt(x), h_\paramAlt(y))$. They then showed that, if some regularity conditions hold and $h_\paramAlt$ is injective, then $\max_\paramAlt \mathbb{MMD}_{k_\paramAlt}^2$ metrizes weak convergence, thus making it a mathematically sensible objective to train DGMs. In order to enforce injectivity of $h_\paramAlt$, \citet{li2017mmd} leverage the fact that a function $h: \dataspace \rightarrow \latentspace'$ is injective on $\dataspace$ if and only if it admits a left inverse $h^\dagger: \latentspace' \rightarrow \dataspace$, i.e.\ $h^\dagger(h(x)) = x$ for all $x \in \dataspace$. They thus introduce an auxiliary network $h^\dagger_\paramAlt$ (which need not share parameters with $h_\paramAlt$), and train MMD GANs through
\begin{equation}
    \min_\param \max_\paramAlt \mathbb{MMD}_{k_\paramAlt}^2\left(\trueDens, \modelDens \right) - \beta \E_{X \sim \frac{1}{2}\trueDens + \frac{1}{2}\modelDens}\left[\Vert X - h^\dagger_\paramAlt\left(h_\paramAlt(X)\right)\Vert_2^2\right],
\end{equation}
where $\beta > 0$ is a hyperparameter, and the second term encourages $h_\paramAlt$ to admit $h_\paramAlt^\dagger$ as a left inverse on the supports of $\trueDens$ and $\modelDens$. 
Similarly to Wasserstein GANs (\autoref{sec:wgans}), the empirical performance of MMD GANs benefits from gradient regularization during training \citep{binkowski2018demystifying, arbel2018gradient}.

\subsubsection{Generalized Energy-Based Models}\label{sec:gebms}

Generalized energy-based models \citep[GEBMs;][]{arbel2020generalized} combine generative adversarial networks (\autoref{sec:gans}) with energy-based models (\autoref{sec:ebms}). GEBMs consist of a fixed prior $\latentDens$ supported on $\latentspace$, a generator $\dec_{\param_1}: \latentspace \rightarrow \dataspace$, and an energy function $E_{\param_2}: \dataspace \rightarrow \R$, where we explicitly distinguish between the parameters of these components as $\param = (\param_1, \param_2)$. As in GANs, the prior along with the generator implicitly define the density $p^{\dec}_{\param_1}$ of $X = \dec_{\param_1}(Z)$, where $Z \sim \latentDens$.\footnote{Note that in \autoref{sec:gans} we denoted $p^\dec_{\param_1}$ as $\modelDens$, but we use different notation here as GEBMs further modify $p^\dec_{\param_1}$.} This is not a full-dimensional density, but rather it is supported on the model manifold $\M_{\param_1} \coloneqq \dec_{\param_1}(\latentspace)$.\footnote{Formally $\M_{\param_1}$ need not be a manifold, even if $\dec_{\param_1}$ is smooth, as it might have points of self-intersection. The measure-theoretic formulation of GEBMs in the grey box below remains nonetheless valid.} GEBMs define an EBM on $\M_{\param_1}$ by re-weighting $p_{\param_1}^g$ through the use of $E_{\param_2}$ as an energy function:
\begin{equation}\label{eq:gebm_density}
    \modelDens(x) \propto p_{\param_1}^{\dec}(x) e^{-E_{\param_2}(x)}.
\end{equation}
Despite $E_{\param_2}$ being defined over $\dataspace$, the above density is only defined on $\M_{\param_1}$ and is thus also not a full-dimensional density.\footnote{More formally, the $\propto$ symbol in \autoref{eq:gebm_density} should be understood as proportional \emph{within} $\M_{\param_1}$, only integrating over the model manifold, i.e.\ $\modelDens(x) = p_{\param_1}^{\dec}(x) e^{-E_{\param_2}(x)} / \int_{\M_{\param_1}} p_{\param_1}^{\dec} e^{-E_{\param_2}} \rd \textrm{vol}_{\M_{\param_1}}$.} The intuition behind GEBMs is that the generator can easily learn to map to $\M$, whereas the energy function helps correct the distribution within the learned manifold.

In order to train GEBMs, \citet{arbel2020generalized} define the quantity
\begin{equation}\label{eq:kale}
    \mathbb{KALE}\left(\trueDens \Mid p_{\param_1}^\dec \right) \coloneqq \sup_{E \in \mathcal{E}, \paramAlt \in \R} 1 - \paramAlt - \E_{X \sim \trueDens}\left[E(X)\right] - \E_{Z \sim \latentDens}\left[e^{-E(\dec_{\param_1}(Z))-\paramAlt}\right],
\end{equation}
where $\mathcal{E}$ is a set of Lipschitz energy functions satisfying certain regularity conditions (which are satisfied by feed-forward neural networks). They then show that $\mathbb{KALE}(\trueDens \Mid p_{\param_1}^\dec)$ is a meaningfully defined divergence between $\trueDens$ and $p_{\param_1}^\dec$, even when their supports do not perfectly overlap (i.e.\ it metrizes weak convergence, more details are provided in the grey box below), so that it provides a sensible objective to train the generator. As a divergence, $\mathbb{KALE}$ is intimately related to the KL divergence: \citet{arbel2020generalized} also show that if $E_{\param_2}$ achieves the supremum in \autoref{eq:kale}, then $\KL(\trueDens \Mid \modelDens) \leq \KL(\trueDens \Mid p_{\param_1}^\dec)$.\footnote{Note that none of the involved densities -- namely $\trueDens$, $\modelDens$, and $p_{\param_1}^\dec$ -- are full-dimensional densities, so the KL divergence between them could be infinite (\autoref{sec:kl}). The inequality is thus trivially true when the support of $\modelDens$ and $p_{\param_1}^\dec$, i.e.\ $\M_{\param_1}$, does not match $\M$. Nonetheless, when the generator perfectly recovers $\M$, the inequality does justify the use of the energy function in GEBMs.} Therefore, the re-weighting done in \autoref{eq:gebm_density} to $p_{\param_1}^\dec$ indeed improves upon simply using $p_{\param_1}^\dec$. Putting these properties together, \citet{arbel2020generalized} train GEBMs through
\begin{equation}
    \min_{\param_1} \max_{\param_2, \paramAlt} 1 - \paramAlt - \E_{X \sim \trueDens}\left[E_{\param_2}(X)\right] - \E_{Z \sim \latentDens}\left[e^{-E_{\param_2}(\dec_{\param_1}(Z))-\paramAlt}\right],
\end{equation}
where the Lipschitz constraint on $E_{\param_2}$ is enforced as in Wasserstein GANs (\autoref{sec:wgans}) and $\paramAlt \in \R$ is a free auxiliary parameter.

\citet{arbel2020generalized} also show that in order to sample from a GEBM as defined through \autoref{eq:gebm_density}, one can first sample $Z$ from the EBM $\latentDens_\param$ on $\latentspace$ given by
\begin{equation}\label{eq:gebm_posterior}
    \latentDens_\param(z) \propto \latentDens(z)e^{-E_{\param_2}(\dec_{\param_1}(z))},
\end{equation}
and then setting $X = \dec_{\param_1}(Z)$ will produce a sample from $\modelDens$. Note that $\latentDens_\param$ is now a full-dimensional density in $\latentspace$, and it can thus be sampled through Markov chain Monte Carlo as standard in EBMs. We point out that GEBMs are intimately linked to two-step models, which we discuss in \autoref{sec:two_step}.

Finally, we highlight some related works. \citet{che2020your} proposed a similar model to GEBMs, but their model is trained as a standard GAN (\autoref{eq:gan_objective}), and the EBM is defined \emph{post-hoc} by using the discriminator $h_{\paramAlt^\ast}$; despite the similarity with GEBMs, this procedure does not endow manifold-unaware GANs with manifold-awareness. \citet{birrell2022f} constructed a class of divergences, which $\mathbb{KALE}$ belongs to, by extending its relationship with the KL divergence to general $f$-divergences; and \citet{gu2024combining} leveraged these divergences, along with optimal transport (\autoref{sec:wasserstein}), to obtain a manifold-aware adversarial training objective for continuous normalizing flows (\autoref{sec:nfs}).

\begin{tcolorbox}[breakable, enhanced jigsaw,width=\textwidth]
We now formalize the presentation of GEBMs. Here we denote $p_{\param_1}^\dec$ as a probability measure, $\dec_{\param_1 \#} \latentMeas$, instead of as a density. The model distribution $\modelMeas$ is then defined through its Radon-Nikodym derivative with respect to $\dec_{\param_1 \#} \latentMeas$,
\begin{equation}
    \modelDens(x) = \dfrac{\rd \modelMeas}{\rd \dec_{\param_1 \#} \latentMeas} (x) \coloneqq \frac{e^{-E_{\param_2}(x)}}{\E_{X \sim \dec_{\param_1 \#} \latentMeas}[e^{-E_{\param_2}(X)}]},
\end{equation}
where the expectation is assumed to be finite. \citet{arbel2020generalized} proved under mild conditions that: $(i)$ $\mathbb{KALE}(\trueMeas \Mid \dec_{\param_1 \#} \latentMeas) \geq 0$ with equality if and only if $\trueMeas = \dec_{\param_1 \#} \latentMeas$; and that $(ii)$ for a sequence of generators $(g_{\param_{1,t}})_{t=1}^\infty$, $\mathbb{KALE}(\trueMeas \Mid \dec_{\param_{1,t} \#} \latentMeas) \rightarrow 0$ as $t \rightarrow \infty$ if and only if $\dec_{\param_{1, t} \#} \latentMeas \xrightarrow{\omega} \trueMeas$ as $t \rightarrow \infty$. 
\end{tcolorbox}

\subsubsection{Principal Component Flows}\label{sec:pcfs}

Principal component flows \citep[PCFs;][]{cunningham2022principal} are a variant on standard normalizing flows (\autoref{sec:nfs}) that seek to uncover manifold structure in a manner analogous to principal component analysis (PCA) and related to disentanglement \citep{bengio2013representation}. If $\decoder: \latentspace \to \dataspace$ is a normalizing flow, the eigenvectors $\{\nu_1(x), \ldots, \nu_{\aDim}(x)\}$ of $\nabla_z \decoder(z) \nabla_z \decoder(z)^\top$ are said to be the \emph{principal components of $\decoder$ at $x=\decoder(z)$} and can be ordered using the corresponding eigenvalues as in standard PCA. The principal components at $x$ represent the principal axes of variation in the flow's density $\modelDens$. In a manifold-learning context, principal axes with small eigenvalues represent off-manifold directions, while those with the highest eigenvalues represent primary directions of variation along the manifold, as illustrated in  \autoref{fig:principal_components}: we highlight that here $\trueDens$ is assumed full-dimensional and to concentrate around $\M$, rather than being strictly manifold-supported.

\begin{figure}[t]
\centering
    \subfigure[]{
    \label{fig:principal_components}
    \centering
    \includegraphics[width=0.4\linewidth,trim=320 -180 0 0, clip]{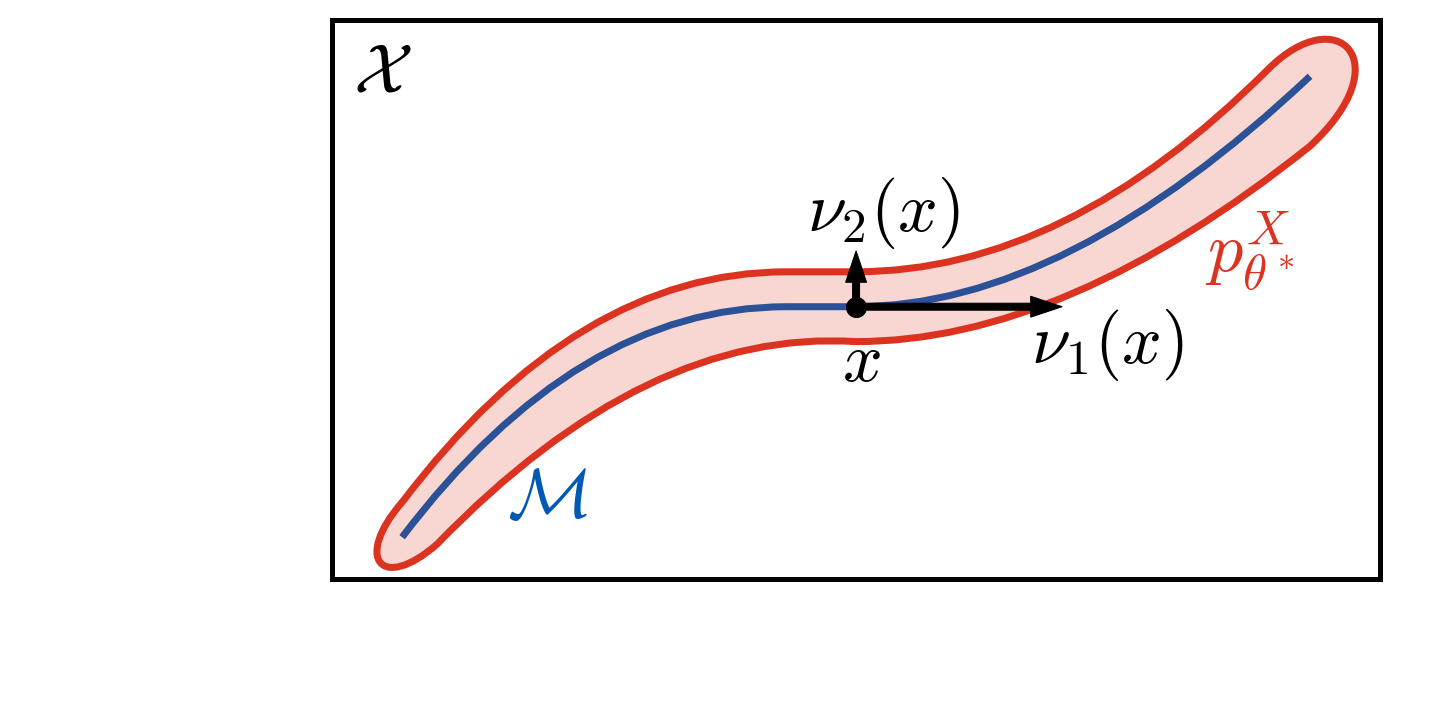}
    }
    \subfigure[]{
    \label{fig:nfs_vs_pcfs}
    \centering
    \includegraphics[trim=0 0 200 20,clip,width=0.55\linewidth]{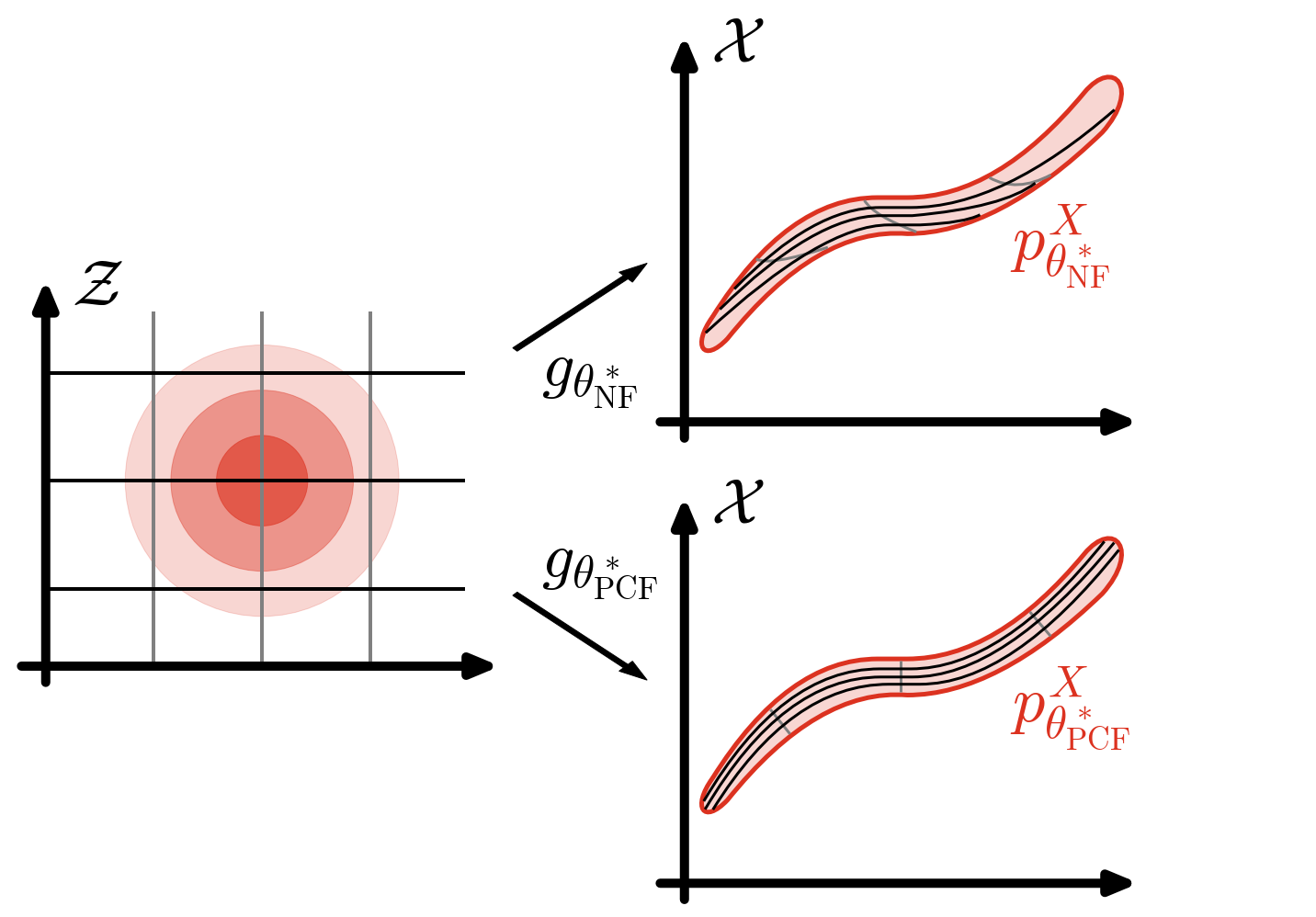}
    }
    \caption{The motivation behind PCFs. \textbf{(a)} The principal components, $\nu_1(x)$ and $\nu_2(x)$, of a trained NF with density $\ambientDens_{\param^\ast}$, taken at a point $x \in \dataspace$. These principal components are scaled according to their eigenvalues: here, $\nu_1(x)$ is the primary direction of variation along the manifold $\M$. 
    \textbf{(b)} A comparison between the contours of two NFs: $\ambientDens_{\param^\ast_{\text{NF}}}$ and $\ambientDens_{\param^\ast_{\text{PCF}}}$, trained as a standard NF and as a PCF, respectively. Contours are visualized by the way each NF maps the grid lines of $\latentspace$. Since the contours mapped by $\dec_{\param^\ast_{\text{PCF}}}$ correspond to the principal components of $\ambientDens_{\param^\ast_{\text{PCF}}}$, the NF model $\ambientDens_{\param^\ast_{\text{PCF}}}$ is formally a principal component flow.}
    \label{fig:pcfs}
\end{figure}

We now discuss a special case of PCFs as an introduction; for a presentation of the method in more generality, see the work of \citet{cunningham2022principal}. PCFs involve the notion of \textit{contour log-likelihoods}, which here can be interpreted as the likelihood along the $i$th coordinate curve of the flow,
\begin{equation}
    \log p^{Z_i}(z_i) - \log\left(\nabla_z \dec_\param(x)_{ii} \right)
\end{equation}
for $i \in \{1,2,\dots,\aDim \}$, where $\latentDens(z) = \prod_{i=1}^\aDim p^{Z_i}(z_i)$ is a coordinatewise factorization of the latent prior (which is typically Gaussian), and $\nabla_x \dec_\param(x)_{ii}$ is the $i$th entry on the diagonal of $\nabla_x \dec_\param(x)$. 

The goal of training a PCF is to align the principal components of $\decoder$ with its latent coordinates on the manifold (\autoref{fig:nfs_vs_pcfs}). One of the key insights of \citet{cunningham2022principal} is that the difference between the model's log density $\log\modelDens(x)$ and the sum of its contour log-likelihoods measures the diagonality of $\nabla_z \decoder(z) \nabla_z \decoder(z)^\top$ and hence how well the model's latent coordinates are aligned with its principal components. From this, a regularizer can be derived,
\begin{equation}
\calI(\theta) \coloneqq \Exp_{X \sim \trueDens} \left[ \log \modelDens(X) - \sum_{i=1}^\aDim \log p^{Z_i}\left(\enc_\param(X)_i \right) + \log\left(\nabla_x \enc_\param(X)_{ii}\right) \right],
\end{equation}
where $\enc_\param(X)_i$ denotes the $i$th coordinate of $\enc_\param(X)$.\footnote{Note that the meaning of $\enc_\param(X)_1$ is different here than in \autoref{sec:noisy_nfs}, where it refers to the first $\lDim$ coordinates of $\enc_\param(X)$.} 
$\calI(\theta)$ is a non-positive quantity such that $\calI(\theta) = 0$ if and only if the latent coordinates are perfectly aligned with the flow's principal components at each point $x \in \dataspace$. Maximizing $\calI(\theta)$ as a regularizer while maximizing the likelihood aligns the flow's latent coordinates with the \emph{principal manifolds} of the data. We highlight that while this objective was derived to provide a useful inductive bias when manifolds are involved, it remains nonetheless based on full-dimensional likelihoods, and is thus subject to the corresponding pathologies (\autoref{sec:manifold_overfitting} and \autoref{sec:pathology_ours}).

Canonical manifold flows \citep[CMFs;][]{flouris2023canonical} take a related approach which directly penalizes off-diagonal elements of $\nabla_z \decoder(\enc_{\param}(X)) \nabla_z \decoder(\enc_{\param}(X))^\top$. This results in a potentially looser regularizer which is designed to achieve the same goal at optimality as PCFs. Both PCFs and CMFs have been shown to naturally represent $\M$ using a subset of the flow's latent coordinates, meaning they automatically discover an estimate $\lDim$ of the dimension $\mDim$ of $\M$ without the practitioner having to set it as a hyperparameter.

\subsection{Manifold-Awareness through Two-Step Models}\label{sec:two_step}

Except for generative adversarial networks, all the DGMs described in \autoref{sec:common_dgms} are manifold-unaware as a direct consequence of misspecified dimensionality: the data is $\mDim$-dimensional, whereas the model is $\aDim$-dimensional. The methods described in \autoref{sec:adding_noise} address this misspecification by adding noise to the data, making it $\aDim$-dimensional, whereas those from \autoref{sec:weak_convergence_losses} use manifold-appropriate losses. Another approach to enable manifold-awareness, which we cover here, is to instead reduce the ambient dimension of the data to match its intrinsic dimension before learning its distribution. Two-step approaches do this by separating the overall generative modelling process into two distinct steps. 
Manifold learning (step $1$) typically involves some form of encoding-decoding, which uncovers a lower-dimensional representation space $\latentspace$ whose dimension $\lDim$ ideally matches the intrinsic dimension $\mDim$ of the given data.
Distribution learning (step $2$) is then carried out on the obtained manifold, which often takes the form of generative modelling of the $\lDim$-dimensional representations learned in the previous step.
Importantly, this two-step procedure aims to remove the dimensionality mismatch between the data and the model, and thus circumvent any woes caused by it, such as manifold overfitting (\autoref{sec:manifold_overfitting}). When discussing two-step models, it will be useful to distinguish between the generative parameters of each step, and we thus write $\param=(\param_1, \param_2)$, where $\param_1$ are the generative parameters required for the first step, and $\param_2$ those for the second one.  We now outline these two steps in more detail:

\begin{itemize}
\item \textbf{Manifold learning}\quad Most two-step models use autoencoder-based methods for manifold learning (\autoref{sec:manifold_learning}) as in \autoref{eq:ae_l2},
\begin{equation}\label{eq:ae_l2_step1}
    \min_{\param_1, \paramAlt} \E_{X \sim \trueDens}\left[\Vert X - \dec_{\param_1}\left(\encoder(X)\right)\Vert_2^2\right],
\end{equation}
or any variant such as variational autoencoders (\autoref{sec:vaes}), although we will see in \autoref{sec:nim} that non-autoencoder-based choices are also possible. Importantly, the goal in this step is to perform manifold learning rather than generative modelling, so e.g.\ even if a VAE is used, it is interpreted as a regularized autoencoder rather than a generative model.

\item \textbf{Distribution learning}\quad This step consists of learning the distribution on the manifold obtained in the previous step. In the standard setup where manifold learning is performed with an autoencoder-based model, a pre-trained encoder $\enc_{\paramAlt^\ast}$ and decoder $\dec_{\param_1^\ast}$ pair is available. 
The encoder defines a distribution $q^Z_{\paramAlt^\ast}$ of encoded data $\enc_{\paramAlt^\ast}(X)$ 
where $X \sim \trueDens$ (formally, $q_{\paramAlt^\ast}^Z$ is the pushforward density of $\trueDens$ through $\enc_{\paramAlt^\ast}$), which can be learned 
by instantiating a DGM $\latentDens_{\param_2}$ on $\latentspace$, and training it with any of the methods covered in this survey (but changing the target distribution from the $\aDim$-dimensional $\trueDens$ to the $\lDim$-dimensional $q^Z_{\paramAlt^\ast}$), while keeping the encoder $\enc_{\paramAlt^\ast}$ and decoder $\dec_{\param_1^\ast}$ frozen. Then, once this low-dimensional DGM is trained, resulting in $\latentDens_{\param_2^\ast}$, the distribution of the two-step model on the learned manifold can be sampled through $X = \dec_{\param^\ast_1}(Z)$, where $Z \sim \latentDens_{\param_2^\ast}$ (i.e.\ $\ambientDens_{\param^\ast}$ is formally given by the pushforward of $\latentDens_{\param_2^\ast}$ through $\dec_{\param_1^\ast}$).
\end{itemize}

Importantly, by solving the generative modelling task in $\latentspace$ instead of $\dataspace$, the support of the target distribution $q^Z_{\paramAlt^\ast}$ is $\enc_{\paramAlt^\ast}(\M) \subseteq \latentspace$, whose dimension should intuitively be given by $\min(\lDim, \mDim)$. If the latent dimension $\lDim$ is chosen properly (i.e.\ $\lDim = \mDim$), we should then expect $q^Z_{\paramAlt^\ast}$ to be full-dimensional within $\latentspace$. This full-dimensionality stands in contrast to the case discussed in \autoref{sec:manifold_overfitting} where the target distribution $\trueDens$ is $\mDim$-dimensional but its corresponding ambient space $\dataspace$ is $\aDim$-dimensional. Furthermore, even if $\lDim$ is overspecified as $\mDim < \lDim < \aDim$, the ``dimensionality gap'' -- i.e.\ the ambient dimension of the model minus that of the true manifold -- of the second-step model is $\lDim - \mDim$, which is smaller than $\aDim - \mDim$.\footnote{The case where $\lDim$ is underspecified, i.e.\ $\lDim < \mDim$, is less interesting, as in this case $\M$, and thus $\trueDens$, cannot be recovered.} Thus, we should intuitively expect any manifold-related woes arising from dimensionality mismatch to be milder than the corresponding full-dimensional issues.

Many two-step models have been proposed in the literature, sometimes with the manifold setting in mind, and some other times simply for tractability, as training DGMs on a low-dimensional latent space is cheaper than doing so in high-dimensional ambient space. \citet{loaiza2022diagnosing} provided a theoretical justification for all of these models, proving that under mild regularity conditions, when $\lDim=\mDim$ and \hbox{$\E_{X \sim \trueDens}[\Vert X - \dec_{\param_1^\ast}(\enc_{\paramAlt^\ast}(X)) \Vert_2^2]=0$} (i.e.\ perfect reconstructions), then: $(i)$ $q^Z_{\paramAlt^\ast}$ is indeed full-dimensional, and thus $\latentDens_{\param_2}$ can be any DGM, even if manifold-unaware (e.g.\ trained through maximum-likelihood), and still learn $q^Z_{\paramAlt^\ast}$; and $(ii)$ transforming samples from $\latentDens_{\param_2^\ast}$ through $\dec_{\param_1^\ast}$ is equivalent to sampling from $\trueDens$, i.e.\ two-step models recover $\trueDens$. This result, which we discuss further in the next grey box, implies that two-step models learn $\M$, which is an interesting observation since autoencoders by themselves need not -- see \autoref{fig:manifold_learning} and the discussion in \autoref{sec:manifold_learning}. In particular, when $\latentDens_{\param_2^\ast}$ is perfectly trained it must be supported on $\enc_{\paramAlt^\ast}(\M)$, and since $\dec_{\param_1^\ast}(\enc_{\paramAlt^\ast}(\M))=\M$, it follows that together, the support of $\latentDens_{\param_2^\ast}$ and the decoder $\dec_{\param_1^\ast}$ jointly characterize $\M$.

Here we briefly summarize two-step models which are straightforward combinations of an autoencoder-based model with any other generative model on latent space; two-step models warranting additional discussion are covered in \autoref{sec:latent_dm} and \autoref{sec:infs}. \citet{dai2019diagnosing} use a VAE for manifold learning and another VAE for distribution learning on latent space. \citet{xiao2019generative} use an autoencoder and a normalizing flow (\autoref{sec:nfs}), as do \citet{boehm2022probabilistic}. \citet{ghosh2019variational} use an autoencoder with an added regularization term, along with a Gaussian mixture model (which, while not deep, remains a generative model and thus fits the two-step framework). \citet{li2015generative} use an autoencoder and then train a generative moment matching network (\autoref{sec:gmmns}) on the recovered latents, and \citet{dao2023flow} use a regularized VAE along with conditional flow matching (\autoref{sec:flow_matching}). The improved generative performance reported in many of these works compared to their full-dimensional counterparts may be attributed to the reduction of dimension mismatch.

We also point out that generalized energy-based models (\autoref{sec:gebms}) are very similar to autoencoder-based two-step models, since GEBMs instantiate an energy-based model (\autoref{sec:ebms}) on a low-dimensional latent space, whose samples are then mapped through a decoder. GEBMs are nonetheless not autoencoder-based two-step models, the differences being that: GEBMs do not require an encoder, the distribution used by GEBMs on $\latentspace$ (\autoref{eq:gebm_posterior}) shares parameters with the decoder, GEBMs are trained end-to-end, and they are limited to using EBMs on the latent space.

\paragraph{End-to-end training} Although the two-step models described above are manifold-aware, the objective in the first step does not necessarily promote representations that are conducive to distribution learning in the second step. 
Therefore, one might expect that training these models in an end-to-end manner could further improve their performance. Albeit not always inspired by this motivation, several works have proposed end-to-end objectives, some in the context of variational autoencoders, and some in the context of injective normalizing flows; we discuss the former here and the latter in \autoref{sec:infs}. VAEs as described in \autoref{sec:vaes} assume a fixed prior $\latentDens$ on the latent space $\latentspace$. However, the prior $\latentDens_{\param_2}$ can be made trainable, in which case maximizing the ELBO (\autoref{eq:elbo_obj}), i.e.\
\begin{equation}
    \max_{\param, \paramAlt} \mathbb{E}_{X \sim \trueDens} \left[ \mathbb{E}_{Z \sim q^{Z|X}_\paramAlt(\cdot|X)}[\log p^{X|Z}_{\param_1} (X|Z)] - \KL \left(q^{Z|X}_\paramAlt(\cdot|X) \, \Big\Vert \, \latentDens_{\param_2}\right)  \right],
\end{equation}
remains a valid objective (assuming $\trueDens$ is full-dimensional) for end-to-end training of $p_{\param_1}^{X|Z}$ and $\latentDens_{\param_2}$. Depending on the choice of $\latentDens_{\param_2}$ additional computational tricks might be required to efficiently optimize the ELBO. \citet{tomczak2018vae} instantiate $\latentDens_{\param_2}$ as a Gaussian mixture model;  \citet{sonderby2016ladder}, \citet{vahdat2020nvae}, and \citet{child2021very} use learnable hierarchical priors;   \citet{chen2017variational} use normalizing flows; \citet{pang2020learning} use energy-based models; and \citet{vahdat2021score} use diffusion models (\autoref{sec:diffusion_models}). When $p_{\param_1}^{X|Z}$ is a flexible enough full-dimensional density as in \autoref{eq:vae_decoder}, all these models remain susceptible to the manifold overfitting issues discussed in \autoref{sec:manifold_overfitting} and \autoref{sec:vaes}, despite directly encouraging the encoder to learn representations whose distribution can be easily recovered by $\latentDens_{\param_2}$. 

Even though designing an end-to-end objective for training in a manifold-aware fashion is intuitively desirable, doing so is not always straightforward. 
To see why, consider a two-step model whose first-step loss is given by \autoref{eq:ae_l2_step1}, and whose second-step model is trained by minimizing $\mathbb{D}(q_{\paramAlt^\ast}^Z, p_{\param_2}^Z)$ over $\param_2$ for some divergence $\mathbb{D}$ between probability distributions. Let us further assume that $\mathbb{D}(q_{\paramAlt^\ast}^Z, p_{\param_2}^Z)$ cannot be computed without evaluating $q_{\paramAlt^\ast}^Z$, but that $\mathbb{D}(q_{\paramAlt^\ast}^Z, p_{\param_2}^Z) = \mathcal{L}(p_{\param_2}^Z;q_{\paramAlt^\ast}^Z) + c(q_{\paramAlt^\ast}^Z)$, where $\mathcal{L}(p_{\param_2}^Z;q_{\paramAlt^\ast}^Z)$ can be computed without evaluating $q_{\paramAlt^\ast}^Z$, and where $c(q_{\paramAlt^\ast}^Z)$ does not depend on $p_{\param_2}^Z$. Divergences with these properties are prevalent; the KL divergence (\autoref{sec:kl}) is an instance, where $\mathcal{L}(p_{\param_2}^Z;q_{\paramAlt^\ast}^Z) = -\E_{Z \sim q_{\paramAlt^\ast}^Z}[\log p_{\param_2}^Z(Z)]$ and $c(q_{\paramAlt^\ast}^Z) = \E_{Z \sim q_{\paramAlt^\ast}^Z}[\log q_{\paramAlt^\ast}^Z(Z)]$, as well as the Fisher divergence (\autoref{sec:score_matching}) which underpins score matching and thus diffusion models. In this case, the second-step model is trained by using the loss $\mathcal{L}(p_{\param_2}^Z;q_{\paramAlt^\ast}^Z)$, which is equivalent to minimizing $\mathbb{D}(q_{\paramAlt^\ast}^Z, p_{\param_2}^Z)$. Na\"ively combining the losses of this two-step model into a single loss for end-to-end training would result in the objective
\begin{equation}\label{eq:ete_naive}
    \min_{\param, \paramAlt} \E_{X \sim \trueDens}\left[\Vert X - \dec_{\param_1}(\encoder(X)) \Vert_2^2\right] + \beta \mathcal{L}\left(p_{\param_2}^Z;q_{\paramAlt}^Z \right)
\end{equation}
for some $\beta > 0$. This na\"ive objective ignores $c(q_\paramAlt^Z)$, so that it is \emph{not} equivalent to
\begin{equation}\label{eq:ete_principled}
    \min_{\param, \paramAlt} \E_{X \sim \trueDens}\left[\Vert X - \dec_{\param_1}(\encoder(X)) \Vert_2^2\right] + \beta \mathbb{D}\left(q_{\paramAlt}^Z, p_{\param_2}^Z \right).
\end{equation}
In short, \autoref{eq:ete_naive} does not provide a valid objective for manifold-aware end-to-end training since $c(q_{\paramAlt}^Z)$ cannot be ignored when $\paramAlt$ is not fixed after the first step of training. Unfortunately, although \autoref{eq:ete_principled} specifies a principled objective to address the issue, it remains intractable when $c(q_\paramAlt^Z)$ cannot be computed.

\begin{tcolorbox}[breakable, enhanced jigsaw,width=\textwidth]
    More formally, a trained autoencoder-based two-step model is given by a distribution $\latentMeas_{\param_2^\ast}$ on $\latentspace$ along with a decoder $\dec_{\param_1^\ast}: \latentspace \rightarrow \dataspace$, and the model distribution is given by $\ambientMeas_{\param^\ast} = \dec_{\param_1^\ast \#} \latentMeas_{\param_2^\ast}$. The result of \citet{loaiza2022diagnosing} justifying two-step models states that, under mild regularity conditions, if $\lDim=\mDim$ and $\E_{X \sim \trueMeas}[\Vert X - \dec_{\param_1^\ast}(\enc_{\paramAlt^\ast}(X)) \Vert_2^2]=0$, then:
    \begin{itemize}
    \item $\Q^Z_{\paramAlt^\ast} \ll \lebesgue_\lDim$, where $\Q^Z_{\paramAlt^\ast} = \enc_{\paramAlt^\ast \#} \trueMeas$ is the distribution of encoded data.
        \item $\dec_{\param_1^\ast \#} \Q^Z_{\paramAlt^\ast} = \trueMeas$.
    \end{itemize}
    The first point ensures $\Q^Z_{\paramAlt^\ast}$ admits a density with respect to $\lebesgue_\lDim$, so that it is full-dimensional. The second point ensures that if the target distribution $\Q^Z_{\paramAlt^\ast}$ is properly learned during the second step then two-step models recover the true data-generating distribution, i.e.\ if $\latentMeas_{\param_2^\ast} = \Q^Z_{\paramAlt^\ast}$ then $\ambientMeas_{\param^\ast} = \trueMeas$.
\end{tcolorbox}

\subsubsection{Two-Step Models Minimize Wasserstein Distance}\label{sec:two-step-wasserstein}

Before continuing our review of existing two-step models (\autoref{sec:two_step}), we highlight that these models can be interpreted through an optimal transport lens (\autoref{sec:wasserstein}). To the best of our knowledge, this observation has not been made in the literature, and constitutes a novel contribution of our work. Here we still consider the model $\modelDens$ given by the two learnable components $\dec_{\param_1}$ and $\latentDens_{\param_2}$, and will use the notation introduced for Wasserstein autoencoders (\autoref{sec:wasserstein_autoencoders}). Key to our insight is \autoref{eq:wae_property}, which, for the model $\modelDens$ considered here, can be rewritten as
\begin{equation}\label{eq:wae_ts_inf}
    \W^c(\trueDens, \modelDens) = \inf_{q^{Z|X} \in \mathcal{Q}(\trueDens, \latentDens_{\param_2})} \Exp_{X \sim \trueDens}\left[ \Exp_{Z \sim q^{Z|X}(\cdot|X)}\left[ c(X, \dec_{\param_1}(Z))\right]\right].
\end{equation}
This equality then implies that
\begin{equation}\label{eq:wasserstein_bound}
    \W^c(\trueDens, \modelDens) \leq \inf_{f \in \mathcal{F}(\trueDens, \latentDens_{\param_2})} \E_{X \sim \trueDens}\left[c\left(X, \dec_{\param_1}(\enc(X))\right)\right],
\end{equation}
where  $\mathcal{F}(\trueDens, \latentDens_{\param_2})$ is the set of functions $f:\dataspace \rightarrow \latentspace$ such that if $X \sim \trueDens$, then $f(X) \sim \latentDens_{\param_2}$. To see that \autoref{eq:wae_ts_inf} indeed implies \autoref{eq:wasserstein_bound}, simply note that if $\enc \in \mathcal{F}(\trueDens, \latentDens_{\param_2})$, then the conditional (on $X=x$) distribution on $\latentspace$ given by the point mass at $\enc(x)$ is in $\mathcal{Q}(\trueDens, \latentDens_{\param_2})$. \autoref{eq:wasserstein_bound} is used to justify the use of deterministic encoders within WAEs: doing so minimizes an upper bound of $\W^c(\trueDens, \modelDens)$. We also point out that, assuming $c(x,y)$ is minimal if and only if $x=y$, the bound becomes tight at optimality as long as perfect reconstructions are achievable  with a deterministic autoencoder (i.e.\ $X = \dec_{\param_1^\ast}(\enc_{\paramAlt^\ast}(X))$, see \autoref{sec:manifold_learning} and \autoref{sec:topology} for discussions of when this is possible with continuous autoencoders). 

For an encoder $\encoder$, we let $q^Z_\paramAlt$ be the distribution of $\encoder(X)$ where $X \sim \trueDens$ (formally, $q^Z_{\paramAlt}$ is the pushforward density of $\trueDens$ through $\encoder$), and note that $\encoder \in \mathcal{F}(\trueDens, \latentDens_{\param_2})$ is equivalent to $q^Z_\paramAlt = \latentDens_{\param_2}$. In turn, \autoref{eq:wasserstein_bound} justifies training the model $\modelDens$ by minimizing an upper bound of its optimal transport cost through
\begin{align}\label{eq:two_step_wasserstein}
\begin{split}
    & \min_{\param, \paramAlt} \E_{X \sim \trueDens}\left[c\left(X, \dec_{\param_1}\left(\encoder(X)\right)\right)\right]\\
    & \text{subject to } q^Z_\paramAlt = \latentDens_{\param_2}.
\end{split}
\end{align}
The key difference between this objective and that of WAEs is that the distribution on latent space is now learnable instead of being fixed. While this distinction with WAEs might seem conceptually trivial, it enables minimizing optimal transport cost through a two step process: in the first step, $\param_1$ and $\paramAlt$ are trained to minimize the reconstruction error, $\E_{X\sim \trueDens}[c(X, \dec_{\param_1}(\encoder(X)))]$, with no regard for the constraint. This step results in a now fixed distribution $q^Z_{\paramAlt^\ast}$ on $\latentspace$, which of course need not match $\latentDens_{\param_2}$. Thanks to $\latentDens_{\param_2}$ being learnable and $\param_2$ not appearing in the first-step objective, this mismatch can be addressed in the second step, where any objective over $\param_2$ to match $\latentDens_{\param_2}$ and $q^Z_{\paramAlt^\ast}$ can be used. For example, if the second-step model were trained through maximum-likelihood, its objective would be\footnote{Note that the second-step objective could itself require additional auxiliary parameters but we omit this possibility for notational simplicity.}
\begin{equation}
    \min_{\param_2} \KL\left(q^Z_{\paramAlt^\ast} \Mid \latentDens_{\param_2}\right),
\end{equation}
which indeed satisfies the constraint in \autoref{eq:two_step_wasserstein} at optimality. In other words, two-step models solve \autoref{eq:two_step_wasserstein} by optimizing an unconstrained version of the objective during the first step, and then ensuring the constraint is actually satisfied during the second step. Crucially, the second step does not affect the optimality of the first step because $\E_{X\sim \trueDens}[c(X, \dec_{\param_1}(\encoder(X)))]$ does not depend on $\param_2$, so that two-step models indeed provide a valid way of solving \autoref{eq:two_step_wasserstein}.

We now make some additional observations. $(i)$ When $c$ is given by the squared Euclidean distance, the corresponding loss for the first-step model is exactly that of a standard autoencoder (\autoref{eq:ae_l2} and \autoref{eq:ae_l2_step1}). When the first-step model is trained through a different autoencoder-based objective, e.g.\ \autoref{eq:elbo_obj}, we can simply interpret the model as a regularized autoencoder as long as it encourages perfect reconstructions at optimality. $(ii)$ Although two-step models are often trained using deterministic encoders, using arbitrarily flexible stochastic encoders would imply that $\W^c(\trueDens, \modelDens)$ is being minimized rather than an upper bound.

In summary, we have justified the manifold-awareness of autoencoder-based two-step models through optimal transport. We believe that this result is not only interesting on its own, but also hope that by establishing a connection between seemingly unrelated manifold-aware model classes -- namely two-step models and those which are trained through support-agnostic optimization objectives (\autoref{sec:weak_convergence_losses}), such as WAEs -- it will enable future improvements to both.

\begin{tcolorbox}[breakable, enhanced jigsaw,width=\textwidth]
\autoref{eq:wasserstein_bound} is formalized as follows:
\begin{equation}\label{eq:wae_formal}
    \W^c(\trueMeas, \modelMeas) \leq \inf_{f \in \mathcal{F}(\trueMeas, \latentMeas)} \E_{X \sim \trueDens}\left[c\left(X, \decoder(\enc(X))\right)\right],
\end{equation}
where $\mathcal{F}(\trueMeas, \latentMeas) \coloneqq \{f:\dataspace \rightarrow \latentspace \mid f \text{ is measurable, and }f_\# \trueMeas = \latentMeas\}$. As a technical point, note that the unconstrained version of the right hand side of this equation -- upon which the interpretation of two-step models as Wasserstein distance minimizers is based -- i.e.\
\begin{equation}\label{eq:ts_wass}
    \inf_{\enc \in \mathcal{F}} \E_{X \sim \trueMeas}\left[c\left(X, \dec_{\param_1}\left(\enc(X)\right)\right)\right],
\end{equation}
where $\mathcal{F} \coloneqq \{\enc: \dataspace \rightarrow \latentspace \mid \enc \text{ is measurable}\}$, involves an infimum over measurable functions (which is then minimized over $\param_1$ during the first step). In the nonparametric regime (\autoref{sec:setup}) we assume that neural networks are flexible enough to approximate any continuous function arbitrarily well, but this property need not \emph{a priori} extend to measurable functions. Intuitively this should however not be a problem thanks to Lusin's theorem -- which, informally, states that in certain settings any measurable function can be approximated by a continuous one. It is nonetheless pertinent to show that the infimum in \autoref{eq:ts_wass} can actually be replaced by a corresponding infimum over continuous functions, which we do below.\\

\begin{restatable}{proposition}{wasserstein} \label{prop:wass} Let $\trueMeas$ be a probability measure on $\dataspace$, $\dec_{\param_1}: \latentspace \rightarrow \dataspace$ be measurable, and $c: \dataspace \times \dataspace \rightarrow \R$ be measurable and such that there exists $C > 0$ such that
\begin{equation}\label{eq:assumption_th2}
    \sup_{(x, y) \in \dataspace \times \dataspace} |c(x,y)| < C.
\end{equation}
Then,
\begin{equation}
    \inf_{\enc \in \mathcal{F}} \E_{X \sim \trueMeas}\left[c\left(X, \dec_{\param_1}\left(\enc(X)\right)\right)\right] = \inf_{\enc \in \mathcal{C}} \E_{X \sim \trueMeas}\left[c\left(X, \dec_{\param_1}\left(\enc(X)\right)\right)\right],
\end{equation}
where $\mathcal{F} \coloneqq \{\enc:\dataspace \rightarrow \latentspace \mid \enc \text{ is measurable}\}$ and $\mathcal{C} \coloneqq \{\enc:\dataspace \rightarrow \latentspace \mid \enc \text{ is continuous}\}$.
\end{restatable}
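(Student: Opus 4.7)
The plan is to prove the two directions separately. The inequality $\inf_{\enc \in \mathcal{F}} \E_{X \sim \trueMeas}[c(X, \dec_{\param_1}(\enc(X)))] \leq \inf_{\enc \in \mathcal{C}} \E_{X \sim \trueMeas}[c(X, \dec_{\param_1}(\enc(X)))]$ is immediate: every continuous function from $\dataspace$ to $\latentspace = \R^\lDim$ is Borel measurable, so $\mathcal{C} \subseteq \mathcal{F}$ and the infimum over the larger family is no greater. All the substance lies in the reverse direction, for which the natural tool is Lusin's theorem, paired with the Tietze extension theorem and the uniform bound on $c$.

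For the nontrivial direction I would fix an arbitrary $\enc \in \mathcal{F}$ and $\epsilon > 0$. Because $\dataspace$ is either $\R^\aDim$ or $[0,1]^\aDim$ (a Polish space) and $\trueMeas$ is a finite Borel probability measure, $\trueMeas$ is automatically Radon. Applying Lusin's theorem coordinate-wise to $\enc = (\enc_1, \dots, \enc_\lDim)$ with tolerance $\epsilon / \lDim$ produces closed sets $K_1, \dots, K_\lDim \subseteq \dataspace$ on each of which the respective coordinate is continuous and whose complements have measure less than $\epsilon/\lDim$. Their intersection $K_\epsilon \coloneqq \bigcap_{i=1}^\lDim K_i$ is closed, satisfies $\trueMeas(\dataspace \setminus K_\epsilon) < \epsilon$, and $\enc|_{K_\epsilon}$ is continuous. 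Applying Tietze extension coordinate-wise (valid since $\dataspace$ is normal and $K_\epsilon$ is closed), I extend $\enc|_{K_\epsilon}$ to a continuous function $\tilde{\enc}_\epsilon : \dataspace \rightarrow \latentspace$ that agrees with $\enc$ on $K_\epsilon$; hence $\tilde{\enc}_\epsilon \in \mathcal{C}$.

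The remainder is a standard error decomposition. I would split each of the two expectations over $K_\epsilon$ and $\dataspace \setminus K_\epsilon$. On $K_\epsilon$ the integrands for $\enc$ and $\tilde{\enc}_\epsilon$ coincide, while on $\dataspace \setminus K_\epsilon$ the assumption $\sup_{\dataspace \times \dataspace} |c| < C$ bounds each of the two remaining contributions by $C\epsilon$, giving $|\E_{X \sim \trueMeas}[c(X, \dec_{\param_1}(\tilde{\enc}_\epsilon(X)))] - \E_{X \sim \trueMeas}[c(X, \dec_{\param_1}(\enc(X)))]| \leq 2C\epsilon$. Consequently $\inf_{\enc' \in \mathcal{C}} \E_{X \sim \trueMeas}[c(X, \dec_{\param_1}(\enc'(X)))] \leq \E_{X \sim \trueMeas}[c(X, \dec_{\param_1}(\enc(X)))] + 2C\epsilon$; letting $\epsilon \to 0^+$ and then taking the infimum over $\enc \in \mathcal{F}$ yields the reverse inequality. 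The only real obstacle is justifying Lusin's theorem in this setting for a vector-valued measurable function: this is handled by the coordinate-wise reduction above, which works precisely because $\latentspace = \R^\lDim$ is finite-dimensional, and it is also what lets the classical real-valued Tietze theorem deliver a continuous extension on all of $\dataspace$.
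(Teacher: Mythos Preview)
Your proof is correct and follows essentially the same strategy as the paper's: both reduce the nontrivial inequality to Lusin's theorem combined with the uniform bound on $c$. The only cosmetic difference is that the paper invokes a version of Lusin's theorem (for maps into a second-countable target) that directly produces a globally continuous $\enc_\varepsilon$ agreeing with the measurable $\enc$ off a set of small $\trueMeas$-measure, whereas you obtain the same conclusion in two steps via coordinate-wise Lusin followed by Tietze extension.
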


\begin{proof} See \hyperref[app:prop1]{Appendix B.2}. \end{proof}

This result allows us to formally interpret two-step models as minimizers of an upper bound of the Wasserstein distance in the nonparametric regime when the assumption in \autoref{eq:assumption_th2} holds. We point out that this is a mild regularity condition, as it is always satisfied in the common case where $\dataspace$ is compact and $c$ is continuous (since continuous functions always achieve their supremums over compact sets).
\\

Finally, we point out that \citet{patrini2020sinkhorn} claimed that, as long as $\trueMeas$ is non-atomic, then the inequality in \autoref{eq:wae_formal} is actually an equality. This would allow us to interpret two-step models as minimizing Wasserstein distance -- not an upper bound -- even when using deterministic encoders. However, \citet{lee2024strwaes} found an error in the proof of \citet{patrini2020sinkhorn}, so that only the upper bound interpretation remains valid.
\end{tcolorbox}

\subsubsection{Latent Diffusion Models}\label{sec:latent_dm}

Latent diffusion models \citep{rombach2022high, peebles2023scalable, zhang2024mixed} are another class of two-step models (\autoref{sec:two_step}). They first train a regularized autoencoder, which combines a Gaussian variational autoencoder objective (\autoref{sec:vaes}) with various potential regularizers \citep{larsen2016autoencoding, higgins2017betavae, van2017neural}. 
Once this autoencoder-based model is trained and the corresponding low-dimensional representations obtained, a diffusion model (\autoref{sec:diffusion_models}) $s_{\param_2}^Z: \latentspace \times (0, T] \rightarrow \latentspace$ is trained on them as the second-step model.

As previously discussed,  diffusion models can learn manifolds, but their score function must diverge to infinity at the end of the backward process (\autoref{eq:backward_diffusion_model}) as a consequence of the mismatch between the intrinsic and ambient dimensions of the data. To test that latent diffusion models are not as sensitive to this numerical pathology, we trained a diffusion model and a latent diffusion model on the CIFAR-10 dataset \citep{krizhevsky2009learning}, with all experimental details provided in \autoref{app:exp_details}. We plot the average squared Euclidean norm of the score functions, normalized by their dimension,\footnote{Note that normalizing squared Euclidean norm by dimension is the most natural way of enabling comparisons across dimensions. To see this consider a constant vector with all entries equal to $1$, which has a squared $\ell_2$ norm equal to its dimension; or consider a standard Gaussian vector, whose expected squared Euclidean norm also matches its dimension.} along generated paths in \autoref{fig:score_norms}: it is evident that the score function of diffusion models on latent space exhibits much better numerical behaviour than when these models are trained on ambient space. This result is suggested by theory, and to the best of our knowledge, we are the first to empirically confirm it.

\begin{figure}[t]
    \centering
    \includegraphics[scale=0.18]{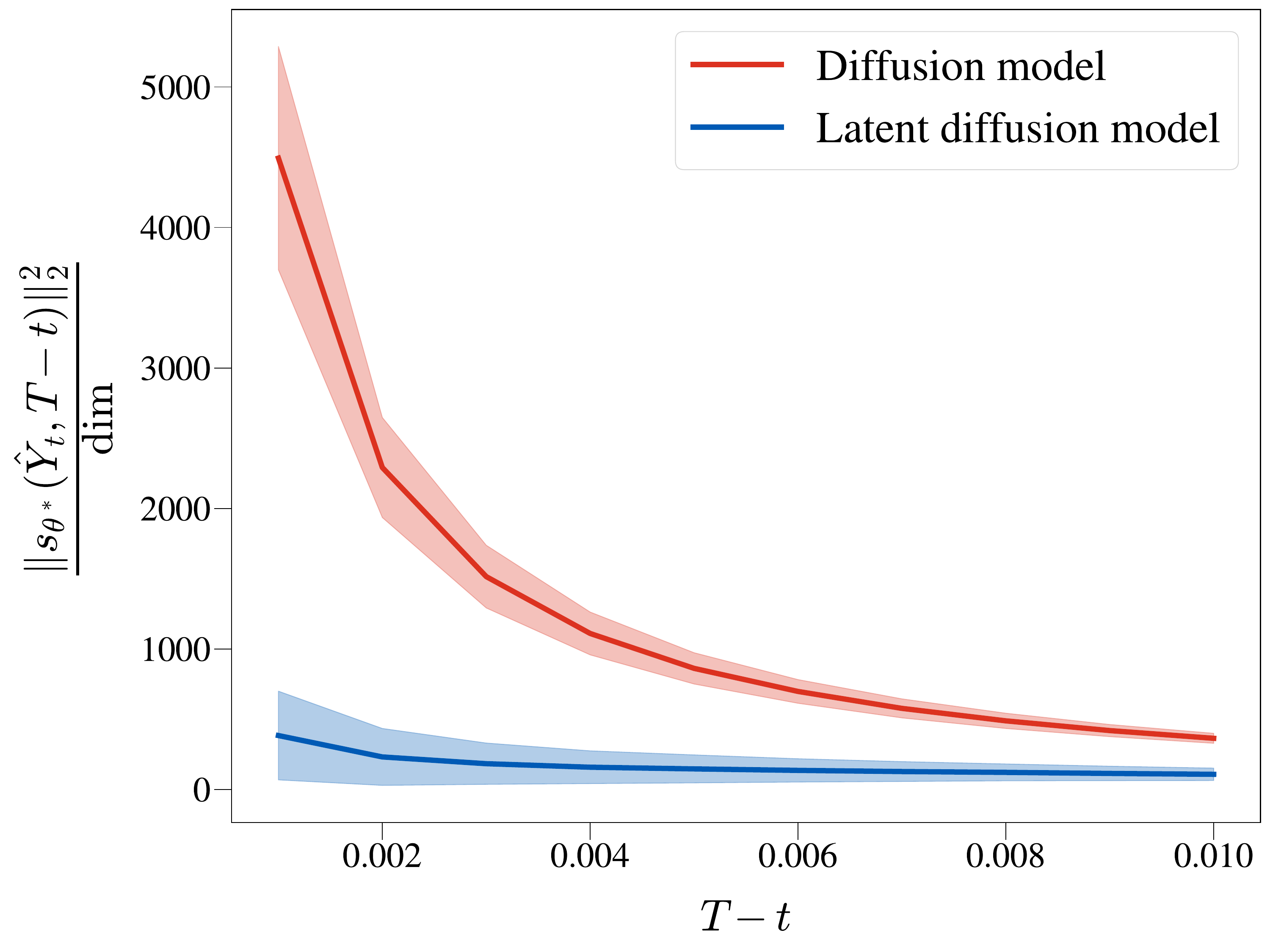}
    \caption{Average squared norm of the learned score function on CIFAR-10 over $100$ generated paths from \autoref{eq:backward_diffusion_model}, normalized by dimension (i.e.\ $\texttt{dim}=D=3072$ for diffusion models, and $\texttt{dim}=d=256$ for latent diffusion models); the shaded area corresponds to one standard deviation. The paths are stopped at time $T-\varepsilon$, with $\varepsilon=0.001$.}
    \label{fig:score_norms}
\end{figure}

Currently, latent diffusion models are amongst the best performing DGMs empirically, and the manifold lens provides a convincing explanation for this: $(i)$ they can learn manifolds; $(ii)$ they alleviate the numerical issues of diffusion models; and $(iii)$ they are robust to misspecification of the dimension of the latent space, in the sense that even if $\mDim < \lDim$ (i.e.\ the latent dimension is specified as larger than the true intrinsic dimension), they still learn their target distribution -- albeit with an exploding score function. To see this, simply note that setting $\mDim < \lDim$ results in a second-step model whose target distribution $q^Z_{\paramAlt^\ast}$ is still supported on a manifold $\enc_{\paramAlt^\ast}(\M)$ of lower-than-ambient dimension, which the diffusion model from the second step can still learn. Although in this case the score function would also diverge to infinity, the fact that the difference between the ambient and intrinsic dimensions for the latent model ($\lDim - \mDim$) remains much smaller than for a model on ambient space ($\aDim - \mDim$) intuitively suggests that the numerical issues should nonetheless be diminished for latent diffusion models. Indeed, despite the latent diffusion model shown in \autoref{fig:score_norms} using $d = 256$, which is likely larger than the true intrinsic dimension $\mDim$ of CIFAR-10 \citep{pope2021intrinsic}, it has a numerically much better behaved score function than the diffusion model on ambient space.

\subsubsection{Injective Normalizing Flows}\label{sec:infs}

Ordinary normalizing flows (\autoref{sec:nfs}) are full-dimensional density models with $\aDim$-dimensional latent spaces, conflicting with the $\mDim$-dimensional nature of $\trueDens$. To correct this mismatch, a line of research started by \citet{kumar2020regularized} has proposed to shrink the NF's latent space dimensionality to $\lDim < \aDim$, allowing the model to represent densities on a low-dimensional submanifold of $\dataspace$. Whereas standard NF architectures must be bijective, $\dec_{\param_1}$ now cannot be bijective because it maps from $\lDim$ to $\aDim$ dimensions. The most one can ask is that $\dec_{\param_1}$ be \emph{injective}, resulting in the injective normalizing flow (INF).

\citet{brehmer2020flows} enforce injectivity architecturally, by constructing $\dec_{\param_1}$ as a zero-padding operation followed by a $\aDim$-dimensional NF, in which case the left inverse $\enc_{\param_1}$ is given by inverting this NF and applying a projection operation; this is the same construction as the one used by denoising NFs (\autoref{sec:noisy_nfs}).\footnote{Like in standard NFs, here the encoder $\enc_{\param_1}$ is determined by the decoder $\dec_{\param_1}$, and it is thus parameterized by the same parameters (i.e.\ $\param_1$), so no auxiliary parameters $\paramAlt$ are needed to parameterize it.} Injectivity enables density evaluation through the injective change-of-variables formula (\autoref{eq:change_variable_inj}),
\begin{equation}\label{eq:change_variable_inj2}
    \ambientDens_{\param}(x) = \latentDens_{\param_2}(z)\left| \det \left( \nabla_z \dec_{\param_1}(z)^\top \nabla_z \dec_{\param_1}(z) \right) \right|^{-\frac{1}{2}},
\end{equation}
where $z = \enc_{\param_1}(x)$. 
Unlike standard NFs, INFs cannot be na\"ively trained through maximum-likelihood for two main reasons: $(i)$ the involved determinant is much more computationally challenging to compute and optimize than in standard NFs, and more importantly $(ii)$ \citet{brehmer2020flows} showed that doing so would result in pathological solutions.
To understand why, recall that \autoref{eq:change_variable_inj2} is only valid when $x \in \dec_{\param_1}(\latentspace)$. When $x$ lies outside $\dec_{\param_1}(\latentspace)$, the right hand side of \autoref{eq:change_variable_inj2} evaluates to $\modelDens(\hat{x}_{\param_1}(x))$, where $\hat{x}_{\param_1}(x) \coloneqq \dec_{\param_1}(\enc_{\param_1}(x))$ can be thought of as a projection of $x$ onto $\dec_{\param_1}(\latentspace)$. Since $\dec_{\param_1}(\latentspace)$ need not perfectly match the data manifold $\M$, attempting to maximize $\E_{X \sim \trueDens}[\log \latentDens_{\param_2}(\enc_{\param_1}(X)) - \tfrac{1}{2}\log |\det ( \nabla_z \dec_{\param_1}(\enc_{\param_1}(X))^\top \nabla_z \dec_{\param_1}(\enc_{\param_1}(X)))|]$ over $\param$ would thus result in maximizing the likelihood of projected data. As illustrated in \autoref{fig:inj_flow_pathology}, this objective can admit pathological solutions where the projections collapse onto a single point whose likelihood is sent to infinity.

\begin{figure}[t]
    \centering
    \includegraphics[trim={0 10 0 60}, clip, scale=0.21]{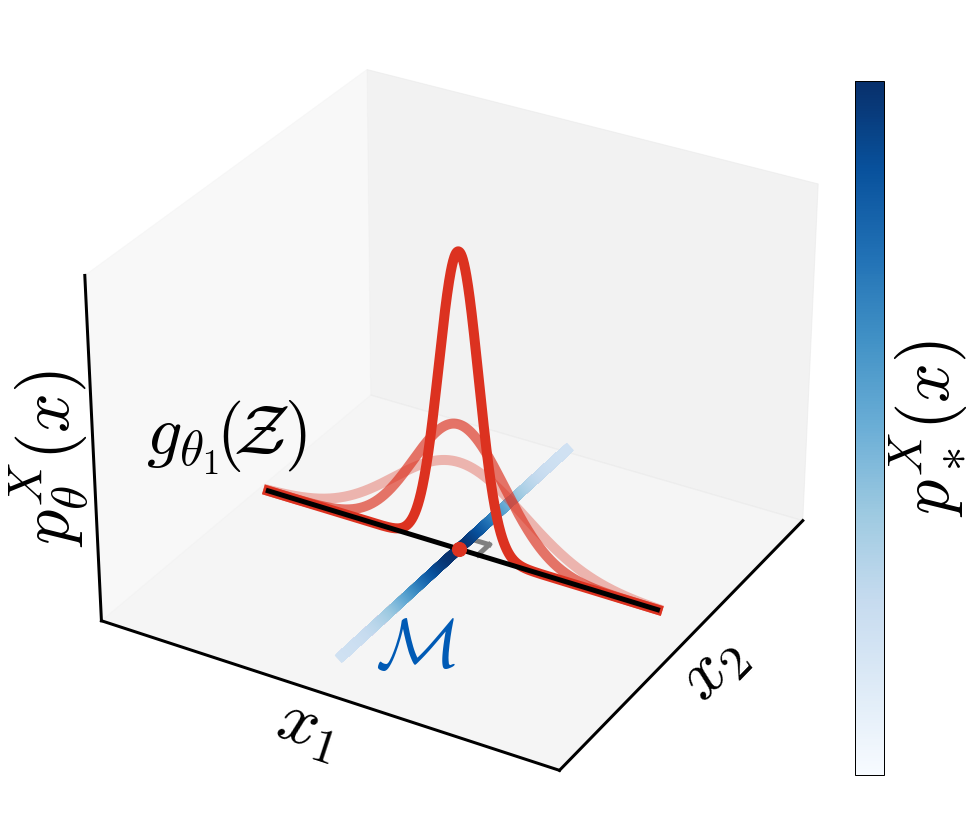}
    \caption{Pathology of na\"ively maximizing \autoref{eq:change_variable_inj2}. In this case, the true density is a standard Gaussian along $\M = \{(0, x_2) \in \R^2 \mid x_2 \in \R\}$, here shown in blue with darker values indicating higher density. The model can maximize likelihood by aligning $g_{\theta_1}(\latentspace)$ perpendicular to $\M$, and then learning a density along $g_{\theta_1}(\latentspace)$ that becomes infinitely peaked at the projection $\hat x_{\param_1}(x)$ onto $\M$; in this example the projection will always lie at the origin for any $x \in \R^2$. In the figure, we show $\dec_{\param_1}(\latentspace) = \{(x_1, 0) \in \R^2 \mid x_1 \in \R\}$ with the black line, the projection with the red dot, and increasing peakedness of $\modelDens$ with increasing opacity.}
    \label{fig:inj_flow_pathology}
\end{figure}

To circumvent this issue, \citet{brehmer2020flows} propose to train INFs as two-step models (\autoref{sec:two_step}), where $\dec_{\param_1}$ and $\enc_{\param_1}$ are trained to minimize an $\ell_2$ reconstruction error as in \autoref{eq:ae_l2_step1}; this training procedure also obviates the need to optimize through the determinant in \autoref{eq:change_variable_inj2}. \citet{brehmer2020flows} then instantiate $\latentDens_{\param_2}$ as a $\lDim$-dimensional NF, which is trained on encoded data $\enc_{\param_1^\ast}(X)$, where $X \sim \trueDens$. \citet{kothari2021trumpets} follow up on this work by proposing a more efficient architecture for $\dec_{\param_1}$. \citet{kumar2020regularized} originally proposed relaxed INFs, for which the encoder $\encoder$ is parameterized separately (and encouraged to invert the decoder on $\M$ through a reconstruction error), and where injectivity is instead encouraged by regularizing the singular values of $\nabla_z \dec_{\param_1}(\encoder(X))$ for $X \sim \trueDens$. They then take the second-step density $\latentDens_{\param_2}$ as a Gaussian mixture model. By virtue of being two-step models, all these DGMs are manifold-aware.

\paragraph{End-to-end training} As mentioned in \autoref{sec:two_step}, it is intuitively desirable to find an end-to-end objective to train two-step models, and INFs provide particularly interesting opportunities. To circumvent the pathological behaviour of na\"ive maximum-likelihood training of INFs outlined above, several works encourage perfect reconstructions with the goal of ensuring that the model manifold matches the true data manifold:
\begin{align}
    \begin{split}
        & \max_\param \E_{X \sim \trueDens}\left[ \log \latentDens_{\param_2}(\enc_{\param_1}(X)) - \tfrac{1}{2}\log \left|\det \left( \nabla_z \dec_{\param_1}(\enc_{\param_1}(X))^\top \nabla_z \dec_{\param_1}(\enc_{\param_1}(X)) \right) \right| \right] \\
    & \text{subject to } \E_{X \sim \trueDens}\left[\Vert X - \dec_{\param_1}(\enc_{\param_1}(X))\Vert_2^2\right]=0.
    \end{split}
\end{align}
In practice, the constraint can be encouraged by adding an $\ell_2$ regularization term to the likelihood,
\begin{equation}\label{eq:obj_rnfs}
    \max_\param \E_{X \sim \trueDens}\left[ \log \latentDens_{\param_2}(\enc_{\param_1}(X)) - \tfrac{1}{2}\log \left|\det \left( \nabla_z \decoder(\enc_{\param_1}(X))^\top \nabla_z \dec_{\param_1}(\enc_{\param_1}(X)) \right) \right| - \beta \Vert X - \dec_{\param_1}(\enc_{\param_1}(X))\Vert_2^2 \right],
\end{equation}
where $\beta > 0$ is a hyperparameter. Much like how normalizing flows focus on tractability of the log-det-Jacobian term, a central theme of end-to-end injective flows is tractability of the second term in \autoref{eq:obj_rnfs}, which we will refer to as the ``log-det-$J^\top J$'' term.
One approach is to impose structural constraints on $\dec_{\theta_1}$ to make the log-det-$J^\top J$ term easily computable, albeit at the cost of expressiveness, for example using conformal embeddings \citep{ross2021tractable}.
A concurrent approach, pursued by \citet{caterini2021rectangular}, is to approximate the gradient of the log-det-$J^\top J$ term with respect to $\theta_1$ through a combination of Hutchinson's estimator \citep{hutchinson1989stochastic} and various tricks from linear algebra and automatic differentiation \citep{baydin2018automatic}.
Despite these approximations, tractability remains an issue with this technique and it struggles to scale to datasets of higher ambient dimensionality than CIFAR-10 \citep{krizhevsky2009learning}.

Denoising NFs (\autoref{sec:noisy_nfs}) perform single-step training of a flow with an injective component for generation, but with a full-dimensional model $p^{X_\sigma}_\theta$ of a noised-out data density $p_\ast^{X_\sigma} \coloneqq \trueDens \circledast \mathcal N(\, \cdot \,; 0, \sigma^2 I_\aDim)$. 
The objective for denoising NFs (\autoref{eq:obj_dnfs}) ends up quite similar to \autoref{eq:obj_rnfs}, although with two important differences: $(i)$ the expectation is over $ p_\ast^{X_\sigma}$ rather than $\trueDens$, and $(ii)$ the formulation of $p^{X_\sigma}_\theta$ as a model for $p_\ast^{X_\sigma}$ eliminates the need to optimize over the costly log-det-$J^\top J$ term. 
However, the computational benefit comes at the cost of introducing a disconnect between the injective generator and the full-dimensional density model. In particular, the numerical instabilities described in \autoref{sec:pathology_ours} do not apply to INFs trained through \autoref{eq:obj_rnfs} since no full-dimensional densities are involved, whereas they do apply denoising NFs.

\citet{cunningham2022principal} and \citet{flouris2023canonical} both propose injective variants of the flow models described in \autoref{sec:pcfs} using similar objectives to \citet{caterini2021rectangular}. \citet{cunningham2022principal} in particular use a regularizer that, with a certain hyperparameter setting, cancels out the log-det-$J^\top J$ term in the likelihood, making likelihood-based optimization of injective flows more efficient.

Meanwhile, several works parameterize the encoder $\encoder$ separately, as \citet{kumar2020regularized} did. 
\citet{zhang2020perceptual} proposed a similar objective to \autoref{eq:obj_rnfs} in the context of variational autoencoders (\autoref{sec:vaes}). \citet{sorrenson2024likelihood} argue that the objective in \autoref{eq:obj_rnfs} is subject to similar pathologies to those outlined by \citet{brehmer2020flows} for na\"ive maximum-likelihood training if the encoder and decoder are flexible enough. 
More specifically, they contend that \autoref{eq:obj_rnfs} can be made pathologically large by learning a manifold with arbitrarily large curvature rather than $\M$. 
This is interesting as it highlights that despite being directly motivated to account for the low-dimensional structure of the data, INFs trained through \autoref{eq:obj_rnfs} can nonetheless still fail to learn manifolds. \citet{sorrenson2024likelihood} propose an intuitively well-motivated but theoretically ad-hoc modification to the objective, along with an improved gradient estimator over that of \citet{caterini2021rectangular}, for end-to-end training of INFs. Their model no longer falls under the umbrella of two-step models, and hence its ability to learn the manifold is unclear, but it does inherit computational benefits as it circumvents calculation of the challenging log-det-$J^\top J$ term \citep{brehmer2020flows}. 
Finally, \citet{nazari2023geometric} proposed an autoencoder for manifold learning (not generative modelling) which penalizes the variance of the log-det-$J^\top J$ term during training, and which results in a smoother and more interpretable latent space as compared to standard autoencoders.

\subsection{Overcoming Topological Obstacles to Manifold Learning}\label{sec:topology}

As mentioned in \autoref{sec:two_step}, one might want to set the latent dimension $\lDim$ of an autoencoder to $\mDim$. 
Yet, as discussed in \autoref{sec:manifold_learning}, when $\lDim=\mDim$, perfect manifold learning is not always achievable through bottleneck methods for topological reasons. We illustrate this problem, which can cause downstream issues with density estimation, in \autoref{fig:topology1}. In particular, if $\M$ has any non-trivial topological properties such as holes or disconnected components, any attempt to model $\M$ as the image of a decoder $\decoder$ will cause numerical instability in $\decoder$ \citep{cornish2020relaxing, salmona2022can}.
This problem was originally identified in the context of normalizing flows (\autoref{sec:nfs}), with a line of work that first appends additional dimensions to $\dataspace$, and then trains a normalizing flow on the augmented space \citep{dupont2019augmented, chen2020vflow, huang2020augmented}. While these techniques indeed increase the stability and expressiveness of the flow itself, they still produce full-support densities that can never truly model non-trivial topological structures in data. 
Other works have leveraged tools from the field of topological data analysis \citep{chazal2021introduction, barannikov2022representation} to explicitly regularize autoencoders with the goal of encouraging $\encoder(\M)$ to share topological properties with $\M$ \citep{moor2020topological, trofimov2023learning}. Empirically, these methods have been shown to decrease topological mismatch; yet, as argued in the grey box in \autoref{sec:manifold_learning}, completely eliminating this mismatch is theoretically impossible when the root cause of the problem is the non-existence of a topological embedding of $\M$ into $\latentspace$, rather than the loss used to train the autoencoder. 
Alternatively, to more faithfully tackle topological issues, some works have proposed to restructure the manifold-learning step to better reflect the ways manifolds are defined in theory; we cover these approaches in detail below.

\begin{figure}[t]
    \centering
    \begin{minipage}{0.5\textwidth}
    \subfigure[]{
    \label{fig:topology1}
     \centering
    \includegraphics[width=\linewidth]{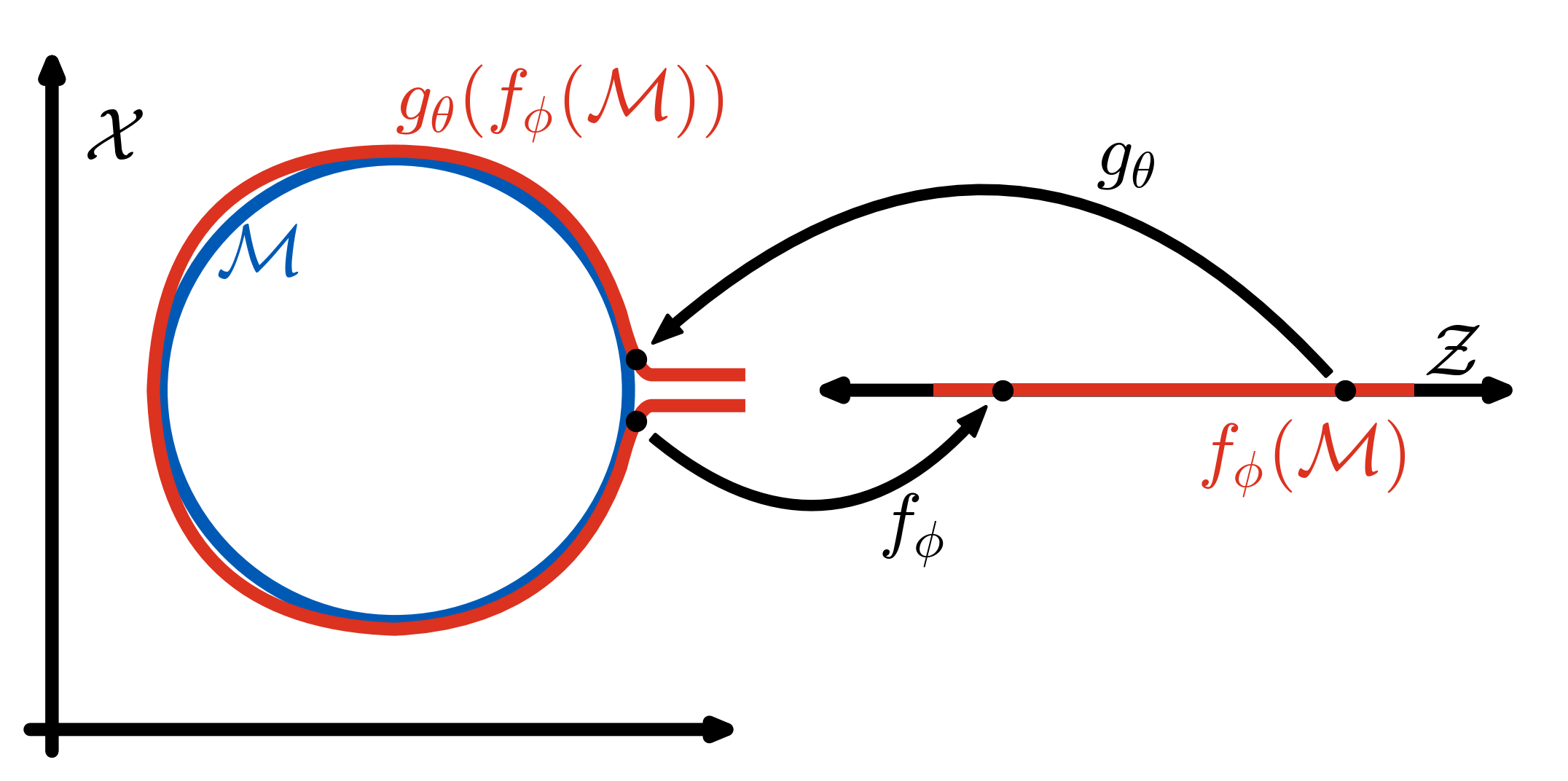}}
    \subfigure[]{
    \setcounter{subfigure}{3}
    \label{fig:topology3}
        \includegraphics[width=\linewidth]{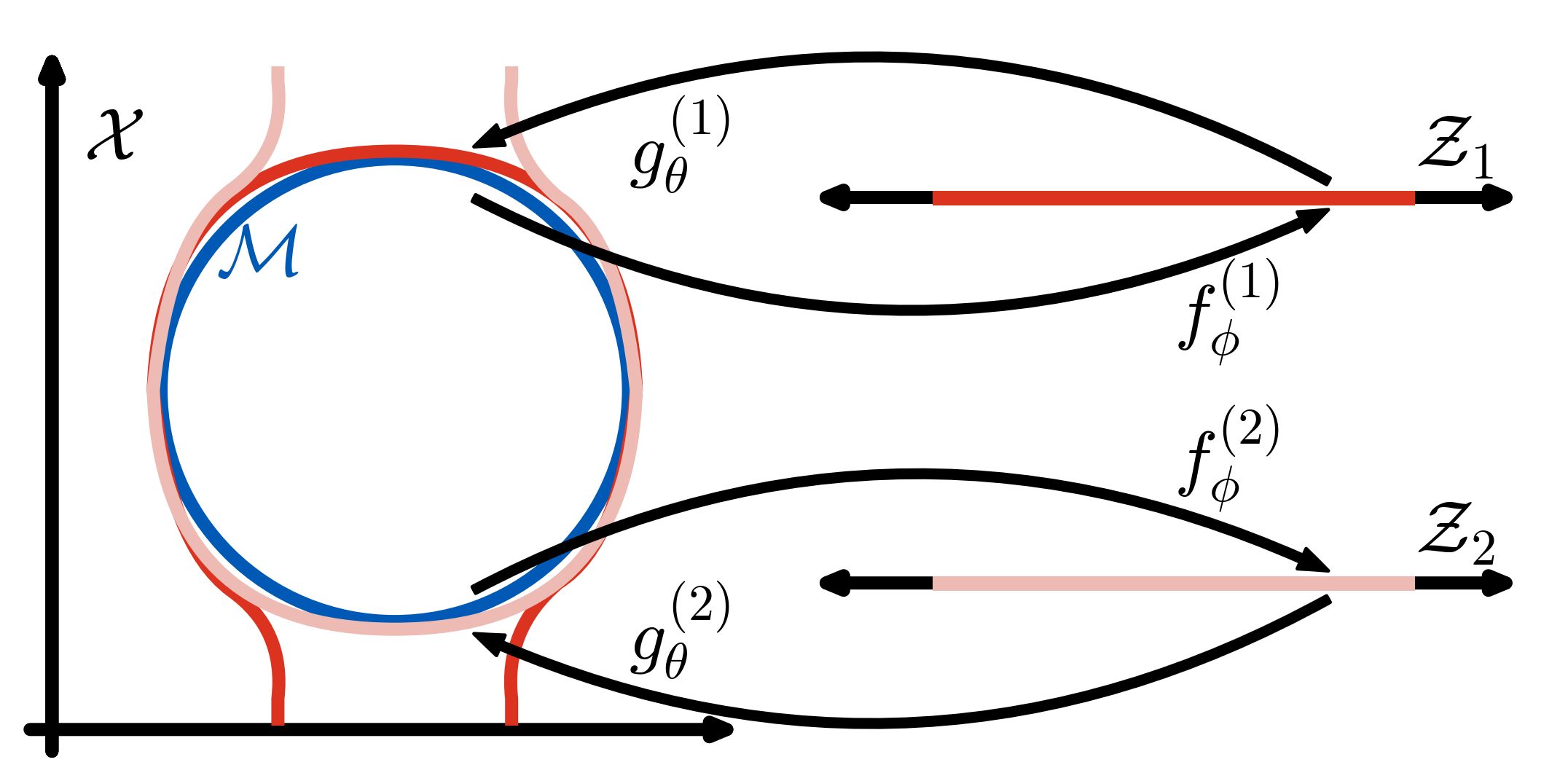}
    }
    \end{minipage}
    \begin{minipage}{0.45\textwidth}
        \subfigure[]{
        \setcounter{subfigure}{2}
        \label{fig:topology2}
        \centering
        \includegraphics[width=0.9\linewidth]{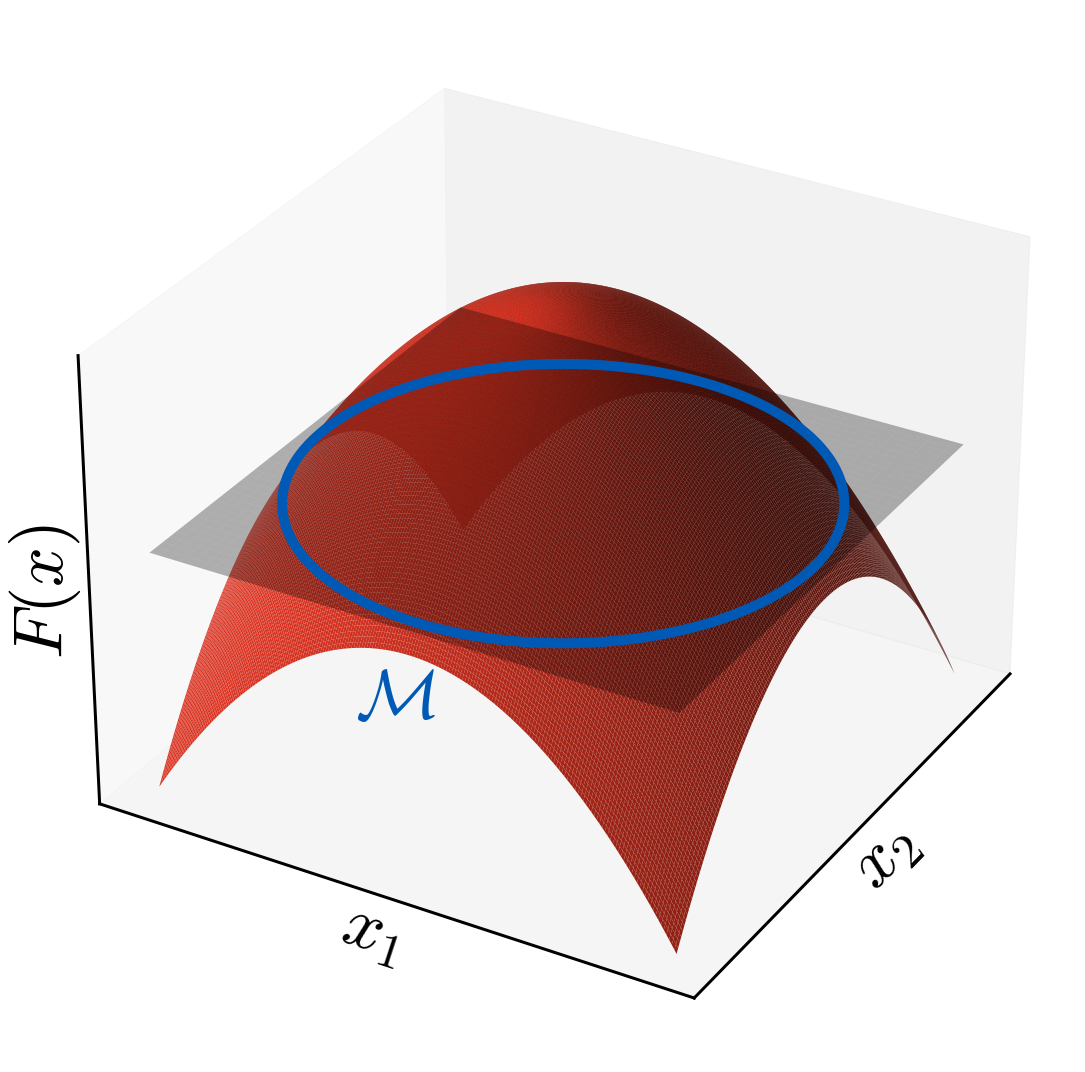}
    }
    \end{minipage}
    \caption{Models for $\M = \{(x_1, x_2) \mid x_1^2 + x_2^2 = 1\}$, the unit circle in $\dataspace = \R^2$. \textbf{(a)} Using a single encoder-decoder pair to model the circle. The pair approaches numerical non-invertibility since the encoder $\encoder$ must map two nearby points in $\M$ (in black) to two distant latent points in $\latentspace$ (also in black). 
    \textbf{(b)} Illustration of implicit manifolds.
    Here, $\M$ is characterized as the $0$-level set of $F: \R^2 \rightarrow \R$. Specifically, $F(x_1, x_2) = 1-(x_1^2 + x_2^2)$ is shown in red, and the grey plane corresponds to $\{x \in \R^3 \mid x_3=0\}$. Neural implicit manifolds use no autoencoders, and instead attempt to learn $F_\param$ so that its $0$-level set, $F_\param^{-1}(\{0\})$, matches $\M$. 
    \textbf{(c)} The problem depicted in (a) can also be circumvented by employing multiple encoder-decoder pairs, each one using its own latent space and covering a different part of $\M$.
} 
\end{figure}

\subsubsection{Neural Implicit Manifolds}\label{sec:nim}
Under some conditions, manifolds with non-trivial topologies can be defined using level sets of functions. In particular, a level set of a smooth function $\mdf: \R^{\aDim} \to \R^{\aDim - \mDim}$ represents a $\mDim$-dimensional manifold if its Jacobian has full rank on that level set \citep{lee2012smooth}.
We illustrate such an implicitly defined manifold -- which cannot be characterized with a single encoder-decoder pair -- in \hyperref[fig:topology2]{Figure 10(b)}.

Neural implicit manifold learning \citep{ross2022neural} operationalizes this fact by modelling $\M$ as the zero set of a neural network $\neuralMdf: \dataspace \to \R^{\aDim - \lDim}$. The network $\neuralMdf$ is trained to align its zero set $\neuralMdf^{-1}(\{0\})$ with $\M$ while being regularized to have full rank on $\M$. The following loss is used,
\begin{equation}\label{eq:mdf_loss}
    \min_{\param_1} \E_{X \sim \trueDens, Y \sim q_{\param_1'}^X, V \sim \mathcal{U}(\, \cdot \, ;\mathcal{S}^{\aDim-\lDim-1})}\Big[\Vert \neuralMdf(X)\Vert_2-\alpha\Vert \neuralMdf(Y)\Vert_2+\beta\!\left(\eta -\Vert V^\top\! \nabla_x\neuralMdf(X)\Vert_2\right)_{+}^2\Big],
\end{equation}
where: $Y$ is independent of $X$; $q^X_{\param_1'}(y) \propto e^{-\Vert F_{\param_1'}(y) \Vert_2^2}$ with $\param_1' = \mathtt{stopgrad}(\param_1)$ (as in energy-based models, see \autoref{sec:ebms}); 
$\mathcal{U}(\, \cdot \, ;\mathcal{S}^{\aDim - \lDim -1})$ is the uniform distribution over $\mathcal{S}^{\aDim - \lDim -1} \coloneqq \{y \in \R^{\aDim - \lDim} \mid \Vert y \Vert_2=1\}$, the $(\aDim{-}\lDim{-}1)$-sphere in $\R^{\aDim-\lDim}$; $\alpha > 0$, $\beta > 0$, and $\eta > 0$ are hyperparameters; and $(\cdot)_+ \coloneqq \max(\, \cdot \, , 0)$. In this loss, the first term ensures that $\neuralMdf^{-1}(\{0\})$ contains $\M$, the second term prevents $\neuralMdf^{-1}(\{0\})$ from containing off-manifold samples, and the third term regularizes the Jacobian of $\neuralMdf$ to have full rank on $\M$. 

Importantly, $\neuralMdf$ here is not an encoder; it is best interpreted as defining $\aDim - \lDim$ non-linear constraints on the data, thus leaving $\lDim$ degrees of freedom for the data manifold. The full-rank requirement can then be interpreted as ensuring none of these constraints is redundant, thereby ensuring the learned manifold has the correct dimensionality. This constraint-based learning procedure for $\M$ makes implicit manifold learning unusual among two-step models (\autoref{sec:two_step}) in that it is not autoencoder-based. The lack of encoder in this method means there is no latent space, which presents a challenge for learning the distribution on the model manifold, $\neuralMdfOpt^{-1}(\{0\})$.

\cite{ross2022neural} propose to learn this distribution with the constrained energy-based model, which represents an EBM constrained to the learned manifold, $E: \neuralMdfOpt^{-1}(\{0\}) \to \R$. This is parameterized in practice by a neural network $E_{\param_2}: \dataspace \to \R$, for which values are ignored outside of the learned manifold $\neuralMdfOpt^{-1}(\{0\})$. 
To sample from constrained EBMs, constrained Langevin dynamics \citep{brubaker2012family} is used to generate samples from $E_{\param_2}$ constrained to the manifold. This allows for likelihood maximization via \autoref{eq:contrastive_divergence}, as with ordinary EBMs -- except constrained EBMs are manifold-supported.

\subsubsection{Multi-Chart Manifolds}\label{sec:multi_chart}

Typical techniques model the manifold globally using a single encoder-decoder pair. In general, however, manifolds can consist of a patchwork of many \emph{charts}: mathematical objects that each serve roughly the same function as a single encoder-decoder pair (for a formal definition, please see \citet{lee2012smooth}). Some manifolds may thus require many encoder-decoder pairs $(\encoder^{(i)}, \decoder^{(i)})_{i=1}^n$ -- each using its latent space $\latentspace_i$ to locally describe some subset of the manifold -- rather than a single one: we illustrate this fact in \hyperref[fig:topology3]{Figure 10(c)}.

Several works have taken this route to learn the data manifold. \citet{schonsheck2019chart} first proposed a multi-chart latent space using multiple encoder-decoder pairs in a non-generative modelling context. For each incoming datapoint, the correct chart is selected dynamically using a prediction head for the encoder-decoder pair with the smallest reconstruction error. \citet{kalatzis2021density} propose multi-chart flows, in which likelihoods are defined using a mixture of injective normalizing flows (\autoref{sec:infs}), with mixture weights again computed using a prediction head. On the other hand, \citet{sidheekh2022vq} propose a mixture of INFs in which chart membership for an incoming datapoint is computed using discrete latent assignments.

Modelling a manifold with multiple charts imposes drawbacks. For one, each encoder-decoder pair has its own latent space, so for a given datapoint $x \in \M$, choosing the correct encoder can be a challenge. This makes it unclear how to perform tasks involving the manipulation of latent representations, such as interpolation. In many cases, there is no single correct encoder, as the images of various decoders need to overlap to correctly define topologically complex manifolds (such as in \hyperref[fig:topology3]{Figure 10(c)}). The ambiguity of choosing the correct encoder is underlined by how differently each of the aforementioned methods attempt to do so.

\subsubsection{Disconnected Manifolds}\label{sec:disconnected}

Another common source of topological complexity in the data manifold is when it consists of more than one connected component. This situation occurs, for example, in datasets with multiple disjoint classes. In this context, theoretical analyses have shown  that any decoder-based model will suffer from training instability \citep{salmona2022can} and poor sample quality \citep{luzi2020ensembles}.\footnote{Note that the theoretical analysis of \citet{salmona2022can} shows numerical instability when $\trueDens$ is multimodal, in which case its support $\M$ can be considered as numerically disconnected.} 
One technique to improve sample quality is to avoid sampling from latent regions where the network $\decoderOpt$ is unstable. \citet{tanielian2020learning} propose, in the context of generative adversarial networks (\autoref{sec:gans} and \autoref{sec:wgans}), to reject samples $X = \dec_{\param^\ast}(Z)$, where $Z \sim \latentDens$, for which $\nabla_z \dec_{\param^\ast}(Z)$ has a high Frobenius norm, which they show indicates an off-manifold sample. Other work, discussed below, seeks to avoid instability entirely during training.

A few general techniques have been proposed for modelling disconnected manifolds. One way is to use disconnected (or near-disconnected) latent distributions, which aims to match the topology of the support of $\latentDens_\param$ with that of $\M$. This is typically done with a Gaussian mixture model for $\latentDens_\param$ and has been proposed for multiple classes of generative model \citep{nalisnick2016approximate, dilokthanakul2016deep, jiang2017variational,ben2018gaussian,izmailov2020semi}. 

A related approach is to use multiple decoder networks in a similar manner to the aforementioned multi-chart methods from \autoref{sec:multi_chart}. The model then becomes a mixture $\modelDens(x) = \sum_{i=1}^n \pi_i p_{\param, i}^X(x)$ of generative submodels $p_{\param, i}^X$, where $\pi_1, \ldots, \pi_n$ are the mixture weights (sometimes trainable). For example, \citet{arora2017generalization} propose to directly train a mixture of generative adversarial networks to stabilize training, wherein the entire mixture is learned with backpropagation. 
\citet{cornish2020relaxing} use a hierarchical continuously-indexed mixture of normalizing flows (\autoref{sec:nfs}). Other work uses techniques based on expectation-maximization \citep{dempster1977maximum} to train the mixture \citep{banijamali2017generative,locatello2018competitive}. A key challenge in this line of work is to encourage different submodels to model distinct parts of the distributions. This can be done by partitioning the data beforehand, by class \citep{luzi2020ensembles} or through unsupervised clustering \citep{brown2022union}, and training a model on each partition. A more flexible approach is to backpropagate through an ancillary classification model to encourage each submodel to generate data from distinct manifolds \citep{hoang2018mgan, khayatkhoei2018disconnected, ghosh2018multi}.

While all the models mentioned above can properly account for some topological features of $\M$ such as disconnectedness, we highlight that most are nonetheless manifold-unaware. For example, full-dimensional models trained through maximum-likelihood remain exposed to the corresponding pathologies (\autoref{sec:manifold_overfitting}), even when they are mixture models.

\section{Discrete Deep Generative Models}\label{sec:other_dgms}
Since this survey's focus is on the manifold hypothesis, all of the models presented thus far are for continuous distributions. Nonetheless, many DGMs assume that the ambient space $\dataspace$ is discrete. For example, images can be modelled as having pixels which take only finitely many different values, rather than a continuum of them. In this case, $\trueDens$ and $\modelDens$ are both probability mass functions over $\dataspace$, and $\M \subset \dataspace$ denotes the support of $\trueDens$. Formally, in this setting $\dataspace$ is a $0$-dimensional manifold, so that even when $\M$ is a strict subset of $\dataspace$, it remains a $0$-dimensional submanifold. In other words, there can be no dimensionality mismatch for discrete data since $\aDim=\mDim=0$. In turn, this implies that mathematically, discrete likelihood-based DGMs are not exposed to problems such as manifold overfitting (\autoref{sec:manifold_overfitting}) which arise from dimensionality mismatch. This view of discrete DGMs through the manifold lens is useful, since it suggests that whenever a manifold-unaware DGM admits a straightforward discrete analogue, the latter should be preferred as it will be unaffected by manifold-related woes. Indeed, as mentioned in \autoref{sec:vaes}, discrete variational autoencoders \citep{gulrajani2017pixelvae, vahdat2020nvae, vahdat2021score} empirically outperform their continuous variants. Similarly, discrete incarnations of likelihood-based autoregressive DGMs \citep{germain2015made, van2016conditional, salimans2017pixelcnn++, parmar2018image} outperform continuous ones \citep{uria2013rnade}. In contrast, discrete versions of diffusion models \citep{austin2021structured, campbell2022continuous, meng2022concrete} do not outperform their manifold-aware continuous counterparts (\autoref{sec:diffusion_models}) when modelling images.

Discrete and continuous DGMs nonetheless have similarities, despite the differences outlined above. As discussed in \autoref{sec:intro}, a key motivation behind the manifold hypothesis is to capture the intuition that $\M$, the support of $\trueDens$, is somehow sparse within $\dataspace$. This intuition often remains true in the discrete case: using images as an example once again, there are $256^\aDim$ possible discrete images (assuming each pixel entry takes one of $256$ possible values), yet the subset of natural images is vanishingly small in comparison and contains orders of magnitude fewer elements. The main idea of continuous two-step models (\autoref{sec:two_step}), namely to first approximate the support of $\trueDens$ and then learn the distribution within, remains equally sensible in the discrete case. \citet{van2017neural} proposed an autoencoder which recovers discrete representations over which they train a discrete DGM; this idea that has been further developed, with strong empirical results \citep{razavi2019generating, esser2021taming, ramesh2021zero, chang2022maskgit}. We finish by pointing out that the discussion in \autoref{sec:two-step-wasserstein} applies to all these discrete two-step models, so that they can be interpreted as minimizing a potentially regularized upper bound of the Wasserstein distance between $\trueDens$ and $\modelDens$ which becomes tight at optimality, because in the discrete case, perfect reconstructions are always achievable given enough capacity of the encoder and decoder.

\section{Conclusions and Future Outlook}\label{sec:future}
\paragraph{Conclusions} In this survey we have carried out a review of deep generative models through the lens of the manifold hypothesis. This viewpoint presents a mathematically elegant perspective of DGMs, and suggests that manifold-awareness is an important necessary condition for strong empirical performance. We thus encourage researchers who are developing new DGMs to consider manifold-awareness as a desideratum, and ask themselves: \emph{Can my deep generative model learn distributions supported on unknown low-dimensional manifolds?} When the answer is yes, demonstrating this fact will strengthen the work's motivation; and when the answer is no, this suggests that the DGM can be improved by endowing it with manifold-awareness -- either through a model-specific fix, or at least by training it on latent space as a two-step model (\autoref{sec:two_step}). We also showed that numerical instabilities of likelihood-evaluation are unavoidable in the manifold setting (\autoref{sec:pathology_ours}) and that two-step models can be interpreted as minimizing a (potentially regularized) upper bound of the Wasserstein distance objective (\autoref{sec:two-step-wasserstein}).

\paragraph{Future outlook} Finally, we outline a non-exhaustive list of research directions involving deep generative models and their interplay with the manifold hypothesis. We believe these lines of inquiry are interesting, and mostly unexplored at the time of writing:
\begin{itemize}

\item \textbf{Further understanding dimensionality mismatch}\quad The effects of using a full-dimensional model when the ground truth distribution is manifold-supported are well understood for likelihood-based models (\autoref{sec:manifold_overfitting}) and diffusion models (\autoref{sec:diffusion_models}), yet our grasp of the interplay between DGMs and the manifold hypothesis remains incomplete. For example, a theoretical understanding of score matching (\autoref{sec:score_matching}) and conditional flow matching (\autoref{sec:flow_matching}) under misspecified dimension is lacking, as is the effect of using lower-bounded energy functions in energy-based models (\autoref{sec:ebms}). 

\item \textbf{Improved training of DGMs with two-step architectures}\quad Two-step models as presented in \autoref{sec:two_step} are manifold-aware. Yet, as also discussed in \autoref{sec:two_step}, two-step training does not encourage the encoder from the first step to represent the data in a way conducive to distribution learning in the second step. Intuitively, this means there is room for improvement in how these models are trained, and since the end-to-end approaches described at the end of \autoref{sec:two_step} are in general manifold-unaware, several avenues remain open. For example, despite the existence of regularizers for training autoencoders \citep{larsen2016autoencoding, higgins2017betavae, nazari2023geometric}, there is very little work explicitly designing autoencoders for two-step training. The only work we are aware of in this direction is by \citet{hu2023complexity}, who propose to split the first step into two sub-steps: in the first sub-step the encoder is trained along with a low-capacity decoder, and in the second sub-step the encoder is frozen and a more flexible decoder is trained. Another avenue is finding an end-to-end objective to train this type of model in a manifold-aware fashion. Current end-to-end methods are manifold-unaware, despite providing a desirable inductive bias -- an exception being generalized energy-based models (\autoref{sec:gebms}) which cannot be readily extended beyond using energy-based models (\autoref{sec:ebms}) as the latent distribution. We thus hypothesize that any end-to-end, or improved two-step, manifold-aware procedure which can train diffusion models in latent space while scaling to massive datasets \citep{schuhmann2022laionb} is likely to improve upon current commercial versions of latent diffusion models (\autoref{sec:latent_dm}).

\item \textbf{Extracting and leveraging manifold information}\quad Any manifold-aware DGM which succeeds at learning its target distribution $\trueDens$ must have learned its support $\M$ as well, albeit perhaps implicitly. Extracting information about $\M$ from a trained DGM is thus a natural problem, as is leveraging this information for any practical use. 
Various works have shown that trained DGMs induce Riemannian metrics over the learned manifolds \citep{shao2018riemannian, arvanitidis2018latent, chadebec2022geometric, sorrenson2024learning}, which can in turn be leveraged for interpolating between datapoints and for improved sampling procedures. 
Several works have also shown that DGMs can be used to estimate the intrinsic dimension of $\M$ \citep{tempczyk2022lidl, zheng2022learning, horvat2024gauge, kamkari2024geometric2, stanczuk2024diffusion}, and these quantities have already proven useful for unsupervised out-of-distribution detection and to identify memorized samples \citep{kamkari2024geometric, ross2024geometric, humayun2024local}. 
The field of topological data analysis \citep{chazal2021introduction} aims to extract topological and geometric features of $\M$ -- such as intrinsic dimension -- from an observed dataset, conventionally without the use of DGMs. Another fruitful direction for future research will involve further exploiting DGMs for topological data analysis.

\item \textbf{Finite-sample convergence rates}\quad All the analyses presented here assumed the nonparametric regime (\autoref{sec:setup}). As we have seen throughout our survey, this simplifying assumption enables a useful and practical understanding of DGMs through the lens of the manifold hypothesis. Yet, this assumption remains unrealistic since in practice expectations with respect to $\trueDens$ cannot be computed; $\trueDens$ must thus be approximated via its empirical distribution -- i.e.\ a mixture of (equally weighted) point masses at the (finitely many) observed datapoints. Formally, the empirical distribution is supported on a $0$-dimensional submanifold of $\dataspace$ -- namely, the observed dataset -- so that any flexible enough and sufficiently well optimized manifold-aware DGM should simply memorize its entire training dataset. Evidently manifold-awareness remains a desirable property -- statistical consistency under the manifold hypothesis is impossible without it -- but the fact that state-of-the-art DGMs do not suffer from total memorization cannot be explained while assuming the nonparametric regime. Thus, understanding what drives DGMs to generalize rather than memorize remains a relevant problem. \citet{kadkhodaie2024generalization} study these questions for diffusion models through the lens of the inductive biases provided through the architecture of the score network. More formal explanations of generalization are provided by statistical learning theory in the form of finite-sample convergence rates. 
Although these results often do not assume the manifold hypothesis, a recent line of work has, obtaining in turn much faster convergence rates which depend on intrinsic rather than ambient dimension \citep{schreuder2021statistical, huang2022error, dahal2022deep, 10.1214/23-AOS2291, chae2023likelihood,  chen2023score, oko2023diffusion, chakraborty2024statistical, chakraborty2024statistical2, hu2024statistical, vardanyan2024statistically, tang2024adaptivity}. We believe that  this research direction provides a challenging but highly promising avenue for a full theoretical understanding of DGMs.
\end{itemize}




\bibliography{main}
\bibliographystyle{tmlr}

\appendix
\newpage
\section{Weak Convergence Primer}\label{app:primer}

We now provide a brief summary of weak convergence of probability measures. We do not use a grey box around this section due to its length, despite the content being fairly technical. As in the main text, all the measures we consider here will be defined on $\dataspace$ along with its Borel $\sigma$-algebra. Given a sequence of probability measures $(\ambientMeas_{\param_t})_{t=1}^\infty$, we would like to define what it means for the sequence to converge to some probability measure $\ambientMeas_\dagger$. As we will see, there are several ways to define convergence of probability measures; we would like to use one which captures the intuition that $\ambientMeas_{\param_t}$ ``learns'' $\ambientMeas_\dagger$ in the sense that $\ambientMeas_{\param_t}$ converges to $\ambientMeas_\dagger$ as $t \rightarrow \infty$ if and only if samples from $\ambientMeas_{\param_t}$ become progressively harder to distinguish from those of $\ambientMeas_\dagger$ as $t$ becomes larger, becoming indistinguishable in the limit. When $\ambientMeas_{\param_t}$ represents a DGM, this is precisely the form of convergence we would hope to observe as we optimize its parameters $\theta_t$.

\paragraph{Strong convergence} The seemingly most natural way to define convergence is to say that $\ambientMeas_{\param_t}$ converges to $\ambientMeas_\dagger$ as $t \rightarrow \infty$ if $\ambientMeas_{\param_t}(B) \rightarrow \ambientMeas_\dagger(B)$ as $t \rightarrow \infty$ for every Borel set $B$. This type of convergence is called \emph{strong convergence}. As the name suggests, this type of convergence is too strong, to the point where it does not properly capture the intended intuition of ``$\ambientMeas_{\param_t}$ converges to $\ambientMeas_\dagger$ if and only if $\ambientMeas_{\param_t}$ learns $\ambientMeas_\dagger$''. 

Let us illustrate why this is the case with two examples. First, let $\ambientMeas_{\param_t}$ be Gaussian with mean $0$ and covariance matrix $1/t \, I_\aDim$. Intuitively, this sequence learns a point mass at $0$, $\delta_0$, yet it does not strongly converge to it: $\{0\}$ is a Borel set, and $\ambientMeas_{\param_t}(\{0\}) = 0 \rightarrow 0$ as $t \rightarrow \infty$, yet $\delta_0(\{0\}) = 1 \neq 0$. As a second example, consider $\ambientMeas_{\param_t} = \delta_{x_t}$, where $(x_t)_{t=1}^\infty$ is a fixed sequence converging to $0$ with $\{x_t\}_{t=1}^\infty \cap \{0\} = \emptyset$. Intuitively this sequence also learns $\delta_0$, but similarly to the previous example, $\ambientMeas_{\param_t}(\{0\})=0$ for every $t$ and thus the sequence does not strongly converge to $\delta_0$ either. 

We highlight that these are not overly-contrived examples in the DGM setting. The first example illustrates a common scenario where a sequence of full-dimensional models $\ambientMeas_{\param_t} \ll \lebesgue_\aDim$ ``learn'' a distribution $\ambientMeas_\dagger$ supported on a low-dimensional embedded submanifold $\M$ of $\dataspace$, without strongly converging to $\ambientMeas_\dagger$. In this case, $\M$ is also a Borel set, and $\ambientMeas_{\param_t}(\M) = 0 \rightarrow 0$ as $t \rightarrow \infty$ even though $\ambientMeas_\dagger(\M) = 1 \neq 0$. The second example illustrates how a sequence of models whose supports do not overlap with that of their target distribution can ``learn'' it without strongly converging to it. Indeed, we need a laxer definition of convergence of probability measures to properly convey the idea that a sequence of models $\ambientMeas_{\param_t}$ ``learns'' $\ambientMeas_\dagger$.

\paragraph{Weak convergence} Weak -- rather than strong -- convergence provides a more appropriate notion of convergence to convey ``learning''. We say that $\ambientMeas_{\param_t}$ \emph{converges weakly} to $\ambientMeas_\dagger$ if $\E_{X \sim \ambientMeas_{\param_t}}[h(X)] \rightarrow \E_{X \sim \ambientMeas_\dagger}[h(X)]$ as $t \rightarrow \infty$ for every bounded and continuous function $h: \dataspace \rightarrow \R$. As mentioned in \autoref{sec:notation_sub}, we write $\ambientMeas_{\param_t} \xrightarrow{\omega} \ambientMeas_\dagger$ as $t \rightarrow \infty$ to denote weak convergence. Intuitively, $\ambientMeas_{\param_t}$ converges weakly to $\ambientMeas_\dagger$ if, as $t \rightarrow \infty$, it becomes arbitrarily difficult to distinguish between samples from $\ambientMeas_{\param_t}$ and samples from $\ambientMeas_\dagger$ by using a bounded and continuous function; weak convergence matches the intuition of $\ambientMeas_{\param_t}$ ``learning'' $\ambientMeas_\dagger$ much better than strong convergence.

There are many equivalent definitions of weak convergence, with the standard one being the one presented above. The result establishing the equivalence of these definitions is called the \hyperref[lem:portmanteau]{Portmanteau Lemma}. We present a reduced version of this lemma below -- which we will use to prove the \hyperref[thm:main]{Likelihood Instability Theorem} in \hyperref[app:th1]{Appendix B.1} -- where only one of these equivalences is stated. Before stating the lemma, we define the continuity sets of a probability measure.

\begin{definition}[Continuity Set]\label{def:cont_set} Let $\ambientMeas_\dagger$ be a probability measure on $\dataspace$ and $B \subset \dataspace$ a Borel set. We say that $B$ is a \emph{continuity set} of $\ambientMeas_\dagger$ if $\ambientMeas_\dagger(\partial_\dataspace B) = 0$, where $\partial_\dataspace B$ denotes the topological boundary of $B$ on $\dataspace$.
\end{definition}

\begin{lemma}[Portmanteau] \label{lem:portmanteau} Let $\ambientMeas_\dagger$ be a probability measure on $\dataspace$, and let $(\ambientMeas_{\param_t})_{t=1}^\infty$ be a sequence of probability measures on $\dataspace$. Then, $\ambientMeas_{\param_t} \xrightarrow{\omega} \ambientMeas_\dagger$ as $t \rightarrow \infty$ if and only if $\ambientMeas_{\param_t}(B) \rightarrow \ambientMeas_\dagger(B)$ as $t \rightarrow \infty$ for every continuity set $B$ of $\ambientMeas_\dagger$.
\end{lemma}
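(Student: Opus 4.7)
The plan is to prove both directions via the standard pair of auxiliary inequalities for closed and open sets, which together yield the continuity-set characterization, and then invert the direction using a layer-cake decomposition of bounded continuous functions.

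For the forward direction ($\Rightarrow$), I would first establish the well-known inequalities
\begin{equation*}
\limsup_{t\to\infty} \ambientMeas_{\param_t}(F) \leq \ambientMeas_\dagger(F) \quad \text{for every closed } F \subset \dataspace,
\end{equation*}
\begin{equation*}
\liminf_{t\to\infty} \ambientMeas_{\param_t}(U) \geq \ambientMeas_\dagger(U) \quad \text{for every open } U \subset \dataspace.
\end{equation*}
To get the closed-set bound, I approximate $\mathbf{1}_F$ from above by the bounded continuous functions $h_\varepsilon(x) \coloneqq \max(0, 1 - \mathrm{dist}(x, F)/\varepsilon)$. Since $\mathbf{1}_F \leq h_\varepsilon \leq 1$, weak convergence applied to $h_\varepsilon$ gives $\limsup_t \ambientMeas_{\param_t}(F) \leq \E_{X \sim \ambientMeas_\dagger}[h_\varepsilon(X)]$; because $F$ is closed, $h_\varepsilon \downarrow \mathbf{1}_F$ pointwise as $\varepsilon \downarrow 0$, and monotone (or dominated) convergence finishes the bound. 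The open-set inequality then follows by taking complements. Now given a continuity set $B$ of $\ambientMeas_\dagger$, i.e.\ $\ambientMeas_\dagger(\partial_\dataspace B) = 0$, we have $\ambientMeas_\dagger(\mathrm{int}_\dataspace B) = \ambientMeas_\dagger(B) = \ambientMeas_\dagger(\cl_\dataspace B)$, and sandwiching $\ambientMeas_{\param_t}(\mathrm{int}_\dataspace B) \leq \ambientMeas_{\param_t}(B) \leq \ambientMeas_{\param_t}(\cl_\dataspace B)$ between the two inequalities yields $\ambientMeas_{\param_t}(B) \to \ambientMeas_\dagger(B)$.

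For the backward direction ($\Leftarrow$), fix a bounded continuous $h : \dataspace \to \R$. By splitting $h = h^+ - h^-$ and rescaling, it suffices to handle $0 \leq h \leq M$ for some $M > 0$. I then use the layer-cake identity
\begin{equation*}
\E_{X \sim \ambientMeas_{\param_t}}[h(X)] = \int_0^M \ambientMeas_{\param_t}\!\left(\{x : h(x) > s\}\right) \rd s,
\end{equation*}
and similarly for $\ambientMeas_\dagger$. The set $\{h > s\}$ is open by continuity, and its topological boundary lies inside $\{h = s\}$; since the level sets $\{h = s\}$ are pairwise disjoint for distinct $s$ and $\ambientMeas_\dagger$ is a finite measure, at most countably many of them can have positive $\ambientMeas_\dagger$-mass. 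Hence $\{h > s\}$ is a continuity set of $\ambientMeas_\dagger$ for Lebesgue-almost every $s \in [0, M]$, and the hypothesis gives $\ambientMeas_{\param_t}(\{h > s\}) \to \ambientMeas_\dagger(\{h > s\})$ for those $s$. Because these integrands are bounded by $1$ on $[0, M]$, dominated convergence transfers the limit through the integral and shows $\E_{X \sim \ambientMeas_{\param_t}}[h(X)] \to \E_{X \sim \ambientMeas_\dagger}[h(X)]$, which is weak convergence.

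The main technical obstacle is the forward direction: getting from ``integrals against bounded continuous test functions agree in the limit'' to a statement about measures of concrete sets requires the approximation trick with the distance function $\mathrm{dist}(\cdot, F)$ and a careful monotone passage $\varepsilon \downarrow 0$. In the backward direction, the only nontrivial observation is that the countable exceptional set of $s$ is negligible for the Lebesgue integral; once that is in place, everything follows from layer-cake plus dominated convergence. I would not need any additional structure on $\dataspace$ beyond it being a metric space (so that $\mathrm{dist}(\cdot, F)$ is continuous), which is satisfied here since $\dataspace$ is Euclidean.
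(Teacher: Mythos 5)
Your proof is correct. Note, however, that the paper does not prove this lemma at all: it is stated in the weak-convergence primer (Appendix A) as a classical fact from measure theory, quoted only in the reduced form needed to prove the \hyperref[thm:main]{Likelihood Instability Theorem}. So there is no ``paper proof'' to compare against; what you have written is the standard textbook argument, and it goes through. The forward direction via the closed-set inequality $\limsup_t \ambientMeas_{\param_t}(F) \leq \ambientMeas_\dagger(F)$ (approximating $\mathbf{1}_F$ from above by $h_\varepsilon(x) = \max(0, 1 - \mathrm{dist}(x,F)/\varepsilon)$, which is valid here since $\dataspace$ is Euclidean, hence metric), the open-set inequality by complementation, and the sandwich $\ambientMeas_{\param_t}(\interior_\dataspace B) \leq \ambientMeas_{\param_t}(B) \leq \ambientMeas_{\param_t}(\cl_\dataspace B)$ for a continuity set is exactly right; for the converse, the layer-cake identity together with the observation that $\partial_\dataspace\{h>s\} \subset \{h=s\}$ and that at most countably many level sets can carry positive $\ambientMeas_\dagger$-mass (for each $n$, only finitely many can have mass exceeding $1/n$) correctly reduces matters to dominated convergence on $[0,M]$. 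Two cosmetic points: when sending $\varepsilon \downarrow 0$ you should pass along a sequence (or invoke dominated rather than monotone convergence, since $h_\varepsilon \leq 1$ and $\ambientMeas_\dagger$ is a probability measure), and the reduction of a general bounded continuous $h$ is slightly cleaner by adding a constant ($h + \sup|h| \geq 0$, harmless because all measures involved are probability measures) than by splitting into $h^+ - h^-$, though the latter also works since both parts are bounded and continuous.
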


Let us consider once again the example where $\ambientMeas_{\param_t}$ is Gaussian with mean $0$ and covariance matrix $1/t \, I_\aDim$. It is not difficult to prove that $\ambientMeas_{\param_t}(B) \rightarrow \delta_0(B)$ as $t \rightarrow \infty$ for every Borel set $B$ such that $0 \notin \partial_\dataspace B$ (i.e.\ $\delta_0(\partial_\dataspace B) = 0$), so that as intended, $\ambientMeas_{\param_t} \xrightarrow{\omega} \delta_0$ as $t \rightarrow \infty$. Similarly, it is not difficult to prove the same in the example where $\ambientMeas_{\param_t} = \delta_{x_t}$. The fact that these sequences converge weakly but not strongly to $\delta_0$ illustrates that weak convergence does indeed provide the right tool to talk about a sequence of models ``learning'' their target distribution.

\paragraph{Metrizing weak convergence} Finally, we say that a metric $\mathbb{D}: \Delta(\dataspace) \times \Delta(\dataspace) \rightarrow \R$ on the space of probability measures on $\dataspace$ \emph{metrizes weak convergence} if $\mathbb{D}(\ambientMeas_{\param_t} , \ambientMeas_\dagger) \rightarrow 0$ as $t \rightarrow 0$ holds if and only if $\ambientMeas_{\param_t} \xrightarrow{\omega} \ambientMeas_\dagger$ as $t \rightarrow \infty$. Throughout the main manuscript, we often abuse language and use the term ``metrizing weak convergence'' even when $\mathbb{D}$ is only a divergence rather than a metric, as this is enough to ensure that minimizing $\mathbb{D}(\modelMeas, \trueMeas)$ over $\theta$ is a sensible training objective for a DGM $\modelMeas$ -- even if $\trueMeas$ has low-dimensional support.

\section{Proofs}\label{app:proof}

As in \autoref{app:primer}, we omit the use of a grey box despite the use of technical language.

\subsection{The Likelihood Instability Theorem}\label{app:th1}

We start by restating the \hyperref[thm:main]{Likelihood Instability Theorem} for convenience before discussing it.

\main*

As mentioned in \autoref{sec:pathology_ours} we begin with an example satisfying the assumptions of the \hyperref[thm:main]{Likelihood Instability Theorem} for which it does not hold that $\ambientDens_{\param_t}(x) \rightarrow \infty$ for $x \in \cl_\dataspace (M)$, thus highlighting that the theorem cannot be ``trivially strengthened''. Consider $\dataspace = \R^2$, $M = \{(x_1, 0) \in \R^2 \mid 0 < x_1 < 1\}$, let $\ambientMeas_\dagger$ be uniform on $M$, and $\ambientMeas_{\param_t}$ be uniform on $M_t$, where $M_t = \{(x_1, x_2) \in \R^2 \mid 0 < x_1 < 1 \text{ and }1/(t+1) < x_2 < 1/t\}$. Finally, take the corresponding densities as
\begin{equation}\label{eq:density_convergence_example}
    \ambientDens_{\param_t}(x) = \dfrac{\mathds{1}(x \in M_t)}{\lebesgue_2(M_t)} = t(t+1)\mathds{1}(x \in M_t),
\end{equation}
where $\mathds{1}(\cdot)$ denotes an indicator function. \autoref{fig:weak_conv} illustrates this example. Here, it holds that $\ambientMeas_{\param_t} \xrightarrow{\omega} \ambientMeas_\dagger$ -- so that the assumptions of the \hyperref[thm:main]{Likelihood Instability Theorem} are satisfied -- yet $\ambientDens_{\param_t}(x) \rightarrow 0$ as $t \rightarrow \infty$ for every $x \in \dataspace$, and thus in particular for every $x \in \cl_\dataspace (M)$ as well. Nonetheless, when $x \in \cl_\dataspace (M)$, $\sup_{x' \in B_\varepsilon(x)} \ambientDens_{\param_t}(x') \rightarrow \infty$ as $t \rightarrow \infty$ does hold for every $\varepsilon > 0$, as concluded by the theorem.

Before proving the \hyperref[thm:main]{Likelihood Instability Theorem}, we state and prove three lemmas, all of which we will rely on. We will heavily use \hyperref[def:cont_set]{Continuity Sets} and will leverage the \hyperref[lem:portmanteau]{Portmanteau Lemma}; see \autoref{app:primer} for a reminder on these topics.

\begin{lemma}\label{lem:continuity-ball} Let $\ambientMeas_\dagger$ be a probability measure on $\dataspace$, $x \in \dataspace$, and $\varepsilon > 0$. Then, there exists $\varepsilon' \in (0, \varepsilon)$ such that $B_{\varepsilon'}(x)$ is a continuity set of $\ambientMeas_\dagger$, where $B_\varepsilon(x) = \{x' \in \dataspace \mid \Vert x' - x\Vert_2 < \varepsilon \}$.
\end{lemma}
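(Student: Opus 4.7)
The plan is to exploit the fact that the boundaries $\partial_\dataspace B_r(x)$ for different radii are pairwise disjoint spheres, so only countably many of them can carry positive mass under the finite measure $\ambientMeas_\dagger$. Since $(0,\varepsilon)$ is uncountable, some radius $\varepsilon'$ must give a sphere of $\ambientMeas_\dagger$-measure zero, which is exactly what it means for $B_{\varepsilon'}(x)$ to be a continuity set.

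More concretely, since $\dataspace$ is Euclidean (a subset of $\R^\aDim$ with the usual topology), for any $r>0$ we have $\partial_\dataspace B_r(x) \subseteq S_r(x) \coloneqq \{x' \in \dataspace \mid \Vert x'-x\Vert_2 = r\}$, and the spheres $\{S_r(x)\}_{r\in(0,\varepsilon)}$ are pairwise disjoint subsets of $\dataspace$. First I would define, for each $n \in \mathbb{N}$, the set
\begin{equation}
    A_n \coloneqq \left\{ r \in (0,\varepsilon) \,\Big|\, \ambientMeas_\dagger(S_r(x)) \geq \tfrac{1}{n} \right\}.
\end{equation}
By disjointness and countable additivity, for any finite subcollection $\{r_1,\dots,r_k\} \subseteq A_n$ one has $\sum_{i=1}^k \ambientMeas_\dagger(S_{r_i}(x)) \leq \ambientMeas_\dagger(\dataspace) = 1$, which forces $k \leq n$. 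Hence each $A_n$ is finite, and so $A \coloneqq \bigcup_{n \in \mathbb{N}} A_n = \{r \in (0,\varepsilon) \mid \ambientMeas_\dagger(S_r(x)) > 0\}$ is countable.

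Since $(0,\varepsilon)$ is uncountable, the complement $(0,\varepsilon)\setminus A$ is nonempty; pick any $\varepsilon' \in (0,\varepsilon) \setminus A$. Then $\ambientMeas_\dagger(\partial_\dataspace B_{\varepsilon'}(x)) \leq \ambientMeas_\dagger(S_{\varepsilon'}(x)) = 0$, so $B_{\varepsilon'}(x)$ is a continuity set of $\ambientMeas_\dagger$, which completes the argument. There is no real obstacle here; the only subtlety to be mindful of in the write-up is that we only need $\partial_\dataspace B_{\varepsilon'}(x) \subseteq S_{\varepsilon'}(x)$ (which can be strict if $\dataspace$ is, say, $[0,1]^\aDim$ and $x$ sits near the boundary of $\dataspace$), so the argument goes through regardless of whether $\dataspace = \R^\aDim$ or a subset thereof.
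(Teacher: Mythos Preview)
Your argument is correct and is essentially the same as the paper's: both exploit that the spheres $\{S_r(x)\}_{r\in(0,\varepsilon)}$ are pairwise disjoint, so only countably many can have positive $\ambientMeas_\dagger$-mass, leaving uncountably many radii $\varepsilon'$ with $\ambientMeas_\dagger(\partial_\dataspace B_{\varepsilon'}(x))=0$. The paper phrases this by contradiction (assume all boundaries have positive mass, pigeonhole into intervals $(1/n,1/(n-1)]$, and sum to exceed $1$), whereas you give the direct version by showing each $A_n$ is finite; your added remark that $\partial_\dataspace B_{\varepsilon'}(x)\subseteq S_{\varepsilon'}(x)$ may be strict when $\dataspace=[0,1]^\aDim$ is a nice touch the paper leaves implicit.
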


\begin{proof} We proceed by contradiction: let $x \in \dataspace$ and $\varepsilon > 0$, and assume that $\ambientMeas_\dagger(\partial_\dataspace B_{\varepsilon'}(x)) > 0$ for every $\varepsilon' \in (0, \varepsilon)$. Since we can countably partition $(0, 1]$ as $\cup_{n=2}^\infty (1/n, 1/(n-1)]$, it follows that uncountably many elements from $\{\ambientMeas_\dagger(\partial_\dataspace B_{\varepsilon'}(x))\}_{\varepsilon' \in (0, \varepsilon)}$ belong to an element of the partition. Thus, in particular there exists an integer $n'$ and distinct numbers $\varepsilon'_1, \varepsilon'_2, \dots, \varepsilon'_{n'}$ in $(0, \varepsilon)$ such that $\ambientMeas_\dagger(\partial_\dataspace B_{\varepsilon'_i}(x)) > 1/n'$ for every $i=1,2,\dots, n'$. Then, because $\partial_\dataspace B_{\varepsilon'_i}(x) \cap \partial_\dataspace B_{\varepsilon'_j}(x) = \emptyset$ whenever $i \neq j$, we have that
\begin{equation}
    \ambientMeas_\dagger\left(\bigcup_{i=1}^{n'} \partial_\dataspace B_{\varepsilon'_i}(x)\right) = \sum_{i=1}^{n'} \ambientMeas_\dagger \left( \partial_\dataspace B_{\varepsilon'_i}(x) \right) > \sum_{i=1}^{n'} \dfrac{1}{n'} = 1,
\end{equation}
which is clearly a contradiction since $\ambientMeas_\dagger(\dataspace)=1$, thus finishing the proof.
\end{proof}

\begin{figure}[t]
    \centering
    \includegraphics[scale=0.2]{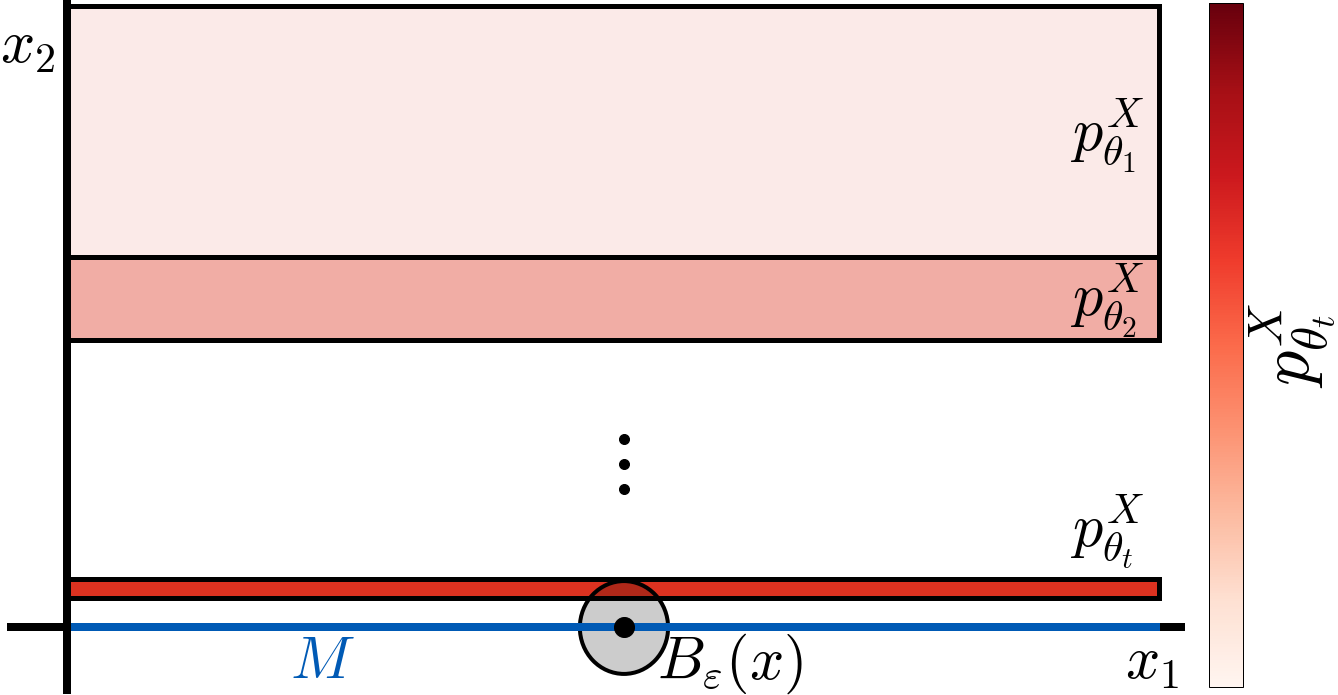}
    \caption{Visualization of the sequence of densities from \autoref{eq:density_convergence_example}, which converge weakly to a uniform distribution on $M$. For $x \in \cl_\dataspace(M)$ it always holds that $\ambientDens_{\param_t}(x)=0$ because the support of $\ambientDens_{\param_t}$ does not overlap with $\cl_\dataspace(M)$, so that $\ambientDens_{\param_t}(x) \rightarrow \infty$ as $t \rightarrow \infty$ does \emph{not} hold. Nonetheless, for any fixed $\varepsilon > 0$, the support of $\ambientDens_{\param_t}$ always overlaps with $B_\varepsilon(x)$ for large enough $t$, and thus $\sup_{x' \in B_\varepsilon(x)} \ambientDens_{\param_t}(x') \rightarrow \infty$ as $t \rightarrow \infty$.}
    \label{fig:weak_conv}
\end{figure}

\begin{lemma}\label{lem:m-delta-open} Let $M \subset \dataspace$, and let $\ambientMeas_\dagger$ be a probability measure on $\dataspace$ such that $\ambientMeas_\dagger(M)=1$. Then for every $\delta > 0$, the set $M_\delta \coloneqq \{x \in \dataspace \mid \inf_{x' \in M} \Vert x' - x\Vert_2 < \delta\}$ is open in $\dataspace$, and is a continuity set of $\ambientMeas_\dagger$.
\end{lemma}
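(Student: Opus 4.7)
The plan is to handle the two claims (openness and continuity-set property) in sequence, both following from a single observation: the distance-to-$M$ function is continuous, and $M_\delta$ is its sublevel set for the open interval $[0,\delta)$.

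First, for openness, I would define $d_M: \dataspace \to \mathbb{R}$ by $d_M(x) \coloneqq \inf_{x' \in M} \Vert x' - x \Vert_2$. A standard argument using the triangle inequality shows that $d_M$ is $1$-Lipschitz (hence continuous): for any $x, y \in \dataspace$ and any $x' \in M$, $d_M(x) \leq \Vert x - x' \Vert_2 \leq \Vert x - y \Vert_2 + \Vert y - x' \Vert_2$, so taking the infimum over $x' \in M$ gives $d_M(x) \leq \Vert x - y \Vert_2 + d_M(y)$, and symmetry yields $|d_M(x) - d_M(y)| \leq \Vert x - y \Vert_2$. Since $M_\delta = d_M^{-1}([0, \delta))$ is the preimage of an open set under a continuous map, $M_\delta$ is open in $\dataspace$.

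Second, for the continuity-set claim, I would observe that every $x \in M$ satisfies $d_M(x) = 0 < \delta$, so $M \subset M_\delta$. Combined with $\ambientMeas_\dagger(M) = 1$, this forces $\ambientMeas_\dagger(M_\delta) = 1$ and therefore $\ambientMeas_\dagger(\dataspace \setminus M_\delta) = 0$. The key step is then to note that, because $M_\delta$ is open, $\partial_\dataspace M_\delta = \cl_\dataspace(M_\delta) \setminus \interior_\dataspace(M_\delta) = \cl_\dataspace(M_\delta) \setminus M_\delta$, so $\partial_\dataspace M_\delta \subset \dataspace \setminus M_\delta$. Monotonicity of $\ambientMeas_\dagger$ yields $\ambientMeas_\dagger(\partial_\dataspace M_\delta) \leq \ambientMeas_\dagger(\dataspace \setminus M_\delta) = 0$, which establishes that $M_\delta$ is a continuity set of $\ambientMeas_\dagger$.

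I do not anticipate any real obstacle here. The only subtlety worth double-checking is the identification of the boundary of an open set with $\cl_\dataspace(M_\delta) \setminus M_\delta$, but this is standard since an open set equals its own interior. The proof is essentially two lines once the continuity of $d_M$ is in hand, and contrasts nicely with the earlier \hyperref[lem:continuity-ball]{Lemma} on continuity balls, whose proof requires a pigeonhole-style argument because a generic probability measure can charge the boundaries of arbitrary balls; here, charging $\partial_\dataspace M_\delta$ is ruled out by the constraint $\ambientMeas_\dagger(M) = 1$ together with $M \subset M_\delta$.
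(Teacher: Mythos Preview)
Your proof is correct and follows essentially the same approach as the paper's: both establish openness of $M_\delta$ by an elementary argument (the paper writes $M_\delta = \bigcup_{x' \in M} B_\delta(x')$ directly, while you use continuity of the distance-to-$M$ function, which is equivalent), and both derive the continuity-set property from $M \subset M_\delta$ together with $\interior_\dataspace(M_\delta) = M_\delta$, so that $\partial_\dataspace M_\delta$ is disjoint from $M$. The only cosmetic point worth tightening is that $[0,\delta)$ is not open in $\R$; write $M_\delta = d_M^{-1}((-\infty,\delta))$, or view $d_M$ as mapping into $[0,\infty)$ with the subspace topology.
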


\begin{proof} $M_\delta$ is open because it can be written as a union of open sets:
\begin{equation}
    M_\delta = \bigcup_{x' \in M} B_\delta(x').
\end{equation}
Then, we have:
\begin{align}
    \ambientMeas_\dagger \left(\partial_\dataspace M_\delta \right) & = \ambientMeas_\dagger \left(\partial_\dataspace M_\delta \cap M \right) = \ambientMeas_\dagger \left( (\cl_\dataspace(M_\delta) \setminus \interior_\dataspace(M_\delta)) \cap M \right) = \ambientMeas_\dagger \left( (\cl_\dataspace(M_\delta) \setminus M_\delta) \cap M \right) \\
    & = \ambientMeas_\dagger(\emptyset) = 0,
\end{align}
where $\interior_\dataspace(M)$ denotes the topological interior of $M$ in $\dataspace$, and the first equality follows from $\ambientMeas_\dagger(M) = 1$, the second one from the definition of boundary, the third one from $M_\delta$ being open, and the fourth one from $M \subset M_\delta$. Thus $M_\delta$ is indeed a continuity set of $\ambientMeas_\dagger$.
\end{proof}

\begin{lemma}\label{lem:p-dagger-positive} Let $M \subset \dataspace$, and let $\ambientMeas_\dagger$ be a probability measure on $\dataspace$ such that $\supp(\ambientMeas_\dagger) = \cl_\dataspace(M)$. Then $\ambientMeas_\dagger(B_\varepsilon(x)) > 0$ for every $x \in \cl_\dataspace(M)$ and every $\varepsilon > 0$, where $B_\varepsilon(x) \coloneqq \{x' \in \dataspace \mid \Vert x' - x\Vert_2 < \varepsilon \}$.
\end{lemma}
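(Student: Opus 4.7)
The statement follows directly from the definition of support (\autoref{eq:def_support}) by a contrapositive / contradiction argument, so the plan is essentially a one-paragraph exercise in unpacking definitions. I do not foresee any real obstacle here — the only ``trick'' is recognizing that the complement of an open ball is closed and using that fact with the intersection characterization of $\supp(\ambientMeas_\dagger)$.

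The approach is as follows. Fix $x \in \cl_\dataspace(M) = \supp(\ambientMeas_\dagger)$ and $\varepsilon > 0$. Assume for contradiction that $\ambientMeas_\dagger(B_\varepsilon(x)) = 0$, and define
\begin{equation}
C \coloneqq \dataspace \setminus B_\varepsilon(x).
\end{equation}
Since $B_\varepsilon(x)$ is open in $\dataspace$, the set $C$ is closed in $\dataspace$. Moreover,
\begin{equation}
\ambientMeas_\dagger(C) = 1 - \ambientMeas_\dagger(B_\varepsilon(x)) = 1,
\end{equation}
so $C \in \mathcal{C}(\ambientMeas_\dagger)$ in the notation of \autoref{eq:def_support}. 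By definition of the support as the intersection of all such closed sets of full measure, we then have $\supp(\ambientMeas_\dagger) \subseteq C$, and in particular $x \in C$.

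The desired contradiction is immediate: $x \in B_\varepsilon(x)$ (since $\Vert x - x \Vert_2 = 0 < \varepsilon$), whereas $x \in C$ means $x \notin B_\varepsilon(x)$. Hence the assumption $\ambientMeas_\dagger(B_\varepsilon(x)) = 0$ cannot hold, and we conclude $\ambientMeas_\dagger(B_\varepsilon(x)) > 0$. This completes the proof; no use of the hypothesis $\ambientMeas_\dagger(M) = 1$ (nor any further regularity on $M$) is required beyond the assumption that $\supp(\ambientMeas_\dagger) = \cl_\dataspace(M)$, which is precisely what ties the conclusion to points $x$ in the closure of $M$ rather than to arbitrary points of $\dataspace$.
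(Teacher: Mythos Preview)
Your proof is correct and follows essentially the same contradiction argument as the paper: assume $\ambientMeas_\dagger(B_\varepsilon(x)) = 0$, exhibit a closed full-measure set excluding $x$, and contradict $x \in \supp(\ambientMeas_\dagger)$. Your choice of closed set $C = \dataspace \setminus B_\varepsilon(x)$ is in fact slightly cleaner than the paper's $\cl_\dataspace(M) \setminus B_\varepsilon(x)$, since the latter requires first invoking a separability result (Bogachev, Proposition~7.2.9) to establish $\ambientMeas_\dagger(\cl_\dataspace(M)) = 1$, whereas your version needs only that $\ambientMeas_\dagger$ is a probability measure.
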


\begin{proof} Since $\ambientMeas_\dagger$ is a Borel measure and $\dataspace$ is separable, $\ambientMeas_\dagger(\supp(\ambientMeas_\dagger))=1$ \citep[Proposition~7.2.9]{bogachev2007measure}.\footnote{Note that in general, $\P(\supp(\P))=1$ need not hold, see for example \citet[Examples~6.2 and 6.3]{schilling2021counterexamples}.} It follows that $\cl_\dataspace(M) = \supp(\ambientMeas_\dagger)$ is nonempty. 
Let $x \in \cl_\dataspace (M)$ and $\varepsilon > 0$. We proceed by contradiction, and assume that $\ambientMeas_\dagger(B_\varepsilon(x)) = 0$. 
Since $\ambientMeas_\dagger(\cl_\dataspace(M)) = 1$, we have that $\ambientMeas_\dagger(\cl_\dataspace(M) \setminus B_\varepsilon(x)) = 1$. Since $B_\varepsilon(x)$ is open, $\cl_\dataspace(M) \setminus B_\varepsilon(x)$ is closed. Then, by definition of support (\autoref{eq:def_support}) and because $\supp(\ambientMeas_\dagger) = \cl_\dataspace(M)$, it follows that $\cl_\dataspace(M) \cap B_\varepsilon(x) = \emptyset$. This is clearly a contradiction since $x \in \cl_\dataspace(M) \cap B_\varepsilon(x)$.
\end{proof}
    
\begin{proof}[Proof of \autoref{thm:main}]
We first prove that $\liminf_{t \rightarrow \infty} \ambientDens_{\param_t}(x) = 0$, $\lebesgue_\aDim$-almost-surely on $\dataspace \setminus \cl_\dataspace(M)$. Let $x \in \dataspace \setminus \cl_\dataspace(M)$, and let $U_x$ be an open neighbourhood of $x$ such that $\cl_\dataspace(U_x) \cap \cl_\dataspace(M) = \emptyset$, which exists because $\dataspace$ is regular. Clearly $\ambientMeas_\dagger(\partial_\dataspace U_x) = 0$ (i.e.\ $U_x$ is a continuity set of $\ambientMeas_\dagger$) and $\ambientMeas_\dagger(U_x) = 0$ since $\cl_\dataspace(U_x) \cap \cl_\dataspace(M) = \emptyset$ and $\ambientMeas_\dagger(\cl_\dataspace(M))=1$.

By the \hyperref[lem:portmanteau]{Portmanteau Lemma}, $\ambientMeas_{\param_t}(U_x) \rightarrow \ambientMeas_\dagger(U_x)$ as $t \rightarrow \infty$, and we have the following implications:
\begin{align}
    \lim_{t \rightarrow \infty}\ambientMeas_{\param_t}(U_x) = \ambientMeas_\dagger(U_x) & \iff \lim_{t \rightarrow \infty} \int_{U_x} \ambientDens_{\param_t}(x')\rd \lebesgue_\aDim(x') = \ambientMeas_\dagger(U_x) = 0 \\ & \implies  \liminf_{t \rightarrow \infty} \int_{U_x} \ambientDens_{\param_t}(x')\rd \lebesgue_\aDim(x') = 0.
\end{align}
Then, by Fatou's lemma,
\begin{equation}
     \int_{U_x} \liminf_{t \rightarrow \infty} \ambientDens_{\param_t}(x')\rd \lebesgue_\aDim(x') \leq 0.
\end{equation}
Since $\liminf_{t \rightarrow \infty} \ambientDens_{\param_t}(x) \geq 0$, $\lebesgue_\aDim$-almost-everywhere on $\dataspace$, it follows that
\begin{equation}
     \int_{U_x} \liminf_{t \rightarrow \infty} \ambientDens_{\param_t}(x')\rd \lebesgue_\aDim(x') = 0,
\end{equation}
and thus $\liminf_{t \rightarrow \infty} \ambientDens_{\param_t}(x)=0$, $\lebesgue_\aDim$-almost-everywhere on $U_x$.

We still need to extend the result from $\lebesgue_\aDim$-almost-everywhere on $U_x$ to $\lebesgue_\aDim$-almost-everywhere on $\dataspace \setminus \cl_\dataspace(M)$. Clearly $\{U_x\}_{x \in \dataspace \setminus \cl_\dataspace(M)}$ is an open cover of $\dataspace \setminus \cl_\dataspace(M)$. Since $\dataspace$ is second countable and every subspace of a second countable space is second countable, it follows that $\dataspace \setminus \cl_\dataspace(M)$ is second countable. Then, by Lindel\"of's lemma, $\{U_x\}_{x \in \dataspace \setminus \cl_\dataspace(M)}$ has a countable subcover $\{U_{x_i}\}_{i=1}^\infty$ of $\dataspace \setminus \cl_\dataspace(M)$. The result then holds $\lebesgue_\aDim$-almost-everywhere on $U_{x_i}$ for $i=1, 2, \dots$, and because any countable union of sets of measure $0$ has measure $0$, it also holds $\lebesgue_\aDim$-almost-everywhere on
\begin{equation}
    \bigcup_{i=1}^\infty U_{x_i} = \dataspace \setminus \cl_\dataspace(M),
\end{equation}
which finishes this part of the proof.

Now, let $x \in \cl_\dataspace(M)$ and $\varepsilon > 0$, and we will prove that $\sup_{x' \in B_\varepsilon(x)} \ambientDens_{\param_t}(x') \rightarrow \infty$ as $t \rightarrow \infty$.

First, note that $\sup_{x' \in B_\varepsilon(x)} \ambientDens_{\param_t}(x')$ is increasing in $\varepsilon$ for every $t$. If $B_\varepsilon(x)$ is not a continuity set of $\ambientMeas_\dagger$, by \autoref{lem:continuity-ball} we could always find $\varepsilon' \in (0, \varepsilon)$ such that $B_{\varepsilon'}(x)$ is a continuity set of $\ambientMeas_\dagger$, and if we managed to prove that $\sup_{x' \in B_{\varepsilon'}(x)} \ambientDens_{\param_t}(x') \rightarrow \infty$ as $t \rightarrow \infty$, the same result would immediately follow for $\varepsilon$. We can thus assume without loss of generality that $\varepsilon$ is such that $B_\varepsilon(x)$ is a continuity set of $\ambientMeas_\dagger$.

Now, let $M_\delta \coloneqq \{x \in \dataspace \mid \inf_{x' \in M} \Vert x' - x\Vert_2 < \delta\}$ and let $U_{\varepsilon, \delta}(x) \coloneqq M_\delta \cap B_\varepsilon(x)$. From basic topology we have that $\partial_\dataspace(M_\delta \cap B_\varepsilon(x)) \subset \partial_\dataspace M_\delta \cup \partial_\dataspace B_\varepsilon(x)$. Since $M_\delta$ and $B_\varepsilon(x)$ are continuity sets of $\ambientMeas_\dagger$ by \autoref{lem:m-delta-open} and by assumption, respectively, it follows that $U_{\varepsilon, \delta}(x)$ is a continuity set of $\ambientMeas_\dagger$, since $\ambientMeas_\dagger(\partial_\dataspace U_{\varepsilon, \delta}(x)) \leq \ambientMeas_\dagger(\partial_\dataspace M_\delta) + \ambientMeas_\dagger(\partial_\dataspace B_\varepsilon(x)) = 0$. Similarly, $B_\varepsilon(x) \setminus M_\delta$ is a continuity set of $\ambientMeas_\dagger$ because $\partial_\dataspace (B_\varepsilon(x) \setminus M_\delta) \subset \partial_\dataspace B_\varepsilon(x) \cup \partial_\dataspace M_\delta$. We then write:
\begin{align}
    \ambientMeas_{\param_t}\left( B_\varepsilon(x) \right) & = \ambientMeas_{\param_t}\left( U_{\varepsilon, \delta}(x)\right) + \ambientMeas_{\param_t}\left( B_\varepsilon(x) \setminus M_\delta \right) = \int_{U_{\varepsilon, \delta}(x)} \ambientDens_{\param_t}(x') \rd \lebesgue_\aDim(x') + \ambientMeas_{\param_t}\left( B_\varepsilon(x) \setminus M_\delta \right) \\
    & \leq \lebesgue_\aDim \left(U_{\varepsilon, \delta}(x)\right) \sup_{x' \in U_{\varepsilon, \delta}(x)} \ambientDens_{\param_t}(x') + \ambientMeas_{\param_t}\left( B_\varepsilon(x) \setminus M_\delta \right) \\
    & \leq \lebesgue_\aDim \left(U_{\varepsilon, \delta}(x)\right) \sup_{x' \in B_\varepsilon(x)} \ambientDens_{\param_t}(x') + \ambientMeas_{\param_t}\left( B_\varepsilon(x) \setminus M_\delta \right).
\end{align}
Since $B_\varepsilon(x)$ is open, as is $M_\delta$ by \autoref{lem:m-delta-open}, then $U_{\varepsilon, \delta}(x)$ is open as well. In turn $\lebesgue_\aDim(U_{\varepsilon, \delta}(x)) > 0$, and it follows that
\begin{equation}\label{eq:almost_there_proof}
    \dfrac{\ambientMeas_{\param_t}\left( B_\varepsilon(x) \right) - \ambientMeas_{\param_t}\left( B_\varepsilon(x) \setminus M_\delta \right)}{\lebesgue_\aDim \left(U_{\varepsilon, \delta}(x)\right)} \leq \sup_{x' \in B_\varepsilon(x)} \ambientDens_{\param_t}(x').
\end{equation}
By the \hyperref[lem:portmanteau]{Portmanteau Lemma} and since $\ambientMeas_\dagger (B_\varepsilon(x) \setminus M_\delta) = 0$, taking the limit as $t \rightarrow \infty$ of the left hand side of the above equation yields
\begin{equation}
    \lim_{t \rightarrow \infty} \dfrac{\ambientMeas_{\param_t}\left( B_\varepsilon(x) \right) - \ambientMeas_{\param_t}\left( B_\varepsilon(x) \setminus M_\delta \right)}{\lebesgue_\aDim \left(U_{\varepsilon, \delta}(x)\right)} = \dfrac{\ambientMeas_\dagger \left(B_\varepsilon(x)\right)}{\lebesgue_\aDim \left(U_{\varepsilon, \delta}(x)\right)}.
\end{equation}

Thus, taking $\liminf$ as $t \rightarrow \infty$ on both sides of \autoref{eq:almost_there_proof} implies that
\begin{equation}\label{eq:last_eq_label}
    \dfrac{\ambientMeas_\dagger \left(B_\varepsilon(x)\right)}{\lebesgue_\aDim \left(U_{\varepsilon, \delta}(x)\right)} \leq \liminf_{t \rightarrow \infty} \sup_{x' \in B_\varepsilon(x)} \ambientDens_{\param_t}(x').
\end{equation}
Since $U_{\varepsilon, \delta'}(x) \subset U_{\varepsilon, \delta}(x)$ whenever $\delta' < \delta$, we have that
\begin{align}
    \lim_{\delta \rightarrow 0^+} \lebesgue_\aDim\left( U_{\varepsilon, \delta}(x)\right) & = \lebesgue_\aDim \left( \bigcap_{\delta > 0} U_{\varepsilon, \delta}(x)\right) = \lebesgue_\aDim \left( \bigcap_{\delta > 0} \left(M_\delta \cap B_\varepsilon(x)\right) \right) = \lebesgue_\aDim \left( \left(\bigcap_{\delta > 0} M_\delta \right) \cap B_\varepsilon(x)\right)\\
    & = \lebesgue_\aDim \left( \cl_\dataspace(M) \cap B_\varepsilon(x)\right) = 0,
\end{align}
where we used that $(i)$ $\cap_{\delta > 0} M_\delta = \cl_\dataspace(M)$, which holds because $\cl_\dataspace(M)$ is the set of points which are arbitrarily close to $M$, and that $(ii)$ $\lebesgue_\aDim(\cl_\dataspace(M)) = 0$ by assumption. Finally, by \autoref{lem:p-dagger-positive}, $\ambientMeas_\dagger(B_\varepsilon(x)) > 0$, so that taking the limit as $\delta \rightarrow 0^+$ on both sides of \autoref{eq:last_eq_label} yields that $\liminf_{t \rightarrow \infty} \sup_{x' \in B_\varepsilon(x)} \ambientDens_{\param_t}(x') = \infty$, which in turn implies that $\sup_{x' \in B_\varepsilon(x)} \ambientDens_{\param_t}(x') \rightarrow \infty$ as $t \rightarrow \infty$, finishing the proof.
\end{proof}

\subsection{Formalizing the Link between Two-Step Models and Optimal Transport}\label{app:prop1}

For convenience, we restate \autoref{prop:wass} below.

\wasserstein*

\begin{proof}
Since $\mathcal{C} \subset \mathcal{F}$, it follows that
\begin{equation}
    \inf_{\enc \in \mathcal{F}} \E_{X \sim \trueMeas}\left[c\left(X, \dec_{\param_1}\left(\enc(X)\right)\right)\right] \leq \inf_{\enc \in \mathcal{C}} \E_{X \sim \trueMeas}\left[c\left(X, \dec_{\param_1}\left(\enc(X)\right)\right)\right].
\end{equation}
Since both infimums are finite due to the assumption from \autoref{eq:assumption_th2}, it is enough to show that, for every $\varepsilon > 0$, there exists $\enc_\varepsilon \in \mathcal{C}$ such that
\begin{equation}\label{eq:th2_to_show_first}
    \inf_{f \in \mathcal{F}}\E_{X \sim \trueMeas}\left[c\left(X, \dec_{\param_1}\left(\enc(X)\right)\right)\right] > \E_{X \sim \trueMeas}\left[c\left(X, \dec_{\param_1}\left(\enc_\varepsilon(X)\right)\right)\right] - \varepsilon.
\end{equation}
Let $\varepsilon > 0$, and let $\enc_\ast \in \mathcal{F}$ be such that
\begin{equation}
    \inf_{f \in \mathcal{F}}\E_{X \sim \trueMeas}\left[c\left(X, \dec_{\param_1}\left(\enc(X)\right)\right)\right] > \E_{X \sim \trueMeas}\left[c\left(X, \dec_{\param_1}\left(\enc_\ast(X)\right)\right)\right] - \dfrac{\varepsilon}{2}.
\end{equation}
It is thus enough to show that there exists $\enc_\varepsilon \in \mathcal{C}$ such that
\begin{equation}\label{eq:th2_to_show}
    \E_{X \sim \trueMeas}\left[c\left(X, \dec_{\param_1}\left(\enc_\ast(X)\right)\right)\right] > \E_{X \sim \trueMeas}\left[c\left(X, \dec_{\param_1}\left(\enc_\varepsilon(X)\right)\right)\right] - \dfrac{\varepsilon}{2},
\end{equation}
as this would imply \autoref{eq:th2_to_show_first} holds. 
Since $\trueMeas$ is a Borel measure and $\dataspace$ is Polish, $\trueMeas$ is a Radon measure. Additionally, $\trueMeas(\dataspace) < \infty$, $\dataspace$ is locally compact, $\latentspace$ is second countable, and $\enc_\ast$ is measurable; so it follows by Lusin's theorem that there exists a Borel set $E \subset \dataspace$ and a continuous function $\enc_\varepsilon : \dataspace \rightarrow \latentspace$ such that $\enc_\varepsilon(x)=\enc_\ast(x)$ for every $x \in E$, and $\trueMeas(\dataspace \setminus E) < \varepsilon/(4C)$. Then, we have:
\begin{align}
    & \E_{X \sim \trueMeas}\left[c\left(X, \dec_{\param_1}\left(\enc_\varepsilon(X)\right)\right)\right] - \E_{X \sim \trueMeas}\left[c\left(X, \dec_{\param_1}\left(\enc_\ast(X)\right)\right)\right] \\
    & = \int_\dataspace c\left(x, \dec_{\param_1}\left(\enc_\varepsilon(x)\right)\right) - c\left(x, \dec_{\param_1}\left(\enc_\ast(x)\right)\right) \rd \trueMeas(x) = \int_{\dataspace \setminus E} c\left(x, \dec_{\param_1}\left(\enc_\varepsilon(x)\right)\right) - c\left(x, \dec_{\param_1}\left(\enc_\ast(x)\right)\right) \rd \trueMeas(x)\\
    & \leq \int_{\dataspace \setminus E} \left| c\left(x, \dec_{\param_1}\left(\enc_\varepsilon(x)\right)\right) - c\left(x, \dec_{\param_1}\left(\enc_\ast(x)\right)\right) \right| \rd \trueMeas(x) \leq \int_{\dataspace \setminus E} 2 C \, \rd \trueMeas(x) = 2 C \, \trueMeas(\dataspace \setminus E) < \dfrac{\varepsilon}{2},
\end{align}
which in turn implies \autoref{eq:th2_to_show} holds, thus finishing the proof.
\end{proof}

\section{Experimental Details}\label{app:exp_details}

Here we provide details on the experiments from \autoref{sec:latent_dm}. 
Our code is available at \url{https://github.com/layer6ai-labs/dgm_manifold_survey}.

\paragraph{First-step objective for latent diffusion models} Following \citet{rombach2022high}, we trained latent diffusion models by first using a regularized variational autoencoder (\autoref{sec:vaes}) loss \citep{larsen2016autoencoding, higgins2017betavae}:
\begin{align}\label{eq:ldm_vae_loss}
\begin{split}
    \min_{\param_1, \paramAlt} \max_{\paramAlt'} \ & \E_{X \sim \trueDens}\left[\E_{Z \sim q^{Z|X}_\paramAlt(\cdot|X)}\left[\Vert X - \dec_{\param_1}(Z) \Vert_2^2 \right]\right] + \beta_1 \KL\left(q_\paramAlt^{Z|X}(\cdot|X) \Mid \latentDens\right)\\
    & + \beta_2 \E_{X \sim \trueDens}\left[ \E_{Z \sim q^{Z|X}_\paramAlt(\cdot|X)}\left[\log h_{\paramAlt'}(X) + \log \left(1 - h_{\paramAlt'}(\dec_{\param_1}(Z)\right)\right]  \right],
\end{split}
\end{align}
where $\beta_1 > 0$ and $\beta_2 > 0$ are hyperparameters; $\latentDens$ is a standard Gaussian; $q^{Z|X}_\paramAlt(\cdot|x) = \N(\ \cdot \ ;\encoder(x), \Sigma^{Z|X}_\paramAlt(x))$ with $\Sigma^{Z|X}_\paramAlt(x)$ being diagonal for every $x \in \dataspace$; and $h_{\paramAlt'}: \dataspace \rightarrow (0, 1)$ is a binary classifier inspired by generative adversarial networks (\autoref{sec:gans}), whose objective is to distinguish between real samples $X \sim \trueDens$ and their stochastic reconstructions $\dec_{\param_1}(Z)$, where $Z \sim q_\paramAlt^{Z|X}(\cdot|X)$. Rather than fixing $\beta_2$, we dynamically update it throughout training to ensure that the first and third terms in \autoref{eq:ldm_vae_loss} have roughly the same magnitude; when taking a gradient step, this is achieved by computing the ratio of the values of the first to third term in the previous gradient step, and setting $\beta_2$ to the absolute value of this ratio.

\paragraph{Training objective for diffusion models} We train all diffusion models, latent or not, exactly as described in \autoref{sec:diffusion_models}, i.e.\ through \autoref{eq:score_matching_diffusion}. The integral with respect to $t$ is approximated by sampling $t$ uniformly at random in $[0, T]$ during training (one such $t$ is sampled for every element in the batch).

\paragraph{Hyperparameters} We use the Adam optimizer \citep{DBLP:journals/corr/KingmaB14} with a batch size of $128$ throughout, and train all models until there is no improvement on the validation metric for $50$ epochs; we keep the models with the best validation performance. For the VAE of latent diffusion models we use the squared reconstruction error $\E_{X \sim \trueDens}[\Vert X - \dec_{\param_1}(\encoder(X)) \Vert_2^2]$ rather than \autoref{eq:ldm_vae_loss} as the validation metric, $\beta_1 = 10^{-6}$, a learning rate of $10^{-4}$ with cosine annealing, and take two gradient steps on $\paramAlt'$ for every gradient step on $(\param_1, \paramAlt)$. For the diffusion models, both on ambient and latent space, we use \autoref{eq:score_matching_diffusion} as the validation metric, a learning rate of $5\times 10^{-5}$ without cosine annealing, $T=1$, $\beta_{\text{min}}=0.1$, $\beta_{\text{max}}=20$, $w(t) = \sigma_t^2$, and an Euler-Maruyama discretization scheme with $1000$ steps to generate the paths in \autoref{fig:score_norms}.

\paragraph{Architectures} For a fair comparison between diffusion models on ambient and latent space, we attempt to instantiate them in such a way that their overall architectures are as similar as possible. The configurations of the architectures we used are given in \autoref{table:architectures}, which we now describe. We parameterize both the ambient and latent score networks as the output of a neural network divided by $\sigma_t$, as mentioned in \autoref{sec:diffusion_models} (this is equivalent to the so-called ``$\varepsilon$ parameterization'' of \citet{ho2020denoising} with $-\varepsilon$ being parameterized instead of $\varepsilon$). For the diffusion model on ambient space, we use a U-Net architecture \citep{ronneberger2015u} with residual connections \citep{he2016deep} and an additional attention mechanism \citep{vaswani2017attention}, as implemented in the \texttt{labml} package \citep{labml}, which uses a sinusoidal positional embedding for the scalar input $t$. The ambient space U-Net takes $3 \times 32 \times 32$ images, and progressively downsamples them to a shape of $1024 \times 4 \times 4$ before upscaling them back to their original size. For the latent diffusion model, we attempt to copy the aforementioned U-Net as much as possible; the VAE mimics the U-Net up until the $8 \times 8$ resolution, and adds a convolutional layer to produce outputs of shape $4 \times 8 \times 8$, so that $d=256$. To ensure the VAE obtains low-dimensional representations through a proper bottleneck, we remove the U-Net skip connections between the encoder and decoder, resulting in a purely residual architecture. The stochastic encoder $q^{Z|X}_\paramAlt$ of the VAE consists of a single neural network which takes $x \in \dataspace$ and produces a $2\lDim$-dimensional output -- the first $\lDim$ dimensions correspond to $\enc_\paramAlt(x)$, and the remaining ones to the diagonal of $\Sigma^{Z|X}_\paramAlt(x)$. The auxiliary network $h_{\paramAlt'}$ has the same architecture as the encoder $\encoder$, except a final linear layer is added to ensure the output is a scalar. The score network of the diffusion on latent space is given another U-Net which further downsamples to a $4 \times 4$ resolution before upsampling.

\paragraph{Data preprocessing} Recall that raw image data is integer-valued, with possible values ranging from $0$ to $255$. We dequantize the data before training the diffusion model on ambient space and the VAE for the latent diffusion model, i.e.\ we add independent uniform $[0, 1]$ noise to every pixel \citep{theis2016note}, so that the resulting data now has entries in $[0, 256]$. We then linearly scale the data so that every coordinate lies in $[-1,1]$. For latent diffusion models, once the VAE is trained, we also scale its encodings to lie in $[-1, 1]^d$ before training the diffusion model on latent space.

\begin{table}[t]
\caption{Configuration used for neural networks. The architecture of the auxiliary network $h_{\paramAlt'}$ for the VAE mimics that of the encoder, and the decoder $\dec_{\param_1}$ is given by simply ``reversing'' the architecture of the encoder. See text for additional details.}
\label{table:architectures}
\begin{center}
\begin{tabular}{lccc}
\multicolumn{1}{c}{\small \bf{PARAMETER} }  & \multicolumn{1}{c}{\small \bf AMBIENT SCORE $\hat{s}_\param^X$} & \multicolumn{1}{c}{\small \bf VAE ENCODER $q^{Z|X}_\paramAlt$} & \multicolumn{1}{c}{\small \bf LATENT SCORE $\hat{s}_{\param_2}^Z$}
\\ \hline \\
\texttt{n\_channels}         & $64$ & $64$ & $256$ \\
\texttt{ch\_mults}         & $(1,2,2,4)$ & $(1,2,2)$ & $(1, 2)$ \\
\texttt{is\_attn}          & $(\texttt{False}, \texttt{False}, \texttt{True}, \texttt{True})$ & - & $(\texttt{True}, \texttt{True})$ \\
\texttt{n\_blocks}          & $2$ & $2$ & $2$ \\
\end{tabular}
\end{center}
\end{table}

\end{document}